\documentclass{article}

\usepackage[margin=1in]{geometry}
\usepackage{amsmath}
\usepackage{amssymb}
\usepackage{amsfonts}
\usepackage{amsthm}
\usepackage{mathrsfs}
\usepackage{mathtools}
\usepackage{bm}
\usepackage{xcolor}
\usepackage{ifthen}
\usepackage{xifthen}
\usepackage{graphicx}
\usepackage{booktabs}
\usepackage{multirow}
\usepackage{tabularx}
\usepackage{subcaption}
\usepackage{enumerate}
\usepackage{algorithm}
\usepackage{algorithmic}
\usepackage{tikz}
\usepackage[sort&compress,numbers]{natbib}
\setcitestyle{numbers,square}
\usepackage[colorlinks,
            linkcolor=red,
            anchorcolor=blue,
            citecolor=magenta
            ]{hyperref}
\usepackage{cleveref}

\graphicspath{{./}{../}}

\newtheorem{theorem}{Theorem}[section]
\newtheorem{assumption}[theorem]{Assumption}

\newtheorem{proposition}[theorem]{Proposition}
\newtheorem{corollary}[theorem]{Corollary}
\newtheorem{lemma}[theorem]{Lemma}
\newtheorem{definition}[theorem]{Definition}

\let\E\relax
\def\method{\textsc{AD-Seq}}

\newcommand{\R}{\mathbb{R}}

\newcommand{\E}{\mathbb{E}}
\renewcommand{\P}{\mathbb{P}}
\newcommand{\cR}{\mathcal{R}}
\newcommand{\cF}{\mathcal{F}}
\newcommand{\cN}{\mathcal{N}}
\newcommand{\cO}{\mathcal{O}}
\newcommand{\cD}{\mathcal{D}}
\newcommand{\cH}{\mathcal{H}}

\newcommand{\cL}{\mathcal{L}}
\newcommand{\cG}{\mathcal{G}}
\newcommand{\cS}{\mathcal{S}}
\newcommand{\bSigma}{{\boldsymbol\Sigma}}
\newcommand{\bGamma}{{\boldsymbol\Gamma}}
\newcommand{\bs}{\mathbf{s}}
\newcommand{\bx}{\mathbf{x}}
\newcommand{\by}{\mathbf{y}}
\newcommand{\bz}{\mathbf{z}}
\newcommand{\bv}{\mathbf{v}}
\newcommand{\bb}{\mathbf{b}}
\newcommand{\bmu}{{\boldsymbol\mu}}
\newcommand{\btheta}{{\boldsymbol\theta}}

\newcommand{\bzero}{\mathbf{0}}

\newcommand{\bX}{\mathbf{X}}
\newcommand{\bA}{\mathbf{A}}
\newcommand{\bI}{\mathbf{I}}
\newcommand{\bM}{\mathbf{M}}
\newcommand{\bS}{\mathbf{S}}

\newcommand{\EE}[2][]{\ifthenelse{\isempty{#1}}{\mathbb{E}\left[#2\right]}{\mathbb{E}_{#1}\left[#2\right]}}

\def \dd{{\rm d}}
\newcommand{\abs}[1]{\left |#1\right |}
\newcommand{\norm}[1]{\left \|#1\right \|}

\newcommand{\curly}[1]{\left\lbrace #1 \right\rbrace}
\newcommand{\bracket}[1]{\left[ #1 \right]}
\newcommand{\parenthesis}[1]{\left( #1 \right)}

\makeatletter
\@ifundefined{if@BLINDREV}{\newif\if@BLINDREV\@BLINDREVfalse}{}
\newif\ifadseqblindreview
\if@BLINDREV
  \adseqblindreviewtrue
\else
  \adseqblindreviewfalse
\fi
\makeatother

\title{Diffusion Models for Adaptive Sequential Data~Generation}
\author{Haoyang Cao\thanks{Department of Applied Mathematics and Statistics, Data Science and AI Institute, and Mathematical Institute for Data Science, Johns Hopkins University. \textbf{Email:} hycao@jhu.edu}
\and
Minshuo Chen\thanks{Department of Industrial Engineering and Management Sciences,
  Northwestern University. \textbf{Email:} minshuo.chen@northwestern.edu}
\and
Yinbin Han\thanks{Department Management Science and Engineering,
  Stanford University. \textbf{Email:} yinbinha@stanford.edu}
\and
Renyuan Xu\thanks{Department Management Science and Engineering,
  Stanford University. \textbf{Email:} renyuanxu@stanford.edu}}
\date{Jun 2026}

\begin{document}
\maketitle
\allowdisplaybreaks

\begin{abstract}
Generating realistic synthetic sequential data is critical in real-world applications across operations research, finance, healthcare, energy systems, and scientific computing, where time-indexed observations are used for prediction, simulation, risk assessment, and data-driven decision-making. While diffusion models have achieved remarkable success in generating {\it static} data, their direct extensions to sequential settings often fail to capture temporal dependence and information structure. Designing diffusion models that can simulate sequential data in an {\it adapted} manner, and hence without anticipation of future information, therefore remains an {\it open} challenge.

In this work, we propose a sequential forward–backward diffusion framework for adapted time series generation. Our approach progressively injects and removes noise along the sequence,  conditioning on the previously generated history to ensure adaptiveness. A novel score-matching objective is introduced for efficient parallel training. We derive rigorous statistical guarantees under a generic framework, then establish score approximation, score estimation, and distribution estimation results with ReLU networks serving as a concrete instance. Empirically, we validate our method on synthetic data, including ARMA models and Gaussian processes, and demonstrate its effectiveness in constructing mean-variance optimal portfolios.  Our code is available at \ifadseqblindreview\url{https://anonymous.4open.science/r/adapted_diffusion_model-0C88}\else\url{https://github.com/yinbinhan/adapted_diffusion_model}\fi.
\end{abstract}

\section{Introduction}

Generating realistic synthetic sequential data is important for many real-world applications across operations research, financial markets, healthcare, energy systems, and scientific computing, where time-indexed observations are used for prediction, simulation, risk assessment, and data-driven decision-making. At the same time, this task remains challenging in complex sequential settings, where the data may exhibit temporal dependence, nonstationarity, nonlinear dynamics, and multiple sources of uncertainty. Beyond matching the high-dimensional joint law of the observed process, a sequential generator must also respect the information flow under which the data are observed: each generated value should depend only on the past and present information available at that stage, and not on future observations. This adapted (i.e., non-anticipative) structure is a basic modeling requirement in domains where the timing of information affects prediction or decision-making.
For example, in stress testing, synthetic data generation requires not only capturing temporal dependence, but also preserving adaptiveness with respect to the prescribed information flow \citep{acciaio2024time,alouadi2025robust,cont2025tail}; in portfolio optimization, trading decisions are updated dynamically based only on currently available signals \citep{zhou2000continuous,tsoukalas2019dynamic}; and in simulation-based policy learning, synthetic data are used to train or evaluate decision rules that themselves operate sequentially \citep{jia2023q,gao2025data,Coletta2021Trm,huang2024mean}.  Under these circumstances, preserving the temporal direction of information is as essential as maintaining marginal and joint laws.

Transformer-based autoregressive models \citep{vaswani2017attention} provide a natural framework for sequential data generation. However, their empirical success has been largely confined to {\it discrete modalities}, most notably next-token generation in natural language, and extending them to continuous settings remains challenging and fragile \citep{esser2021taming, janner2021offline}. Existing approaches often rely on discretization or quantization, which can introduce non-negligible approximation error in continuous spaces \citep{li2022adacat,lee2022autoregressive,li2024autoregressive}. In contrast, diffusion models have emerged as a flexible and powerful paradigm for modeling and sampling from continuous distributions in high-dimensional spaces \citep{song2019generative,song2020score,ho2020denoising}, with strong empirical performance and increasing theoretical support \citep{dhariwal2021diffusion,rombach2022high,betker2023improving,watson2023novo,hoogeboom2022equivariant}. Despite these advances, most diffusion-based methods are designed for static data or settings where the entire sample is treated as a single object. A direct extension to sequential data—jointly noising and denoising the full trajectory—can match the joint distribution, but does not enforce the underlying information flow. In particular, it does not guarantee that each generated value depends only on past information, and thus may violate the causal ordering inherent in sequential systems.

This gap motivates the central question of our paper:
\vspace{3pt}
\begin{center}
\centering
\emph{How can we unlock diffusion models for sequential data generation, while preserving the adaptiveness and retaining theoretical guarantees?}
\end{center}
\vspace{3pt}
We address this question by introducing a sequential forward--backward diffusion framework, Adaptive Diffusion for Sequential Data Generation \mbox{(\method)}, that aligns the diffusion dynamics with the temporal information flow. The key idea is to generate the sample path through conditional reverse diffusion steps, such that each generated coordinate depends only on the previously generated history. This construction preserves the adapted structure of sequential data while retaining the flexibility of diffusion-based generation.

{ 
Beyond full time series generation, this conditional perspective also suggests a broader use of \method{} as a generator of admissible scenarios conditional on available information. Given a realized history, the model can generate multiple future continuations that respect the same information flow, which can then be used in downstream tasks such as statistical inference, predictive control, and risk evaluation. Although we focus on time series in this paper, the same principle can be applied whenever generated data must respect certain causal structure, including panel data, network data, and causal inference.
}

\medskip

\paragraph{Our work and contributions.} Our contributions are threefold.

{\bf (i) A new diffusion model framework for adapted sequential data generation.}
We introduce a novel Adaptive Diffusion for Sequential Data Generation \mbox{(\method)~framework} for generating time series that are adapted to an information flow. Unlike classical diffusion models that operate on the full time series as a single object, our construction generates each coordinate sequentially, conditioning only on the previously generated history. This yields a generative model that preserves the conditional distributions of the data while enforcing adaptiveness by design. At a conceptual level, this provides a principled way to incorporate filtration constraints into diffusion models. On the algorithmic side, we develop a novel score-matching objective that decouples the learning problem across time steps, enabling parallel training of the score functions and ensuring scalability to high-dimensional sequences. Beyond synthetic data generation, this framework opens the door to a broad range of downstream applications, including multi-step prediction, predictive decision-making, and statistical inference under information-flow constraints.

{\bf (ii) Statistical learning theory for the \method~framework.}
We develop a general statistical learning theory for our proposed \mbox{\method{}~framework}, including  score approximation, score estimation, and distribution estimation guarantees. Our analysis is network architecture-agnostic and can incorporate generic approximation and covering number assumptions. 
A key technical challenge is to mitigate the growing length of the history for long-horizon generation. We introduce a history truncation mechanism based on the temporal dependence decay in the sequence. This yields the following finite-sample total-variation bound for the conditional law generated by \method{}:
$$\tilde{\cO}(n^{-\frac{\beta}{4(d_{\rm trunc} + \beta)}}).$$ 
Here, $n$ is the sample size and $\beta$ is the distribution smoothness parameter defined in Assumption~\ref{ass:holder-re}. The rate depends on the effective history length $d_{\rm trunc}$, rather than the full horizon length, showing that long-horizon generation can remain statistically tractable when temporal dependence decays; see for example the exponential and polynomial decays in Propositions \ref{cor:exp-decay}--\ref{cor:poly-decay}. Moreover, despite the additional adaptiveness constraint, this rate matches the best-known rates for static diffusion models.

{\bf (iii) Empirical validation for temporal structure learning and decision-making.}
We implement the proposed framework using transformer architectures with causal masking and evaluate it on both synthetic and real data. The empirical results indicate that the method captures meaningful aspects of the sequential dependence structure, rather than merely reproducing static marginals. On ARMA models, it accurately recovers the autocorrelation structure, including in small-sample regimes, and produces synthetic samples whose estimated autocorrelation functions are more accurate than those obtained directly from the empirical training data. On Gaussian processes, it reconstructs the covariance matrix with the correct decay pattern, showing that the model captures the dependence structure of the underlying process. We further evaluate performance in a downstream decision problem,  namely mean--variance portfolio optimization on the S\&P 500. In this experiment, policies trained using diffusion-generated samples attain the highest Sharpe ratios among several benchmark methods. These findings suggest that respecting temporal dependence and information flow in sequential data generation can lead to measurable improvements in both statistical fidelity and downstream decision quality.

\vspace{5pt}
\paragraph{Related literature.}
Our work is related to several emerging research directions.

First, a variety of generative modeling frameworks have been developed for time series synthesis. Prior to the emergence of diffusion models, most approaches were based on the GAN framework \citep{mirza2014conditional,esteban2017real,fu2019time,yoon2019time,koshiyama2020generative,ni2020conditional,li2020generating,vuletic2023fin,volgan,wiese2020quant,cont2025tail,Lou2024PGg} while others adopted the VAE paradigm \citep{buehler2020generating,Buehler2020Add,desai2021timevae,Wiese2021MAS,cai2023hybrid,liu2022time,huang2024generative,acciaio2024time}. Diffusion models have gained recent attention for  time series generation \citep{rasul2021autoregressivedenoisingdiffusionmodels,lim2023regulartimeseriesgenerationusing,lim2024tsgm,yuan2024diffusionts,naiman2024utilizingimagetransformsdiffusion,yang2024survey, aghapour2025solving}. In parallel, methods based on Schrödinger Bridges have also been explored for generative modeling of time series \citep{wang2021deepgenerativelearningschrodinger,Debortolietal21,hamdouche2023generativemodelingtimeseries,alouadi2025robust}.

Our work is also closely related to the recent advances of diffusion models.  A line of research focused on the convergence theory of diffusion models \citep{chen2022sampling,lee2022convergence,liu2022let,lee2023convergence,chen2023improved,benton2024nearly,chen2023restoration,li2023towards,pedrotti2023improved,cheng2023convergence,huang2024convergence,liang2024non,li2024d,tang2024contractive,li2024unified,ren2024discrete,gao2024convergence,gentiloni2025beyond,hu2024statistical,liang2025low,gottwald2025localized,mei2023deep,zhang2026diffusion,li2024provable,cai2025minimax,wu2024theoretical,li2025dimension,wu2024stochastic}. In parallel, several studies analyzed the statistical properties of score matching, establishing statistical error rates and sample complexity bounds \citep{oko2023diffusion,wibisono2024optimal,zhang2024minimax,han2024neural,dou2024optimal,chen2023score,fu2024unveil,hu2024statistical,zhang2026diffusion}. However, none of the prior work addressed the statistical estimation theory of diffusion models for sequential data, except \cite{fu2024diffusion}, which investigated diffusion transformers restricted to (discrete) Gaussian process data.

Compared with the existing literature, our work is the first to study diffusion models for generic sequential data with rigorous statistical guarantees. More fundamentally, the proposed framework is designed to preserve not only marginal or joint path distributions, but also the temporal dependence and information structure inherent in the underlying data-generating process. In particular, the adaptiveness guarantee established in this work makes diffusion models applicable to settings where respecting the information flow is essential, and thereby extends their scope beyond data generation to tasks such as multi-step prediction, predictive decision-making, and statistical inference under information flow constraints.

 \vspace{5pt}
\paragraph{Notations.}
Let $(\Omega, \cF, \P, (\cF_h)_{h=1}^H)$ be a filtered probability space. Denote $(W_t^h)_{h=1}^H$ and $(\bar{W}_t^h)_{h=1}^H$ by $2H$ independent one-dimensional Brownian motions. For a vector $\bX = (X^1, \dots, X^H) \in \R^H$, we denote $\bX^{[\ell:h]} \coloneqq (X^\ell, \dots, X^h)$ and $\bX^{[\ell:h)} \coloneqq (X^\ell, \dots, X^{h-1})$. Given two random variables $X$ and $Y$, we denote $\E_X[\cdot]$ as the expectation over $X$ and denote $\E_{X\vert Y}[\cdot]$ as the conditional expectation of $X$ given $Y$. Let $\cN(\mu, \sigma^2)$ be the one-dimensional Gaussian distribution with mean $\mu$ and variance $\sigma^2$. We use the notation $X \sim p$ to mean random variable $X$ with probability density $p$.  Let $\sigma(\bX)$ be the $\sigma$-algebra generated by the random vector $\bX$. For two $\sigma$-algebras $\mathcal{A}$ and $\mathcal{B}$, $\mathcal{A} \vee \mathcal{B}$ denotes the smallest $\sigma$-algebra containing $\mathcal{A} \cup \mathcal{B}$. Finally, $\norm{\cdot}$ is the Euclidean norm. We use the notation $X\stackrel{d}{=}Y$ to mean that $X$ and $Y$ have the same distribution. Define the set $[p] = \curly{1, \dots, p}$ for $p \in \mathbb{Z}_+$.

\section{Problem Setup}\label{sec:model}

Our objective is to design a diffusion model that generates adapted time series whose law matches the underlying data distribution%
, with respect to a specified (and meaningful) filtration. For notational convenience, we present the development in the one-dimensional case: let $\bX_0 = (X_0^h)_{h=1}^H \in \mathbb{R}^H$ denote a length-$H$ time series in discrete-time and adapted to the filtration $(\cF_h)_{h=1}^H$, i.e.,  $X_0^h\in \mathcal{F}_h$. The methodology extends straightforwardly to higher dimensions. In addition, denote $\mu_0^{[1:H]}\in \mathcal{P}(\mathbb{R}^H)$ as the distribution of $\bX_0$.

\paragraph{Preliminaries on diffusion models.} In the classical setting \citep{song2019generative,song2020score,ho2020denoising}, diffusion models include a forward process that gradually adds noise to data and a time-reverse process that gradually removes noise to generate new data. Mathematically, the forward process is often chosen to be an $H$-dimensional Ornstein-Uhlenbeck (OU) process defined on $\R^H$:
\begin{equation}\label{eq:diffusion}
\dd \bX'_t = - \frac{1}{2}g(t)\bX'_t\dd t + \sqrt{g(t)}\dd {\bf W}_t', \quad \bX_0' \sim \mu_0^{[1:H]},   
\end{equation}
where $\{{\bf W}'_t\}_{0 \leq t\leq T}$ is a $H$-dimensional Brownian motion and $g: \R \to \R_+$ is a positive-valued function. The time-reversed SDE of \eqref{eq:diffusion} is given by
\begin{equation}
\label{eq:reversed_diff}
    \dd\overline{\bf X}'_t = \Big[\frac{1}{2}g(T-t)\overline{ \bX}'_t + g(T-t)\nabla_x \log p_{T-t} (\overline{\bf X}_t')\Big]\dd t + \sqrt{g(T-t)}\dd \overline{\bf W}'_t,  
\end{equation}
where  $\{\overline{\bf W}'_t\}_{0 \leq t\leq T}$ is another independent $H$-dimensional Brownian motion and $p_t: \R^H \to \R$ represents the marginal density  of $\bX_t'$ at time $t$ in the forward process \eqref{eq:diffusion}.

Such a framework is appropriate for generating static data or fully predictable sequences (e.g., videos), where no filtration-based adaptiveness constraint is imposed. Even when the forward OU process is driven by independent Gaussian noise across coordinates, the corresponding backward dynamics generally fail to preserve the information flow of the data, since the score term $\nabla \log p_{T-t}(\cdot)$ depends on the joint density and thus introduces cross-coordinate dependence. As a result, the value at a given time step may implicitly depend on future components of the sequence. Consequently, adaptiveness with respect to a prescribed filtration is not enforced, and the generated data may violate the intended information flow or temporal structure. This limitation motivates us to develop the \mbox{\method{}~framework}, which adds and removes noise progressively along the time index, conditioning only on previously generated history, and thereby preserves the prescribed information flow by construction.

\paragraph{Adaptive Sequential Diffusion Model.}  For each coordinate/timestamp $h$ $(1\leq h \leq H)$, we  define a pair of forward and time-reversed one-dimensional SDEs, driven by OU processes. In the construction below, we specialize to the constant noise schedule $g(\cdot)\equiv 1$.

For $h = 1$, we consider the following one-dimensional forward process on $\R$:
\begin{align}\label{eq:forward-1}
\dd X_t^1 = -\tfrac{1}{2}X_t^1 \dd t + \dd W_t^1, \;\; X_0^1 \sim \phi^1(0, \cdot),
\end{align} 
where $\phi^1(0, \cdot): \R \to \R$ is the probability density of  $X_0^1$.  To generate new samples, we consider another stochastic process on $\R$ for sampling associated with
\eqref{eq:forward-1}:
\begin{align}\label{eq:backward-1}
\dd \bar{X}_t^1 =  \Big[\tfrac{1}{2}\bar{X}_t^1 + \partial_x\log  \phi^1(T-t, \bar{X}_t^1)\Big]\dd t + \dd \bar{W}_t^1, \;\; \bar{X}_0^1 \sim \phi^1 (T, \cdot),
\end{align}
where $\phi^1(t, \cdot): \R \to \R$ is the probability density of $X_t^1$ and $\partial_x\log  \phi^1(t, \cdot): \R \to \R$ denotes its score function. Under regularity conditions, the forward process \eqref{eq:forward-1} admits a strong solution and thus the score function is well-defined. For each $h > 1$, we construct a forward process on $\R$ {\it iteratively} by conditioning on the~history:
\begin{align}\label{eq:forward-h}    \dd X_t^h = -\frac{1}{2}X_t^h \dd t + \dd W_t^h, \;\; X_0^h \sim \phi_{\bar{\bX}_T^{[1:h)}}^h(0, \cdot),
\end{align}
where for any \(\bz\in\R^{h-1}\) and \(t\in[0,T]\), \(\phi_{\bz}^h(t,\cdot):\R\to\R\) denotes the conditional density function of \(X^h_{t}|\bX^{[1:h)}_0=\bz\). 
Compared to \eqref{eq:forward-1}, the forward process \eqref{eq:forward-h} is initialized from a conditional probability distribution,  incorporating the temporal dependence inherent in the time series. Similar to the sampling process \eqref{eq:backward-1}, we define another stochastic process on $\R$ for sampling:
\begin{align}\label{eq:backward-h}
\textstyle        \dd \bar{X}_t^h =  \Big[\frac{1}{2}\bar{X}_t^h + \partial_x \log \phi_{\bar{\bX}_T^{[1:h)}}^h(T-t, \bar{X}_t^h)\Big]\dd t + \dd \bar{W}_t^h, \;\; \bar{X}_0^h \sim \phi_{\bar{\bX}_T^{[1:h)}}^h(T, \cdot),
\end{align}
with \(\partial_x\log{\phi^h_{\bar{\bX}_{T}^{[1:h)}}}(T-t,\bar X^h_t)=\partial_x\log{\phi^h_{\bz}(T-t,x)}\biggl|_{\bz=\bar{\bX}^{[1:h)}_T,\;x=\bar X^h_t}\).
Here the sampling process \eqref{eq:backward-h} incorporates previously generated trajectories $\bar{\bX}_T^{[1:h)}$, ensuring that the synthesized series $\bar{\bX}_T = (\bar{X}_T^h)_{h=1}^H$ is generated sequentially. 
{ The following proposition verifies the key distributional consistency property of this construction: with the exact conditional score and the corresponding terminal marginal initialization, the sampling processes \eqref{eq:backward-1} and \eqref{eq:backward-h} recover the desired marginal and conditional laws.}

\begin{proposition}[Property of the sampling processes]\label{prop:time reversal}
    Let $(\bar{X}_T^h)_{h=1}^H$ follow \eqref{eq:forward-1}-\eqref{eq:backward-h}. Then the following distributional property holds:
    \begin{align*}
        \bar{X}_T^1 \stackrel{d}{=} X_0^1, \;\; \text{and} \;\; (\bar{X}_T^h\vert \bar{\bX}_T^{[1:h)}=\bx)\stackrel{d}{=} (X_0^h \vert \bX_0^{[1:h)}=\bx),\ \ \forall h>1,\;\bx \in \R^{h-1}.
    \end{align*}
    Consequently, we have $(\bar{X}_T^h)_{h=1}^H \stackrel{d}{=} (X_0^h)_{h=1}^H$.
\end{proposition}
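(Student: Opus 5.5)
The plan is to reduce everything to the classical one-dimensional time-reversal result for OU processes (Anderson / Haussmann--Pardoux), applied once for the unconditional marginal and then conditionally, frozen at each fixed history $\bz$. Then we assemble the joint law by the chain rule for densities.

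\emph{Step 1 ($h=1$).} The forward process \eqref{eq:forward-1} is a linear OU SDE with initial law $\phi^1(0,\cdot)$. Its marginal density $\phi^1(t,\cdot)$ is the Gaussian convolution of $\phi^1(0,\cdot)$, rescaled by $e^{-t/2}$, with variance $1-e^{-t}$. Hence it is smooth and strictly positive for $t>0$, and it solves the Fokker--Planck equation
\[
\partial_t \phi = \tfrac12 \partial_x(x\phi) + \tfrac12 \partial_{xx}\phi .
\]
Set $q(t,x) \coloneqq \phi^1(T-t,x)$. A direct computation, using $\partial_{xx}\phi = \partial_x(\phi\,\partial_x\log\phi)$, shows that $q$ solves the Fokker--Planck equation of \eqref{eq:backward-1}, namely
\[
\partial_t q = -\partial_x\bigl[(\tfrac12 x + \partial_x\log q)\,q\bigr] + \tfrac12\partial_{xx} q ,
\]
with initial condition $q(0,\cdot)=\phi^1(T,\cdot)$. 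By uniqueness of this linear PDE in the class of densities, or directly by the Haussmann--Pardoux theorem under the standing regularity assumption (finite second moment of $X_0^1$), the density of $\bar X_t^1$ equals $\phi^1(T-t,\cdot)$ for $t\in[0,T)$. Letting $t\uparrow T$ gives $\bar X_T^1 \stackrel{d}{=} X_0^1$.

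\emph{Step 2 ($h>1$).} Fix $\bz\in\R^{h-1}$. Conditionally on $\bX_0^{[1:h)}=\bz$, the process $X_t^h$ in \eqref{eq:forward-h} is again a one-dimensional OU process, now with initial density $\phi^h_{\bz}(0,\cdot)$ and marginals $\phi^h_{\bz}(t,\cdot)$. The reason is that $W^h$ is independent of the history, so conditioning only changes the initial law. In \eqref{eq:backward-h}, the history $\bar\bX_T^{[1:h)}$ is measurable with respect to the first $h-1$ sampling stages. These are driven by $\bar W^1,\dots,\bar W^{h-1}$ and the earlier initializations, all independent of $\bar W^h$ and of the fresh initialization of $\bar X_0^h$. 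Therefore, conditioning on $\bar\bX_T^{[1:h)}=\bz$ yields exactly the SDE of Step 1, with $\phi^1$ replaced by $\phi^h_{\bz}$. Applying Step 1 gives
\[
(\bar X_T^h \mid \bar\bX_T^{[1:h)}=\bz) \stackrel{d}{=} (X_0^h\mid \bX_0^{[1:h)}=\bz).
\]

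\emph{Step 3 (joint law).} Induct on $h$. Suppose $\bar\bX_T^{[1:h)} \stackrel{d}{=} \bX_0^{[1:h)}$. Writing the joint law as the law of the history times the conditional kernel, both of which agree, gives $\bar\bX_T^{[1:h]} \stackrel{d}{=} \bX_0^{[1:h]}$. At $h=H$ this is the claim.

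\emph{Main obstacle.} The delicate step is the rigorous justification in Step 2, not the algebra. We need a regular conditional distribution of $X_0^h$ given the history, and measurability of $(\bz,t,x)\mapsto\partial_x\log\phi^h_{\bz}(t,x)$, so that \eqref{eq:backward-h} is well-posed and the frozen-$\bz$ argument is legitimate. We also need the reversal to hold for a.e.\ $\bz$. I would first invoke the existence of regular conditional densities. Then I would note that the Gaussian smoothing makes $\phi^h_{\bz}(t,\cdot)$ smooth and positive for $t>0$. The reversal theorem only needs a finite-entropy or finite-moment condition on $\phi^h_{\bz}(0,\cdot)$, which holds for a.e.\ $\bz$ under the paper's regularity conditions. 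Finally, the singularity at $t=T$ is handled by the limit $t\uparrow T$, using continuity in law of the forward process at time $0$.
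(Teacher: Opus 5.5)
Your proposal is correct and follows essentially the same route as the paper. The paper freezes the history $\bz$, shows that $\psi^h_{\bz}(t,\cdot)=\phi^h_{\bz}(T-t,\cdot)$ satisfies the Fokker--Planck equation of \eqref{eq:backward-h} with drift $\tfrac12 x+\partial_x\log\psi^h_{\bz}$, and cites Anderson and Haussmann--Pardoux for $h=1$. Your additional points fill in steps the paper leaves implicit: independence of $\bar W^h$ from the generated history, uniqueness via the Fokker--Planck equation or the reversal theorem, the limit $t\uparrow T$, and the induction through the chain rule for the joint law.
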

 { This law-matching property justifies referring to the sampling processes \eqref{eq:backward-1} and \eqref{eq:backward-h} as the time-reversed SDEs associated with the conditional forward diffusions.}

\begin{proof}
We only prove the property when $h > 1$ as the case where $h = 1$ follows directly from \cite{anderson1982reverse,haussmann1986time}. 
From the forward process \eqref{eq:forward-h}, we know that $\phi_\bz^h(t, \cdot)$ satisfies the Fokker-Planck equation (in the weak sense):
\begin{align}\label{eq:fk-transition}
    \frac{\partial}{\partial t}\phi_\bz^h(t, x) = \frac{1}{2}\partial_{xx} \phi_\bz^h(t, x) +  \frac{1}{2}\partial_x \Big(\phi_\bz^h(t, x) x \Big),
\end{align}
where we recall that $\phi^h_\bz(0, \cdot)$ is the probability density of $X_0^h$ given the history $\bX_0^{[1:h)} = \bz$.

Denote $\psi_\bz^h(t, x) \coloneqq \phi_\bz^h(T-t, x)$ by the time reversed probability density. Differentiate $\psi_\bz^h(t, x)$ w.r.t. $t$ to have
\begin{align}\label{eq:FK-reversal}
    \frac{\partial}{\partial t}\psi_\bz^h(t, x) & = - \frac{\partial}{\partial s} \phi_\bz^h(s, x)\vert_{s = T - t} \nonumber \\ & = -\frac{1}{2}\partial_{xx} \phi_\bz^h(T-t, x) - \frac{1}{2}\partial_x \Big( \phi_\bz^h(T-t, x) x \Big) \nonumber \\ & = -\frac{1}{2}\partial_{xx} \psi_\bz^h(t, x) - \frac{1}{2}\partial_x \Big( \psi_\bz^h(t, x) x \Big).
\end{align} 
Define the function $\bar{b}^h: [0, T] \times \R \times \R^{h-1} \to \R$ as
\begin{align*}
    \bar{b}^{h}(t, x, \bz) \coloneqq \frac{\partial_x \psi_\bz^h(t, x)}{\psi_\bz^h(t, x)} + \frac{1}{2}x= \partial_x \log \psi_\bz^h(t, x) + \frac{1}{2}x.
\end{align*}
Note that
\begin{align*}
    \partial_{xx} \psi_\bz^h(t, x) + \frac{1}{2}\partial_x\Big( \psi_\bz^h(t, x)x\Big) & = 
    \partial_x \Big(\partial_x\psi_\bz^h(t, x)       + \frac{1}{2}\psi_\bz^h(t, x)x\Big)\\ & = \partial_x \parenthesis{\psi_\bz^h(t, x) \Big(\partial_x \log \psi_\bz^h(t, x) + \frac{1}{2}x\Big)}   \\ & = \partial_x \Big(\psi_\bz^h(t, x)  \bar{b}^{h}(t, x, \bz)\Big)
\end{align*}
As a consequence, it follows from \eqref{eq:FK-reversal} that
\begin{align}\label{eq:backward-FP}
    \frac{\partial}{\partial t}\psi_\bz^h(t, x) = \frac{1}{2}\partial_{xx} \psi_\bz^h(t, x) -   \partial_x \Big(\psi_\bz^h(t, x)  \bar{b}^h(t, x, \bz)\Big).
\end{align}
This is exactly the Fokker-Planck equation of the SDE \eqref{eq:backward-h}.%

\end{proof}
\section{Algorithm design and adaptiveness property}

In practice, the time-reversed SDEs \eqref{eq:backward-1} and \eqref{eq:backward-h} cannot be directly used to generate samples,  as both the score functions $\partial_x \log \phi_{\bar{\bX}_T^{[1:h)}}^h(t,\cdot)$ and the initial distributions $\phi_{\bar{\bX}_T^{[1:h)}}^h(T, \cdot)$ are unknown. To address this issue, we follow common
practice to replace the initial distribution by the standard Gaussian distribution and replace the ground-truth score $\partial_x \log \phi^h_\bz(t, x)$ by a score estimator $\hat{s}^h(t, x, \bz): [0, T] \times \R \times \R^{h-1} \to \R$. By convention, we set $\bz = \varnothing$ when $h = 1$ and thus $\hat{s}^1(t, x, \varnothing) = \hat{s}^1(t, x)$.  With these modifications, we obtain an
approximation of the time-reversed process, which is practically~implementable:\vspace{-5pt}
 \begin{align}\label{eq:backward-approx}
    \dd \tilde{X}_t^h =  \Big[\tilde{X}_t^h/2 + \hat{s}^h(T-t, \tilde{X}_t^h, \tilde{X}_T^{[1:h)})\Big]\dd t + \dd \bar{W}_t^h, \;\; \tilde{X}_0^h \sim \cN(0, 1).\vspace{-5pt}
\end{align}
  Using \eqref{eq:backward-approx}, we propose Algorithm \ref{algo:sample-generation} for sequential data sampling.\footnote{The displayed score input uses the full generated history for notational simplicity. The same design permits replacing $\bx^{[1:h)}$ by a truncated recent-history window $\bx^{[k:h)}$ in $\hat{s}^h$, as in the empirical loss below, without changing the sequential sampling nature.}
For ease of exposition, Algorithm~\ref{algo:sample-generation} is written in continuous time for simplicity while implemented with a time discretization.

\begin{algorithm}[t]
    \caption{Sampling algorithm for sequential data}
    \begin{algorithmic}[1] \label{algo:sample-generation}
    \STATE {\bf {Input:}} Length of time series $H$, early-stop time $t_0$, terminal time $T$, and score networks~$\curly{\hat{s}^h(\cdot)}_{h=1}^H$.
    \STATE Run the backward process up to $T-t_0$ 
    \begin{align*}
        \dd \tilde{X}_t^1 = \left[\tilde{X}^1_t/2 + \hat{s}^1(T-t, \tilde{X}_t^1)\right] \dd t + \dd \bar{W}^1_t, \text{ with } \tilde{X}^1_0\sim \cN(0,1),
    \end{align*}
    and obtain a realization $\tilde{X}^1_{T-t_0} = x^{1}$. 
    \FOR{$h = 2, \dots, H$}
    \STATE  Run the backward process up to $T-t_0$
    \begin{align*}
        \dd \tilde{X}_t^h = \left[\tilde{X}^h_t/2 + \hat{s}^h(T-t, \tilde{X}_t^h, \bx^{[1:h)})\right] \dd t + \dd \bar{W}^h_t, \text{ with } \tilde{X}^h_0\sim \cN(0, 1),
    \end{align*}
    and obtain a realization $\tilde{X}_{T-t_0}^h = x^{h}$.
    \STATE Set $\bx^{[1:h]} = ((\bx^{[1:h)})^\top, x^h)^\top$ as a realization of $\tilde{\bX}^{[1:h]}_{T-t_0}$. 
    \ENDFOR
    \RETURN $\bx^{[1:H]}$ as a sample path of $\tilde{\bX}_{T-t_0}^{[1:H]}$.
    \end{algorithmic}
\end{algorithm}

\paragraph{Score matching.}
To estimate the score function, a natural choice is to minimize the weighted quadratic loss for all coordinates: %
\begin{align}\label{eq:score-matching-naive} 
\min_{s^h\in \cS^h}\;\int_{t_0}^T \frac{1}{T- t_0}\E_{\bar{\bX}^{[1:h)}_T}\E_{X_t^h \vert \bar{\bX}^{[1:h)}_T}\big[\big|s^h(t, X_t^h, \bar{\bX}^{[1:h)}_T) -  \partial_x \log \phi^h_{\bar{\bX}^{[1:h)}_T}(t, X_t^h)\big|^2\big]\dd t,
\end{align}
where $\cS^h$ is a function class (often neural networks) for coordinate $h$. Here, $t_0$ is an early-stopping time to
prevent the blow-up of score functions, which is commonly adopted in practice \citep{song2020score,chen2023score,nichol2021improved}. As $\bar{\bX}_T^{[1:h)}$ has the same distribution as $\bX_0^{[1:h)}$ for each coordinate $h$, one can equivalently minimize the alternative:
\begin{align}\label{eq:score-matching-equiv}
\min_{s^h\in \cS^h}\;\int_{t_0}^T \frac{1}{T- t_0}\E_{\bX_0^{[1:h)}}\E_{X_t^h \vert \bX_0^{[1:h)}}\big[\big|s^h(t, X_t^h, \bX_0^{[1:h)}) -  \partial_x \log \phi^h_{\bX_0^{[1:h)}}(t, X_t^h)\big|^2\big]\dd t.
\end{align}
As shown by \citet{vincent2011connection}, rather than minimizing the integral above, we can minimize an equivalent denoising score matching objective which shares the same minimizer as \eqref{eq:score-matching-equiv}: %
\begin{align}\label{eq:dsm-naive}
\min_{s^h\in \cS^h}\;\int_{t_0}^T \frac{1}{T- t_0}\E_{\bX_0^{[1:h)}}\E_{X_0^h \vert \bX_0^{[1:h)}}\E_{X_t^h \vert X_0^h}\big[\big|s^h(t, X_t^h, \bX_0^{[1:h)}) -  \sigma_t^{-2}(\alpha_t X_0^h - X_t^h)\big|^2\big]\dd t,
\end{align}
with $\alpha_t = e^{-t/2}$ and $\sigma_t^2 = 1 - e^{-t}$. Furthermore, we denote $\bs = (s^1, \dots, s^H)$, where the score $s^h$ takes current state $X_t^h$ and the history $\bX_0^{[1:h)}$ as input. Using this vector notation, we rewrite \eqref{eq:dsm-naive} as
\begin{align}\label{eq:dsm-final}
\min_{\bs\in \cS}\;\int_{t_0}^T \frac{1}{T- t_0}\E_{\bX_0}\E_{\bX_t\vert \bX_0}\Big[\Big\|\bs(t, \bX_t, \bX_0) -  \sigma_t^{-2}(\alpha_t \bX_0 - \bX_t)\Big\|^2\Big]\dd t.
\end{align}
Here, the function class $\cS$ is an aggregation of all function classes at each coordinate $h$, i.e., $\cS =  \cS^1 \times \cS^2 \times \cdots \times\cS^H$. We remark that the objective \eqref{eq:dsm-final} allows for training score functions in parallel in contrast to the sequential sampling in Algorithm \ref{algo:sample-generation}. In fact, \eqref{eq:dsm-final}  can be implemented in a {\it parallel fashion} for each timestamp $h\in[H]$, without sequentially generating prior timestamps during training.

In practice, we approximate \eqref{eq:dsm-final} by its empirical version using available data points. This full-history objective becomes statistically less efficient for later coordinates: as $h$ increases, the input dimension of $s^h$ grows with the conditioning history.
In many financial applications, however, dependence on remote coordinates decays with lag \citep{lo1991long,cont2001empirical}. We therefore condition the score on a recent-history window.
Specifically, given $n$~i.i.d.~sample trajectories $\cD \coloneqq \curly{\bx_i}_{i = 1}^n$ with $ \bx_i\in \R^H$, we sample $X_t^h$ given $X_0^h = x_i^h$ from $\cN(\alpha_t x_i^h, \sigma_t^2)$. We focus on the truncation window  from coordinate $k$ to $h$ and denote the associated loss function by %
\begin{align} \label{eq:loss-lh}
    \ell^{[k:h]}(\bx_i^{[k:h]}; s) = \int_{t_0}^T \frac{1}{T - t_0}\E_{X_t^h\sim \cN(\alpha_t x_i^h, \sigma_t^2)}\big[\big|s(t, X_t^h, \bx_i^{[k:h)}) -  \sigma_t^{-2}(\alpha_t x_i^h - X_t^h)\big|^2\big]\dd t.
\end{align}
Here the truncation window can be properly chosen to balance the estimation efficiency and the loss of historical information; see Section \ref{sec:relu}.
Let $\hat{s}^h$ be a minimizer to the empirical score matching risk $\widehat{\cL}^{[k:h]}(s) := \frac{1}{n}\sum_{i = 1}^n \ell^{[k:h]}(\bx_i^{[k:h]}; s)$.

\paragraph{Adaptiveness property.} 
{ 
We now establish the adaptiveness property of the time series generated by Algorithm~\ref{algo:sample-generation}. This property is {\it not automatic} for vanilla diffusion models. Specifically, when a vanilla diffusion model is applied to the full time series jointly, the reverse-time score \(\nabla \log p_t(x^1,\ldots,x^H)\) generally depends on all time indices, so the update of an earlier coordinate may use information from later coordinates. Such dependence breaks the intended information structure of a sequential system. By contrast, Algorithm~\ref{algo:sample-generation} generates coordinates sequentially and uses only the previously generated history when sampling the next coordinate.
}

\begin{assumption}\label{ass:score-learning}
    For each coordinate $h$, the score estimator $\hat{s}^h$ is trained such that
        $\hat{s}^h\in \mathcal{F}_h\vee \sigma(\tilde{U}^h)$,
    in which  $\tilde{U}^h$ is an independent random variable defined on the same probability space. %
\end{assumption}
Here, $\tilde{U}^h$ can be viewed as exogenous randomness occurring in training ({\it e.g.,} adopting a stochastic optimization algorithm). 
Assume the probability space \((\Omega, \mathcal F,\mathbb P)\) is rich enough to define Brownian motions $\bar W_t^h$, random variables $\tilde{U}^h$, and standard Gaussian random variables $\tilde{X}^h_0\sim \cN(0, 1)$, \(h\in\{1,\dots,H\}\). The next proposition constructs a filtration such that the generated sequential data is~adapted.

\begin{proposition}[Adaptiveness] \label{prop:scalar-learned-model}
Suppose Assumption \ref{ass:score-learning} holds. In addition, assume $\bar W_t^h$, $\tilde{U}^h$, and $\tilde{X}^h_0\sim \cN(0, 1)$, \(h\in\{1,\dots,H\}\) are independent.\footnote{This independence assumption simplifies the statement; our construction accommodates dependence across coordinates.}  Then the output $ (\tilde{X}_T^h)_{h=1}^H$ from Algorithm \ref{algo:sample-generation} is adapted to the enlarged filtration $\tilde{\mathcal{H}}\coloneqq(\tilde{\mathcal{H}}_h)_{h=1}^H$ defined as, for $1\leq h \leq H$
    \begin{eqnarray*}
       \tilde{\mathcal{H}}_1 &=& \sigma(\tilde{X}^1_0) \vee \sigma(\{\bar{W}_t^1\}_{t=0}^T)\vee \sigma(\tilde{U}^1)\vee \mathcal{F}_1,\\ \tilde{\mathcal{H}}_{h} &=& \tilde{\mathcal{H}}_{h-1} \vee \sigma(\tilde{X}^h_0) \vee   \sigma(\{\bar{W}_t^h\}_{t=0}^T)  \vee \sigma(\tilde{U}^h)\vee \mathcal{F}_h.
    \end{eqnarray*}
\end{proposition}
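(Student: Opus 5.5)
We argue by induction on $h$, showing that the generated coordinate $\tilde{X}^h_{T}$ (or $\tilde X^h_{T-t_0}$ in the early-stopped version of Algorithm~\ref{algo:sample-generation}; the argument is identical) is $\tilde{\mathcal{H}}_h$-measurable. Since $\tilde{\mathcal{H}}_{h-1}\subseteq\tilde{\mathcal{H}}_h$ by construction, this yields adaptedness of the whole output. The guiding observation is that the $h$-th backward SDE \eqref{eq:backward-approx} is driven only by four inputs: the initial value $\tilde X^h_0$, the Brownian path $\{\bar W^h_t\}_{t\in[0,T]}$, the learned score $\hat s^h$, and the previously generated history $\tilde{\bX}^{[1:h)}$. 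Each of these is measurable with respect to $\tilde{\mathcal{H}}_h$.

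\emph{Base case $h=1$.} We regard the score estimator as a random function $\hat s^1(\omega; t,x)$, jointly measurable in $(\omega,t,x)$. By Assumption~\ref{ass:score-learning}, $\hat s^1$ is $\mathcal{F}_1\vee\sigma(\tilde U^1)$-measurable as a random element of a suitable function space. Under the standing regularity (e.g.\ Lipschitz continuity of $\hat s^1$ in $x$, as holds for ReLU networks), the SDE admits a unique strong solution. By the Yamada--Watanabe representation, it can then be written as
\[
\tilde X^1_t=\Phi^1_t\bigl(\tilde X^1_0,\{\bar W^1_s\}_{s\le t},\hat s^1\bigr)
\]
for a jointly measurable map $\Phi^1_t$. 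Alternatively, one may use the Euler scheme actually implemented, where the solution is a composition of measurable operations on these inputs and measurability is elementary. Either way, $\tilde X^1_T$ is $\sigma(\tilde X^1_0)\vee\sigma(\bar W^1)\vee\sigma(\tilde U^1)\vee\mathcal{F}_1=\tilde{\mathcal H}_1$-measurable.

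\emph{Inductive step.} Suppose $\tilde{\bX}^{[1:h)}$ is $\tilde{\mathcal{H}}_{h-1}$-measurable. We substitute this random vector into the score, considering $b(\omega;t,x)=x/2+\hat s^h(\omega;T-t,x,\tilde{\bX}^{[1:h)}(\omega))$. Since $\hat s^h$ is jointly measurable and $\mathcal{F}_h\vee\sigma(\tilde U^h)$-measurable, composing it with the $\tilde{\mathcal H}_{h-1}$-measurable history produces a random drift measurable with respect to $\tilde{\mathcal H}_{h-1}\vee\mathcal F_h\vee\sigma(\tilde U^h)$. The same strong-solution representation then gives
\[
\tilde X^h_T=\Phi^h\bigl(\tilde X^h_0,\bar W^h,\hat s^h,\tilde{\bX}^{[1:h)}\bigr),
\]
which is $\tilde{\mathcal H}_h$-measurable. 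We use the independence hypotheses in one place only: to justify that $\bar W^h$ remains a Brownian motion with respect to the filtration enlarged initially by $\tilde{\mathcal H}_{h-1}\vee\mathcal F_h\vee\sigma(\tilde U^h)\vee\sigma(\tilde X_0^h)$. This makes the random-drift SDE well posed as an SDE with random (time-$0$-measurable) coefficients, with a pathwise unique solution.

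The main obstacle is this measurability-of-the-solution step when the drift has random, history-dependent coefficients. Here $\mathcal F_h$ need not be independent of $\bar W^h$ unless we check it, since $\mathcal F_h$ is the data filtration. I would first handle this via the discretized Euler scheme, for which everything is an explicit measurable composition. I would then pass to the continuous-time statement by conditioning on the initial $\sigma$-algebra and invoking pathwise uniqueness under a Lipschitz assumption on $\hat s^h$. For the latter I would assume the training data, hence $\mathcal F_H$, is independent of the sampling noise $(\bar W^h,\tilde X^h_0)$, which is implicit in the setup.
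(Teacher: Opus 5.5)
Your induction is correct and is essentially the paper's own justification made rigorous: the paper gives no separate proof, only the remark that $\tilde{\mathcal H}_h$ is by construction generated by every random object entering the $h$-th sampling step ($\mathcal F_h$, $\tilde U^h$, $\tilde X^h_0$, $\bar W^h$, and the earlier history). One simplification: the independence of the training data from the sampling noise, which you propose to add for the continuous-time case, is unnecessary, because the noise in \eqref{eq:backward-approx} is additive, so $Y_t=\tilde X^h_t-\bar W^h_t$ solves the random ODE $\dot Y_t=\tfrac12(Y_t+\bar W^h_t)+\hat s^h(T-t,Y_t+\bar W^h_t,\tilde{\bX}^{[1:h)})$ $\omega$ by $\omega$, and Picard iteration (under Lipschitz $\hat s^h$) then exhibits $\tilde X^h$ as a measurable functional of $(\tilde X^h_0,\bar W^h,\hat s^h,\tilde{\bX}^{[1:h)})$ with no Yamada--Watanabe argument and no filtration or independence condition.
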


To interpret Proposition \ref{prop:scalar-learned-model}, recall that the enlarged filtration is obtained by adjoining to the data filtration the auxiliary randomness used in training and sampling. Concretely, it is the filtration generated by
(i) the data filtration $(\cF_h)_{h\ge 0}$,
(ii) the training randomization $\sigma(\tilde U^h)$,
(iii) the (random) initialization $\sigma(\tilde X_0^h)$, and
(iv) the driving Brownian motion $\sigma(\bar W^h_t:0\le t\le T)$.
Equivalently, $\tilde{\cH}$ is the smallest $\sigma$-field (and the associated smallest filtration, upon taking the natural time-indexed version) with respect to which all random objects appearing in the training procedure and in the sampling dynamics are measurable.

\section{Distribution Estimation Theory} \label{sec:main results}
In this section, we present our main distribution estimation results for \method{}. To ease the presentation, we consider utilizing the continuous-time
backward processes \eqref{eq:backward-1} and \eqref{eq:backward-h} for distribution estimation. In practice, a proper discretization is applied
to generate samples, whose deviation from the continuous-time backward process can be controlled
by the step size of the discretization; see \citet[Theorem 2]{chen2022sampling}, which requires a Lipschitz score, and \citet{benton2024nearly}, which removes this assumption via stochastic localization.

The key step is decomposing joint distribution learning to a series of conditional distribution learning given the  sequential conditional nature of our framework. For learning one-step conditional distributions, we leverage recent advances in conditional diffusion models \citep{fu2024unveil,hu2024statistical,tang2024conditional}. Recall the ground-truth joint distribution is $\mu_0^{[1:H]}$ and we denote the learned joint distribution of $\tilde{\bX}_{T-t_0}^{[1:H]}$ in Algorithm \ref{algo:sample-generation} by $\hat{\mu}_{t_0}^{[1:H]}$. In addition,  let $\mu^h_0[\bz]$ be the ground-truth distribution of $X^h_0 \vert \bX_0^{[1:h)} = \bz$ and $\hat{\mu}_{t_0}^h[\bz]$ be the learned distribution of $\tilde{X}_{T-t_0}^h \vert \tilde{\bX}_{T-t_0}^{[1:h)} = \bz$.  The chain rule of TV distance implies that
\begin{align}
    {\rm TV}(\mu_0^{[1:H]}, \hat{\mu}_{t_0}^{[1:H]}) \leq \sum_{h = 1}^H  \E_{\bX_0^{[1:h)} \sim \mu_0^{[1:h)}}\bracket{{\rm TV}\parenthesis{\mu_0^{h}[\bX_0^{[1:h)}], \hat{\mu}_{t_0}^{h}[\bX_0^{[1:h)}]}}. \label{eq:tv-chain}
\end{align}
Inspired by \eqref{eq:tv-chain}, it suffices to bound the one-step conditional distribution for each $h \in [H]$ in TV distance. The truncated nature of loss function $\ell^{[k:h]}(\cdot)$ implies $\hat{\mu}^h_{t_0}[\bz^{[1:h)}] = \hat{\mu}^h_{t_0}[\bz^{[k:h)}]$ for any $\bz^{[1:h)} \in \R^{h-1} $. %
Moreover, triangle inequality implies
\begin{align}
    \E_{\bX_0^{[1:h)}}\bracket{{\rm TV}(\mu_0^{h}[\bX_0^{[1:h)}], \hat{\mu}_{t_0}^{h}[\bX_0^{[1:h)}])} & \leq \underbrace{\E_{\bX_0^{[k:h)}}\bracket{{\rm TV}(\mu_0^{h}[\bX_0^{[k:h)}], \hat{\mu}_{t_0}^{h}[\bX_0^{[k:h)}])}}_{\text{learning error}} \nonumber \\ & \qquad + \underbrace{\E_{\bX_0^{[1:h)}}\bracket{{\rm TV}(\mu_0^{h}[\bX_0^{[1:h)}], \mu_0^{h}[\bX_0^{[k:h)}])}}_{\text{truncation error}}. \label{eq:tv-triangle}
\end{align}
Here, we overload the notation $\mu^h_0[\bz^{[k:h)}]$ as the distribution of $X_0^h \vert \bX_0^{[k:h)} = \bz^{[k:h)}$ for any $\bz \in \R^{h - k}$. 
We make the following assumption on the truncation error. 

\begin{assumption}\label{assumption:dependence_decay}
Denote $d_{\rm trunc} = h-k+1$ as the truncated sequence length. Fix $h \in [H]$ and consider $k < h$, it holds that 
\begin{align*}
\E_{\bX_0^{[1:h)}}\bracket{{\rm TV}(\mu_0^{h}[\bX_0^{[1:h)}], \mu_0^{h}[\bX_0^{[k:h)}])} \leq u(d_{\rm trunc}),
\end{align*}
where $u(\cdot)$ only depends on  $d_{\rm trunc}$ and is monotonically decreasing.
\end{assumption}
This assumption imposes a quantitative decay-of-dependence property in expectation: the expected total variation distance between the conditional law given the full history and that given only the most recent $d_{\rm trunc}$ observations is bounded by $u(d_{\rm trunc})$. As $u$ decreases in $d_{\rm trunc}$, the dependence on the remote past vanishes progressively. Such an assumption is justified in many financial applications \citep{lo1991long,cont2001empirical}. %
In particular, our numerical experiments in Section \ref{sec:mean-var} show that log returns of the S\&P 500 index are close to Markovian \citep{martin2021autocorrelation, cont2001empirical}.
Thus, conditioning on a short history is nearly as informative as conditioning on the full past. The decay rate $u(\cdot)$ plays an important role in our theoretical analysis.  We will explicitly discuss how different decay regimes influence the choice of $k$ in Section \ref{sec:relu}.

It remains to bound the learning-error term in \eqref{eq:tv-triangle}. 
{  To proceed, we decompose it informally into three sources: the bias from early stopping, the mismatch in the initial distribution, and the error from score estimation. The first source comes from stopping the time-reverse SDE at time $T-t_0$ rather than evolving it all the way to time $T$. The second arises because the sampling process is initialized from the standard normal distribution instead of the exact forward marginal at time $T$. The third is due to replacing the true conditional score along the sampling trajectory by its learned approximation $\hat{s}^h$. This decomposition is discussed in more detail in Section~\ref{sec:dist-est}.

We then reduce the score-estimation contribution to a statistical learning problem, consisting of an approximation component and a finite-sample statistical error component. Section~\ref{sec:score-approx} controls the former by showing that the neural network class contains a candidate whose discrepancy from the target conditional score is bounded at a prescribed rate. Section~\ref{sec:score-est} controls the latter by proving that the empirical score-matching minimizer attains small population score-matching risk. Combining these two estimates yields a bound on the score-estimation contribution, and hence on the learning error and the overall distribution estimation error.}

\subsection{Score Approximation}\label{sec:score-approx}
We devote this subsection to conditional score approximation theory.  For ease of exposition, we overload $\phi^h_\bz(0, \cdot)$ as the conditional density of $X_0^h$ given $\bX_0^{[k:h)} = \bz$ when the conditional information is clear from the context. %
We make the following assumption on each conditional distribution~$X_0^h \vert \bX_0^{[k:h)}$.

\begin{assumption}[Conditional sub-Gaussianity]\label{ass:truncated-subGaussian}
    There are constants $C_1^1,C_2^1>0$ such that the marginal density $\phi^1(0,\cdot)$ satisfies
    \begin{align*}
        \phi^1(0,x) \leq C_1^1\exp\parenthesis{-C_2^1\abs{x}^2/2}
    \end{align*}
    for all $x\in\R$. For each $h \in [H]$ and $k\in [h-1]$, there exist two positive constants $C_1^h, C_2^h > 0$ and a vector $\bv^h \in \R^{h - k}$ such that
    \begin{align*}
        \phi_{\bz}^h(0,x) \leq C_1^h\exp\parenthesis{-C_2^h \abs{x - (\bv^h)^\top \bz}^2/2}
    \end{align*}
    for all $x \in \R$ and $\bz \in \R^{h - k}$.
\end{assumption}
Throughout the analysis, we omit the dependence of $C_1, C_2$, and $\bv$ on the indices $k$ and $h$ for ease of exposition. Assumption \ref{ass:truncated-subGaussian} imposes sub-Gaussian condition on the marginal probability density $\phi^1(0, \cdot)$ and all conditional probability densities.  Proposition \ref{lemma:gaussian-truncated-subG} in Appendix \ref{sec:proofs} verifies that every nondegenerate multivariate Gaussian distribution satisfies Assumption \ref{ass:truncated-subGaussian}. Notably, Assumption \ref{ass:truncated-subGaussian}  does not impose boundedness on the conditioning variable $\bz$, making it well suited for financial settings in which auxiliary time series or unbounded factors often serve as conditioning inputs \citep{mandelbrot1963variation, gopikrishnan2000statistical,cont2001empirical}.
{ This unboundedness of the conditioning variable places our setting beyond the framework of \cite{fu2024unveil}. We address this new challenge by leveraging an additional truncation step; see Theorem \ref{thm:score approx}.}
As a consequence of Assumption \ref{ass:truncated-subGaussian}, the following lemma shows that all joint distributions of interest are also~sub-Gaussian.

\begin{lemma}
    Under Assumption \ref{ass:truncated-subGaussian}, the probability density $p^{[1:h]}(\cdot)$ of $\bX_0^{[1:h]}$ is sub-Gaussian for each $h \in [H]$, i.e., there are positive constants $C_3, C_4 > 0$ such that $$p^{[1:h]}(\bx) \leq C_3\exp\parenthesis{-C_4\norm{\bx}^2/2}.$$
\end{lemma}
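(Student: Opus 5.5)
The plan is to factor the joint density by the chain rule into the marginal of the first coordinate and the successive one-step conditional densities. We then bound each factor with Assumption~\ref{ass:truncated-subGaussian} and show that the resulting quadratic form in the exponent controls $\norm{\bx}^2$. The key choice is to apply Assumption~\ref{ass:truncated-subGaussian} with $k=1$ for every $j\in\{2,\dots,h\}$. This is allowed since $k\in[j-1]$ includes $k=1$, and then $\phi^j_{\bz}(0,\cdot)$ is the conditional density of $X_0^j$ given the full history $\bX_0^{[1:j)}=\bz$, which is exactly the factor the chain rule produces. So for $\bx=(x^1,\dots,x^h)\in\R^h$ we write
\begin{align*}
p^{[1:h]}(\bx) = \phi^1(0,x^1)\prod_{j=2}^h \phi^j_{\bx^{[1:j)}}(0,x^j) \le \Big(\prod_{j=1}^h C_1^j\Big)\exp\Big(-\tfrac{1}{2}\sum_{j=1}^h C_2^j \abs{x^j-(\bv^j)^\top \bx^{[1:j)}}^2\Big),
\end{align*}
with the convention $\bv^1=\varnothing$, so the $j=1$ term is $C_2^1\abs{x^1}^2$.

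Next we rewrite the exponent in matrix form. Set $c\coloneqq\min_{j\le h}C_2^j>0$. Let $\bM\in\R^{h\times h}$ be the lower-triangular matrix with ones on the diagonal and $-(\bv^j)^\top$ in the strictly-lower part of row $j$. Then the $j$-th entry of $\bM\bx$ is $x^j-(\bv^j)^\top\bx^{[1:j)}$, so the sum in the exponent is at least $c\norm{\bM\bx}^2$.

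Since $\bM$ is unit lower-triangular, $\det\bM=1$ and $\bM$ is invertible. Hence $\norm{\bM\bx}\ge \sigma_{\min}(\bM)\norm{\bx}$ with $\sigma_{\min}(\bM)=\norm{\bM^{-1}}^{-1}>0$. This gives the claim with
\[
C_3=\prod_{j=1}^h C_1^j, \qquad C_4=c\,\sigma_{\min}(\bM)^2 .
\]
Both constants depend only on $h$ and the constants of Assumption~\ref{ass:truncated-subGaussian}.

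The only delicate point is that the conditional mean shifts $(\bv^j)^\top\bx^{[1:j)}$ are unbounded in the history. Because of this, a factor-by-factor bound cannot be turned directly into a bound in $\norm{\bx}$. The triangular change of variables $\bx\mapsto\bM\bx$ resolves this, since invertibility of a unit lower-triangular matrix is automatic.

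A remaining technical remark is that the chain-rule factorization requires the conditional densities to be versions of the regular conditional laws. This holds almost everywhere, which suffices for a density bound (we may modify $p^{[1:h]}$ on a null set). If one wants a cleaner induction, the same argument can be run by induction on $h$: write $p^{[1:h]}(\bx)=p^{[1:h)}(\bx^{[1:h)})\,\phi^h_{\bx^{[1:h)}}(0,x^h)$ and absorb one row of $\bM$ at a time.
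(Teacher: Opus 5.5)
Your proof is correct and is essentially the paper's argument, unrolled. The paper instead inducts on $h$: it uses the factorization $p^{[1:h]}(\bx)=p^{[1:h-1]}(\bx^{[1:h)})\,\phi^h_{\bx^{[1:h)}}(0,x^h)$, also taking $k=1$ in Assumption~\ref{ass:truncated-subGaussian}, and bounds $\norm{\bx^{[1:h)}}^2+\abs{x^h-\bv^\top\bx^{[1:h)}}^2=\bx^\top\bM\bx$ from below by $\lambda_{\min}(\bM)\norm{\bx}^2$, where $\bM$ is the paper's $2\times 2$ block matrix. That $\bM$ equals $L^\top L$ for the unit lower-triangular $L$ with last row $(-\bv^\top,1)$, so your invertibility-of-a-unit-triangular-matrix step replaces the paper's Schur-complement (Woodbury) check that $\bM$ is positive definite.
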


Similar to constants $C_1$ and $C_2$, the constants $C_3$ and $C_4$ are also $h$-dependent and we omit the dependence when it is clear from the context.

\begin{proof}
    We prove the lemma by induction on $h$. When $h = 1$, the claim is satisfied with $C_3 = C_1$ and $C_4 = C_2$. Assume the inductive hypothesis holds for $p^{[1:h-1]}(\cdot)$.
    Set $C_4' = \min\curly{C_4, C_2}$. For every $\bx \in \R^h$, it holds that
    \begin{align}
        p^{[1:h]}(\bx)
        &= p^{[1:h-1]}(\bx^{[1:h)})\phi^h_{\bx^{[1:h)}}(0, x^h) \nonumber \\
        &\leq C_3C_1 \exp\parenthesis{-\frac{C_4}{2}\norm{\bx^{[1:h)}}^2}\exp\parenthesis{-\frac{C_2}{2}\abs{x^h - \bv^\top \bx^{[1:h)}}^2} \nonumber \\
        &\leq C_3C_1 \exp\parenthesis{-\frac{C_4'}{2}\parenthesis{\norm{\bx^{[1:h)}}^2 + \abs{x^h - \bv^\top \bx^{[1:h)}}^2}}. \label{eq:joint-bound-1}
    \end{align}
    Define $\bM \in \R^{h \times h}$ as
    \begin{align*}
        \bM = \begin{pmatrix}
            \bv\bv^\top + \bI_{h-1} & -\bv\\ -\bv^\top & 1 
        \end{pmatrix}.
    \end{align*}
    We further bound \eqref{eq:joint-bound-1} with
    \begin{align*}
        p^{[1:h]}(\bx) \leq C_3C_1 \exp\parenthesis{-\frac{C_4'}{2} \bx^\top \bM \bx}.
    \end{align*}
    Here, the matrix $\bM$ is positive definite as long as the Schur complement of the top-left block $\bS \coloneqq  \bv\bv^\top + \bI_{h-1}$ is positive definite. To see this, the Woodbury identity implies that
    \begin{align*}
        \bM/\bS \coloneqq 1 - \bv^\top\parenthesis{\bv \bv^\top + \bI_{h-1}}^{-1}\bv = 1 - \bv^\top\parenthesis{\bI_{h-1} - \frac{\bv\bv^\top}{1 + \norm{\bv}^2}}\bv = \frac{1}{1 + \norm{\bv}^2}>0.
    \end{align*}
    Combining \eqref{eq:joint-bound-1} with $\bx^\top\bM\bx \geq \lambda_{\min}(\bM)\norm{\bx}^2$ yields
    \begin{align*}
        p^{[1:h]}(\bx) \leq C_3C_1\exp\parenthesis{-\frac{C_4'\lambda_{\min}(\bM)}{2}\norm{\bx}^2},
    \end{align*}
    which completes the proof. 
\end{proof}

By applying the marginalization technique, the sub-Gaussian property naturally holds for all joint distributions over any interval $[k,h]$. %

\begin{corollary}\label{coro:joint-subG}
    Under Assumption \ref{ass:truncated-subGaussian}, the probability density $p^{[k:h]}(\cdot)$ of $
    \bX_0^{[k:h]}$ is sub-Gaussian, i.e., there are constants $C_5, C_6 > 0$ such that $$p^{[k:h]}(\bx) \leq C_5\exp\parenthesis{-C_6\norm{\bx}^2/2}.$$
    In particular, probability density $p^h(\cdot)$ of $X_0^h$ is sub-Gaussian.
\end{corollary}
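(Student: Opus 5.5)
The plan is to obtain the bound for $p^{[k:h]}$ by integrating out the first $k-1$ coordinates of the joint density $p^{[1:h]}$, whose sub-Gaussianity is given by the preceding lemma. That lemma supplies constants $C_3,C_4>0$ such that $p^{[1:h]}(\bx)\le C_3\exp(-C_4\norm{\bx}^2/2)$ for all $\bx\in\R^h$.

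For $k=1$ there is nothing to prove: take $C_5=C_3$ and $C_6=C_4$. For $k>1$, write $\bx=(\by,\bz)$ with $\by\in\R^{k-1}$ the discarded coordinates and $\bz\in\R^{h-k+1}$ the retained ones. By Tonelli's theorem,
\begin{align*}
p^{[k:h]}(\bz)=\int_{\R^{k-1}}p^{[1:h]}(\by,\bz)\,\dd\by .
\end{align*}
The Euclidean norm splits additively, $\norm{\bx}^2=\norm{\by}^2+\norm{\bz}^2$, so the pointwise bound factorizes. This gives
\begin{align*}
p^{[k:h]}(\bz)\le C_3\exp\parenthesis{-\frac{C_4}{2}\norm{\bz}^2}\int_{\R^{k-1}}\exp\parenthesis{-\frac{C_4}{2}\norm{\by}^2}\dd\by = C_3\parenthesis{\frac{2\pi}{C_4}}^{(k-1)/2}\exp\parenthesis{-\frac{C_4}{2}\norm{\bz}^2}.
\end{align*}
Hence we may take $C_5=C_3(2\pi/C_4)^{(k-1)/2}$ and $C_6=C_4$. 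The particular statement about $p^h$ is the case $k=h$, where the window reduces to the single coordinate $X_0^h$.

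There is no real obstacle here. The one point that needs a word of justification is that $p^{[k:h]}$ is indeed the marginal density of $\bX_0^{[k:h]}$. This holds because $p^{[1:h]}$ is a genuine joint density: it is built in the lemma as the product of the first marginal and the successive conditional densities. Standard marginalization of densities then applies, with Tonelli justified by nonnegativity of the integrand. The resulting constants depend on $h$ and $k$, which is consistent with the paper's convention of suppressing these indices.
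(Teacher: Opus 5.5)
Your proposal is correct and follows the paper's route. The paper also obtains the corollary by marginalizing the sub-Gaussian bound on $p^{[1:h]}$ from the preceding lemma over the first $k-1$ coordinates. You additionally make the constants explicit, $C_5 = C_3(2\pi/C_4)^{(k-1)/2}$ and $C_6 = C_4$, where the paper leaves them implicit.
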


In addition to the light-tail property of each conditional distribution, we need the following assumption on the smoothness. Denote $\cH^\beta(\Omega, B)$ by the set of all $\beta$-Hölder smooth functions over the domain $\Omega$ with Hölder norm bounded by $B$; see \cite[Definition 2.1]{fu2024unveil}. %

\begin{assumption}[H\"{o}lder smoothness]\label{ass:holder-re}
    The marginal density at $h = 1$ and all conditional densities are Hölder smooth in the sense that $\phi^1(0, \cdot) \in\cH^\beta(\R, B)$ and $\phi_{\bz}^h(0, \cdot)\in\cH^\beta(\R^{h-k+1}, B)$ for all $1 \leq k < h \leq H$ and $\bz \in \R^{h-k}$. 
\end{assumption}

We are ready to establish score approximation theory. To this end, we rewrite the score function as $\partial_x \log \phi_\bz^h(t, x) = \frac{\partial_x \phi_\bz^h(t, x)}{\phi_\bz^h(t, x)} $. Following the ideas in \cite{fu2024unveil}, we first approximate  $\partial_x \phi_\bz^h(t, x)$ and $\phi_\bz^h(t, x)$ with diffused local polynomials. { The next result provides polynomial approximations for both the denominator $\phi_\bz^h(t,x)$ and the numerator $\sigma_t\partial_x\phi_\bz^h(t,x)$. Later, we combine these approximations to form a clipped ratio $f_3$ in \eqref{eq:f_3}, which is then approximated by a neural network.}

\begin{lemma}[Diffused local polynomial approximation] \label{lemma:poly-density}
Suppose Assumptions \ref{ass:truncated-subGaussian} and \ref{ass:holder-re} hold. For a sufficiently large integer $N>0$, there exist two polynomials $f_1(x, \bz, t)$ and $f_2(x, \bz, t)$, and a positive constant $R_1  \asymp \sqrt{\log N} $ such that
\begin{align}
    & \abs{f_1(x, \bz, t) - \phi^h_{\bz}(t, x) } \lesssim BN^{-\beta}\log^{\frac{\beta + 1}{2}}N, \label{eq:poly-1} \\ 
    & \abs{f_2(x, \bz, t) - \sigma_t\partial_x \phi_{\bz}^h(t, x)} \lesssim BN^{-\beta}\log^{\frac{\beta + 2}{2}}N, \label{eq:poly-2}
\end{align}
for any $x \in \R, \bz \in [-R_1, R_1]^{h-k}$ and $ t>0$.
\end{lemma}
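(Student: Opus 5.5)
We follow the diffused local polynomial strategy of \cite{fu2024unveil}, adapted so that the conditioning variable $\bz$ only ranges over the box $[-R_1,R_1]^{h-k}$ while $x$ stays unrestricted.

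\textbf{Step 1: integral representation.} Since the forward process \eqref{eq:forward-h} is an OU process started from $\phi_\bz^h(0,\cdot)$, we have
\begin{align*}
\phi^h_\bz(t,x) = \int_{\R} \phi^h_\bz(0,y)\,\frac{1}{\sigma_t\sqrt{2\pi}}\exp\Big(-\frac{(x-\alpha_t y)^2}{2\sigma_t^2}\Big)\dd y,
\end{align*}
and a corresponding formula for $\sigma_t\partial_x\phi^h_\bz(t,x)$, whose kernel is multiplied by $-(x-\alpha_t y)/\sigma_t$. The plan is to replace $\phi^h_\bz(0,y)$ by a local Taylor polynomial in $(y,\bz)$ and to replace the Gaussian kernel by a truncated Taylor expansion, which together yield the polynomials $f_1$ and $f_2$.

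\textbf{Step 2: truncation of the $y$-integral.} By Assumption~\ref{ass:truncated-subGaussian}, $\phi^h_\bz(0,y)\le C_1\exp(-C_2|y-\bv^\top\bz|^2/2)$. For $\bz\in[-R_1,R_1]^{h-k}$ we have $|\bv^\top\bz|\lesssim R_1$, so we restrict $y$ to $[-CR_1,CR_1]$ with $R_1\asymp\sqrt{\log N}$. This choice makes the discarded tail $\lesssim N^{-\beta}$. This is precisely where boundedness of $\bz$ is used: the sub-Gaussian center shifts with $\bz$, and the box keeps that shift of order $\sqrt{\log N}$.

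\textbf{Step 3: local polynomial approximation.} We partition $[-CR_1,CR_1]\times[-R_1,R_1]^{h-k}$ into cubes of side $1/N$. On each cube we take the degree-$\lfloor\beta\rfloor$ Taylor polynomial of $\phi^h_\bz(0,y)$ in the joint variable $(y,\bz)$, which is valid by Assumption~\ref{ass:holder-re}, and glue the pieces together with indicator or partition-of-unity functions. The pointwise error is $\lesssim BN^{-\beta}$. Integrating against the Gaussian kernel, which has unit mass, keeps this error of order $BN^{-\beta}$ uniformly in $t$ and $x$. For the $f_2$ kernel, the weight $|x-\alpha_t y|/\sigma_t$ has Gaussian moments that are bounded, up to the extra $\sqrt{\log N}$ from truncation, giving the additional half-power of $\log N$ in \eqref{eq:poly-2}.

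\textbf{Step 4: expansion of the Gaussian kernel.} For each monomial $y^j$ on a cell, we expand $\exp(-(x-\alpha_t y)^2/(2\sigma_t^2))$ in a Taylor series truncated at degree $\asymp\log N$. This is legitimate on the region where $|x-\alpha_t y|/\sigma_t\lesssim\sqrt{\log N}$; outside it the kernel itself is $\le N^{-\beta}$ and is simply dropped. The result is polynomial in $x$, $\bz$, and suitable functions of $t$ (namely $\alpha_t,\sigma_t^{-1}$), which is how we interpret ``polynomial'' here, following \cite{fu2024unveil}. Summing the errors then gives \eqref{eq:poly-1} with the factor $\log^{(\beta+1)/2}N$. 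That factor arises from the $R_1^{\beta+1}$-type volume and degree factors.

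\textbf{Main obstacle.} The hardest step is achieving uniformity over all $t>0$ and all $x\in\R$. For small $t$ the kernel is very sharp, so its Taylor expansion needs many terms. We plan to handle this by splitting at $\sigma_t\approx N^{-1}$. For small $\sigma_t$, we directly use that the diffused density inherits the Hölder approximation, since convolution with a probability kernel does not increase sup-norm error. For larger $\sigma_t$, we use the kernel expansion above. For large $|x|$, both $\phi^h_\bz(t,x)$ and the approximants can be made $\lesssim N^{-\beta}$ by sub-Gaussianity, so the bound holds there trivially.
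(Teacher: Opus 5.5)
Your route is essentially the paper's. Both clip the $y$-integral, replace $\phi^h_\bz(0,\cdot)$ by local Taylor polynomials in $(y,\bz)$ on a grid, and push these through the Gaussian kernel as in \cite{fu2024unveil}. The differences are in the bookkeeping:
\begin{itemize}
\item \emph{Moving center.} The paper clips to $\bar B=B_1\cap B_2$, with $B_2$ centered at $\bv^\top\bz$. It then reparametrizes $(y,\bz)$ over the unit cube by the sheared affine map $\bA$, paying $\norm{\bA}^\beta\lesssim R_1^\beta$ in the H\"older norm; this is the source of the $\log^{\beta/2}N$ part of \eqref{eq:poly-1}--\eqref{eq:poly-2}. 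You instead absorb the shift into a fixed box $[-CR_1,CR_1]$, with $C$ depending on $\norm{\bv}$, and use cells of side $1/N$ in the original coordinates. That removes the $R_1^\beta$ factor at the cost of about $(R_1N)^{d_{\rm trunc}}$ rather than $N^{d_{\rm trunc}}$ cells. This is harmless here, though it adds polylog factors to the network size and covering number used later.
\item \emph{Kernel expansion.} The paper's $f_1,f_2$ are the diffused local polynomials of \cite{fu2024unveil}, i.e.\ integrals of the local polynomials against the clipped kernel. Your Step~4 (expanding the Gaussian) is not part of this lemma; it is effectively deferred to the network approximation encapsulated in Assumption~\ref{ass:net-class-1}.
\end{itemize}

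The step that needs repair is the one carrying uniformity in $t$. In Step~3 the kernel does not have unit mass in the variable you integrate: $\int_\R (2\pi\sigma_t^2)^{-1/2}e^{-(x-\alpha_t y)^2/(2\sigma_t^2)}\dd y=1/\alpha_t=e^{t/2}$. So a sup-norm error in $y$ is not transferred uniformly over $t>0$. The same issue affects the tail bound you assert in Step~2 and your ``convolution does not increase sup-norm error'' remark.

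The fix is the device of Lemma~\ref{lemma:clip integral}. Bound each integral $\int g(y)K_t(x,y)\dd y$ over the box both by $\sup\abs{g}/\alpha_t$ and by $\sup\abs{g}\,\abs{\mathrm{box}}/\sigma_t$. For the discarded tail, use $\sup_{\rm tail}\phi^h_\bz(0,\cdot)/\alpha_t$ and the tail mass divided by $\sigma_t$. Since $\alpha_t^2+\sigma_t^2=1$, you get a factor $\min(1/\alpha_t,R_1/\sigma_t)\lesssim\sqrt{\log N}$, which the stated rates accommodate.

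Conversely, the small-$t$ ``main obstacle'' you single out is not real. Expanding $e^{-u^2/2}$ in $u=(x-\alpha_t y)/\sigma_t$ on $\abs{u}\le R_2$ needs degree $\asymp\log N$ for every $t$. Its remainder is multiplied by $\int_{B_1\cap\mathrm{box}}\sigma_t^{-1}\dd y\le\min(2R_2/\alpha_t,\,2CR_1/\sigma_t)\lesssim\sqrt{\log N}$, again uniformly in $t$. So drop the split at $\sigma_t\approx N^{-1}$. As written, its small-$\sigma_t$ branch returns the diffused local polynomial itself, which is not a polynomial in the literal sense you adopted in Step~4. With these repairs the argument goes through.
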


{ The proof adapts the strategy of \cite[Appendix~A.4]{fu2024unveil}, with an additional truncation step to handle the unbounded conditioning variable. This truncation argument is established in Lemma~\ref{lemma:clip integral} in Appendix~\ref{sec:proofs}. The remaining calculations follow \cite[Appendix~A.4]{fu2024unveil}.}

\begin{proof}

 We first approximate the density $\phi^h_{\bz}(t, x)$ by an integral evaluated over a bounded domain. Denote the integral by
\begin{align}
    \tilde{f}_1(x, \bz, t) = \frac{1}{\sigma_t(2\pi)^{1/2}}\int_{\bar{B}}  \phi_{\bz}^h(0, y) \exp\parenthesis{-\frac{\abs{x - \alpha_t y}^2}{2\sigma_t^2}}\dd y,
\end{align}
where we define
\begin{align*}
    \bar{B} \coloneqq \bracket{\frac{x - \sigma_t R_2}{\alpha_t}, \frac{x + \sigma_t R_2}{\alpha_t}} \bigcap \bracket{\bv^\top \bz - R_3, \bv^\top \bz + R_3},
\end{align*}
with $\bv$ as defined in Assumption \ref{ass:truncated-subGaussian}.
Later, we will choose $R_2, R_3 \asymp \sqrt{\beta\log N}$ for some large enough positive integer $N$. Consequently, Lemma \ref{lemma:clip integral} in Section \ref{sec:proofs} implies that
\begin{align}
    \abs{\tilde{f}_1(x, \bz, t) - \phi^h_{\bz}(t, x)}  \leq N^{-\beta}, \;\; \forall x \in \R, \bz \in \R^{h-k}.
\end{align}
Compared to the steps in proving \cite[Lemma A.4]{fu2024unveil}, we only need to modify the function $f$ in (A.24) of \cite{fu2024unveil}. To proceed, we take
\begin{align}
    f(y, \bz, t) = \phi^h_{2R_1(\bz - {\bf 1}/2)}(0, 2R_1(y - 1/2) + \bv^\top \bz), \;\;  y \in [0, 1], \bz \in [0, 1]^{h - k}, t > 0,
\end{align}
where ${\bf 1} \in \R^{h-k}$ is the all one vector  and $R_1 \asymp \sqrt{\beta \log N}$. We calculate the Hölder norm of $f$ as follows. Let $g(\cdot)$ be an arbitrary $\beta$-Hölder smooth function. If $\norm{g}_{\cH^\beta(\R^d)} \leq B $, then for any $\bA \in \R^{d \times d}$ and $\bb \in \R^d$, it is straightforward to check by definition that 
\begin{align*}
    \norm{g(\bA \bx + \bb)}_{\cH^\beta(\R^d)} \leq \norm{\bA}^\beta \norm{g}_{\cH^\beta(\R^d)} \leq B\norm{\bA}^\beta, \;\; \forall \; \bx \in \R^d.
\end{align*}
Consider the linear transformation
\begin{align*}
    \begin{pmatrix}
        y \\ \bz 
    \end{pmatrix}
    \mapsto 
    \underbrace{\begin{pmatrix}
        2R_1 & \bv^\top \\ \bf{0} & 2R_1 \bI_{h-k}
    \end{pmatrix}}_\bA
    \begin{pmatrix}
        y \\ \bz 
    \end{pmatrix}
    \underbrace{- R_1 
    \begin{pmatrix}
        1 \\ \bf{1}
    \end{pmatrix}}_\bb.
\end{align*}
Since the operator norm of $A$ is bounded with
\begin{align*}
    \norm{\bA} \leq \norm{\bA}_F = \sqrt{(h-k)(2R_1)^2 + \norm{\bv}^2} \lesssim R_1,
\end{align*}
Assumption \ref{ass:holder-re} implies that $\norm{f}_{\cH^\beta} \lesssim BR_1^\beta$. By applying the construction of function $q(\cdot)$ in (A.26) of \cite{fu2024unveil} with substitution $\bx = y$ and $\by = \bz$, we have 
\begin{align}
    \abs{f(y, \bz, t) - q(y, \bz, t)} \lesssim B\frac{R_1^\beta d_{\rm trunc}^s}{s!N^\beta}.
\end{align}
Finally, we construct a diffused local polynomial as in (A.37) of \cite{fu2024unveil} and follow the same calculation in \cite[Appendix A.4]{fu2024unveil} to deduce
\begin{align*}
    \abs{f_1(x, \bz, t) - \phi^h_{\bz}(t, x) } \lesssim BN^{-\beta}\log^{\frac{\beta + 1}{2}}N, \;\; \forall x \in \R, \bz \in [-R_1, R_1]^{h-k}, t>0. 
\end{align*}
\end{proof}

Let $\epsilon_{\rm low}> 0$ be a threshold that will be chosen later and fix a large enough positive constant $C_7 >0  $. With the two polynomials defined as in Lemma \ref{lemma:poly-density}, we construct a score approximator as
\begin{align}
    f_3 = \min\parenthesis{\frac{f_2}{\sigma_t f_{1, \rm clip}}, \frac{C_7}{\sigma_t^2}\parenthesis{R_4 + 1}}, 
    \label{eq:f_3}
\end{align}
where $f_{1, \rm clip} \coloneqq \max(f_1, \epsilon_{\rm low})$ and $R_4 \asymp \sqrt{\log N}$. This construction guarantees that the denominator is bounded away from zero while the range of $f_3$ is also bounded. The next result establishes score approximation with $f_3$. The proof follows the same calculations as in \cite[Appendix A.3.2]{fu2024unveil}.

\begin{lemma}\label{lemma:f3}
Suppose Assumptions \ref{ass:truncated-subGaussian} and \ref{ass:holder-re} hold. Let $f_3$ be defined as in \eqref{eq:f_3}. For sufficiently large integer $N>0$, it holds that
\begin{align*}
    \abs{f_3(x, \bz, t) - \partial_x \log\phi_{\bz}^h(t, x)} \lesssim \frac{B}{\sigma_t^2 \phi_{\bz}^h(t, x)}N^{-\beta}\log^{\frac{\beta+2}{2}}N.
\end{align*}
for any $x \in \R, \bz \in [-R_1, R_1]^{h-k}$ and $ t>0$.
\end{lemma}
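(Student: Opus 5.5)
Write $\phi=\phi^h_\bz(t,x)$, $s=\partial_x\log\phi$, and $\varepsilon_N \coloneqq BN^{-\beta}\log^{\frac{\beta+2}{2}}N$, which dominates both error bounds in Lemma~\ref{lemma:poly-density}. The plan is to compare $f_3$ with $s$ by splitting the $(x,\bz,t)$ domain into a ``good'' region and a ``bad'' region. In the good region $\phi\ge 2\epsilon_{\rm low}$ and the clipping at level $C_7(R_4+1)/\sigma_t^2$ is inactive for the true score. In the bad region the claimed bound is either trivial or follows from the size of $s$ alone. We choose $\epsilon_{\rm low}\asymp \varepsilon_N$ (times a constant), so that $f_1\ge \phi-\varepsilon_N\ge \phi/2$ whenever $\phi\ge 2\epsilon_{\rm low}$.

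\emph{Step 1 (good region).} On $\{\phi\ge 2\epsilon_{\rm low}\}$ the clip in $f_{1,\rm clip}$ is inactive up to constants, and $f_{1,\rm clip}\ge\phi/2$. Write
\[
\frac{f_2}{\sigma_t f_{1,\rm clip}}-\frac{\sigma_t\partial_x\phi}{\sigma_t\phi}
=\frac{f_2-\sigma_t\partial_x\phi}{\sigma_t f_{1,\rm clip}}+\frac{\sigma_t\partial_x\phi\,(\phi-f_{1,\rm clip})}{\sigma_t\phi f_{1,\rm clip}}.
\]
By \eqref{eq:poly-2} the first term is at most $\varepsilon_N/(\sigma_t\phi)$ up to constants. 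By \eqref{eq:poly-1} the second is at most $|s|\,\varepsilon_N/\phi$. To bound $|s|$, use the Tweedie representation $s=\sigma_t^{-2}(\alpha_t\E[X_0^h\mid X_t^h=x,\bz]-x)$. Combined with the conditional sub-Gaussianity of Assumption~\ref{ass:truncated-subGaussian} (centered at $\bv^\top\bz$, with $|\bz|\le R_1\sqrt{d_{\rm trunc}}$), this gives $|s|\lesssim \sigma_t^{-2}(R_4+1)$ on the region where $|x|\lesssim R_4$ and $\phi$ is not exponentially small. This is the standard bound of \cite[Lemma A.3/A.5]{fu2024unveil}. Hence both terms are $\lesssim \varepsilon_N/(\sigma_t^2\phi)$, using $\sigma_t\le 1$. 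For $C_7$ large, the outer min does not cut the true score, and min is $1$-Lipschitz, so the bound passes to $f_3$.

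\emph{Step 2 (bad region).} If $\phi<2\epsilon_{\rm low}\asymp\varepsilon_N$, then the right-hand side $\varepsilon_N/(\sigma_t^2\phi)\gtrsim\sigma_t^{-2}$. It suffices to have $|f_3|+|s|\lesssim \sigma_t^{-2}(R_4+1)\cdot(\varepsilon_N/\phi)$. Here $|f_3|\le C_7(R_4+1)/\sigma_t^2$ by construction. For $s$, the same Tweedie argument bounds $|s|\lesssim\sigma_t^{-2}(1+|x|+|\bv^\top\bz|+\sqrt{\log(1/\phi)})$. Since $\varepsilon_N/\phi\ge1$, the logarithmic factors are absorbed into $\log^{\frac{\beta+2}{2}}N$ once $\phi\ge N^{-c}$. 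When $\phi$ is smaller than any polynomial in $N$, the ratio $\varepsilon_N/\phi$ is huge and dominates $\sqrt{\log(1/\phi)}$ outright.

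\emph{Main obstacle.} The delicate point is the uniform-in-$\bz$ control of $|s|$ needed in both regions. Because the conditioning variable ranges over $[-R_1,R_1]^{h-k}$ with $R_1\asymp\sqrt{\log N}$ rather than a fixed box, the center $\bv^\top\bz$ grows like $\sqrt{\log N}$. I would handle this by working in the shifted variable $y-\bv^\top\bz$, exactly as in the change of variables inside the proof of Lemma~\ref{lemma:poly-density}. The sub-Gaussian posterior bound then holds uniformly, with all extra $\sqrt{\log N}$ factors absorbed into the polylog factor of the statement.
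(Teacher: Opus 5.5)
Your architecture matches what the paper inherits from \cite[Appendix~A.3.2]{fu2024unveil}: the ratio decomposition with $f_{1,\rm clip}\gtrsim\phi$ above the threshold, the $1$-Lipschitz clip, and size bounds below the threshold. The gap is in the logarithmic bookkeeping, and it breaks the stated bound. You replace the accuracy of $f_1$ in \eqref{eq:poly-1}, which is $BN^{-\beta}\log^{\frac{\beta+1}{2}}N=\varepsilon_N/\sqrt{\log N}$, by $\varepsilon_N$, and you take $\epsilon_{\rm low}\asymp\varepsilon_N$. In Step~1 the second term is then $|s|\,\varepsilon_N/\phi$ with $|s|\lesssim\sigma_t^{-2}(R_4+1)\asymp\sigma_t^{-2}\sqrt{\log N}$. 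That is $\sqrt{\log N}\,\varepsilon_N/(\sigma_t^2\phi)$, not $\varepsilon_N/(\sigma_t^2\phi)$. In Step~2 your ``it suffices'' condition is itself a factor $R_4+1$ weaker than the target. Nothing can be absorbed into $\log^{\frac{\beta+2}{2}}N$, because that factor already sits inside $\varepsilon_N$.

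This is not mere slack: with $\epsilon_{\rm low}\asymp\varepsilon_N$ the inequality is false. Take $X_0^h\sim\cN(0,1)$ independent of the history ($\bv=\bzero$), so $X_t^h\sim\cN(0,1)$ and $s=-x$ for every $t$. Fix $t\ge 1$ and $x>0$ with $\phi=\epsilon_{\rm low}/2$, so $x\asymp\sqrt{\log N}$. Let $f_1,f_2$ be any pair satisfying \eqref{eq:poly-1}--\eqref{eq:poly-2}, which are the only properties your argument uses. Then $f_1<\epsilon_{\rm low}$ for large $N$, so $f_{1,\rm clip}=\epsilon_{\rm low}$ and $f_2/(\sigma_t f_{1,\rm clip})=\phi s/\epsilon_{\rm low}+O(1)=s/2+O(1)$. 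Hence $|f_3-s|\gtrsim\sqrt{\log N}$, while the right-hand side is $\varepsilon_N/(\sigma_t^2\phi)=O(1)$.

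The repair is to keep the two rates of Lemma~\ref{lemma:poly-density} separate. Set $\epsilon_{\rm low}\asymp BN^{-\beta}\log^{\frac{\beta+1}{2}}N$, with a constant larger than the one in \eqref{eq:poly-1}; this is the value used in the proof of Theorem~\ref{thm:score approx}.

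\textbf{Region $\{\phi\ge 2\epsilon_{\rm low}\}$.} You still get $f_{1,\rm clip}=f_1\ge\phi/2$. The upper bound in Lemma~\ref{lemma:tail-density-t} forces $|x-\alpha_t\bv^\top\bz|\lesssim\sqrt{\log N}$, so Lemma~\ref{lemma:score-bound} gives $|s|\lesssim\sigma_t^{-2}\sqrt{\log N}$. The second term is then $\lesssim\sigma_t^{-2}\sqrt{\log N}\cdot BN^{-\beta}\log^{\frac{\beta+1}{2}}N/\phi=\varepsilon_N/(\sigma_t^2\phi)$. The $\sqrt{\log N}$ offset between \eqref{eq:poly-1} and \eqref{eq:poly-2} exists exactly for this.

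\textbf{Region $\{\phi<2\epsilon_{\rm low}\}$.} Now $\varepsilon_N/\phi\gtrsim\sqrt{\log N}$, which absorbs the clip level $C_7(R_4+1)/\sigma_t^2$. The bound $|s|\lesssim\sigma_t^{-2}(1+\sqrt{\log(1/\phi)})$, together with monotonicity of $\phi\sqrt{\log(1/\phi)}$ near $0$, gives $\sigma_t^2\phi|s|\lesssim\epsilon_{\rm low}\sqrt{\log N}\asymp\varepsilon_N$. If the $\min$ in \eqref{eq:f_3} is read literally as one-sided, the lower side is still controlled there by $|f_2|/(\sigma_t\epsilon_{\rm low})\lesssim|s|+\sqrt{\log N}/\sigma_t$.

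Finally, the uniform-in-$\bz$ control you flag as the main obstacle is not one. Lemmas~\ref{lemma:tail-density-t} and~\ref{lemma:score-bound} are centered at $\alpha_t\bv^\top\bz$ and hold for all $\bz\in\R^{h-k}$, so you need neither a condition $|x|\lesssim R_4$ nor any re-centering. The delicate point is the $\sqrt{\log N}$ accounting above.
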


With Lemma \ref{lemma:f3}, we will complete the proof of score approximation as long as $f_3$ can be approximated by a neural network.
We make the following assumption on the neural network class $\cS^h$. 

\begin{assumption}\label{ass:net-class-1}
    Let $N > 0 $ be a given integer. There exists a neural network $f^{\rm net}(\cdot) \in \cS^h$ such that  $\norm{f^{\rm net}(\cdot, t)}_{\infty} \lesssim \frac{\sqrt{\log N}}{\sigma_t^2}$ for all $t \in [t_0, T]$ and moreover
    \begin{align*}
        \abs{f^{\rm net}(x, \bz, t) - f_3(x, \bz, t)} \lesssim N^{-\beta}, \; \forall x \in [-R_1, R_1], \bz \in [-R_1, R_1]^{h-k}, t\in [t_0, T].
    \end{align*}
\end{assumption}

Since $f_3$ is Lipschitz continuous over the compact domain, many universal approximation results \citep{bach2017breaking,kajitsuka2024are} guarantee the existence of neural networks as required in Assumption \ref{ass:net-class-1}. In Section \ref{sec:relu}, we will justify this assumption using ReLU networks as a concrete example.
We are ready to state the main result in this subsection.

\begin{theorem}[Score approximation]\label{thm:score approx}
       Suppose Assumptions \ref{ass:truncated-subGaussian},  \ref{ass:holder-re}, \ref{ass:net-class-1} hold. Let $N > 0$ be a sufficiently large integer and we set early-stop time $t_0 = N^{-C_\sigma}$ and terminal time $ T = C_\alpha \log N$ for some positive constants $C_\sigma$ and $C_\alpha$. There is a neural network $s^\star \in \cS^h$ such that
    \begin{align}\label{eq:univ-approx}
     \E\bracket{\abs{s^\star(t, X_t^h, \bX_0^{[k:h)}) - \partial_x \log \phi_{\bX_0^{[k:h)}}(t, X_t^h)}^2} \lesssim \frac{B}{\sigma_t^4}N^{-\beta}(\log N)^{2 + \beta/2}, \;\; \forall t \in [t_0, T]. 
\end{align}

\end{theorem}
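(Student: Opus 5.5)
Take $s^\star = f^{\rm net}$ from Assumption \ref{ass:net-class-1}, and split the expectation in \eqref{eq:univ-approx} over a bounded region $D \coloneqq \{|X_t^h|\le R_1,\ \|\bX_0^{[k:h)}\|_\infty \le R_1\}$ and its complement $D^c$. On $D$ the triangle inequality gives
\begin{align*}
|s^\star - \partial_x\log\phi^h_{\bz}(t,x)| \le |f^{\rm net}-f_3| + |f_3 - \partial_x\log\phi^h_\bz(t,x)|.
\end{align*}
The first term is $\lesssim N^{-\beta}$ by Assumption \ref{ass:net-class-1}. The second is controlled by Lemma \ref{lemma:f3}.

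The main obstacle is that Lemma \ref{lemma:f3} carries the factor $1/\phi^h_\bz(t,x)$, which blows up where the density is small. I would split $D$ further into $\{\phi^h_\bz(t,x)\ge \epsilon_{\rm low}\}$ and its complement, following \cite[Appendix A.3.2]{fu2024unveil}.

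\textbf{High-density part.} Here I would not square the pointwise bound directly. Instead I write the conditional expectation over $x$ as an integral against $\phi^h_\bz(t,x)$. One factor of $1/\phi$ cancels, and the squared error becomes
\begin{align*}
\frac{B N^{-\beta}\log^{(\beta+2)/2}N}{\sigma_t^2}\,\int \frac{|f_3-\partial_x\log\phi|}{1}\,\dd x .
\end{align*}
The remaining factor $|f_3-\partial_x\log\phi|$ is bounded by $\sqrt{\log N}/\sigma_t^2$, because both $f_3$ and the true score are truncated or controlled at that level on the relevant region. The $x$-integral runs over an interval of length $\lesssim R_1$. This yields $\frac{B}{\sigma_t^4}N^{-\beta}(\log N)^{2+\beta/2}$ up to constants, which is exactly the target rate.

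\textbf{Low-density part.} On $\{\phi<\epsilon_{\rm low}\}$ I use crude bounds: $|s^\star|^2\lesssim \log N/\sigma_t^4$, and a sub-Gaussian score bound $|\partial_x\log\phi^h_\bz(t,x)|\lesssim (|x-\alpha_t\bv^\top\bz|+\sqrt{\log N})/\sigma_t$ with high probability. The probability mass of this set is $\lesssim \epsilon_{\rm low}\cdot R_1$. Choosing $\epsilon_{\rm low}=N^{-\beta}$ makes this contribution lower order.

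\textbf{Outside $D$.} For the tail $D^c$ I use Corollary \ref{coro:joint-subG}. The joint law of $(X_t^h,\bX_0^{[k:h)})$ is sub-Gaussian uniformly in $t$, since $X_t^h=\alpha_tX_0^h+\sigma_t\xi$. With $R_1\asymp\sqrt{\beta\log N}$ and a large constant, this gives $\P(D^c)\lesssim N^{-2\beta}$. Combined with $\E|\partial_x\log\phi|^4\lesssim \mathrm{poly}(\log N)/\sigma_t^4$ and the sup bound on $s^\star$, Cauchy--Schwarz shows the tail contribution is $\lesssim \sigma_t^{-4}N^{-\beta}$.

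\textbf{Role of the truncation.} This step is where the unbounded conditioning variable $\bz$ matters. It is handled by the same truncation underlying Lemma \ref{lemma:clip integral}, together with the uniform-in-$\bz$ Gaussian envelope of Assumption \ref{ass:truncated-subGaussian}.

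The constraints $t\ge t_0=N^{-C_\sigma}$ and $T=C_\alpha\log N$ enter only so that $\sigma_t^{-1}$ stays polynomial in $N$ and the range $t\in[t_0,T]$ matches Assumption \ref{ass:net-class-1}. Summing the three contributions gives \eqref{eq:univ-approx}.
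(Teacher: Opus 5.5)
Your argument is correct, but it reaches the bound by a somewhat different route than the paper.

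\textbf{The paper's route.} The paper splits the error into three pieces. $(E_1)$ is the tail in $\bz$, handled by Lemma \ref{lemma:score-tail}. $(E_2)$ is the tail $\abs{x-\alpha_t\bv^\top\bz}>R_5$, handled by Lemma \ref{lemma:E2-terms}. The remaining bounded region is split by the threshold $\epsilon_{\rm low}$. On the high-density piece $(E_5)$ the paper squares the pointwise bound of Lemma \ref{lemma:f3}. A factor $1/\phi\le 1/\epsilon_{\rm low}$ therefore survives, giving $B^2N^{-2\beta}(\log N)^{\beta+5/2}/(\sigma_t^4\epsilon_{\rm low})$. This must be balanced against the low-density term $\epsilon_{\rm low}(\log N)^{3/2}/\sigma_t^4$, and the rate appears only after choosing $\epsilon_{\rm low}=BN^{-\beta}\log^{(1+\beta)/2}N$.

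\textbf{Your route.} Your estimate $\int\phi\abs{e}^2\,\dd x\le \sup\abs{e}\int\phi\abs{e}\,\dd x$ uses the pointwise bound only once. The density then cancels completely, and the target $\frac{B}{\sigma_t^4}N^{-\beta}(\log N)^{2+\beta/2}$ (linear in $B$) comes out directly, with no dependence on $\epsilon_{\rm low}$. The threshold is needed only insofar as Lemma \ref{lemma:f3} is established on $\{\phi\ge\epsilon_{\rm low}\}$.

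Your box $\{\abs{x}\le R_1,\ \norm{\bz}_\infty\le R_1\}$ also matches exactly the domain on which Assumption \ref{ass:net-class-1} controls $\abs{f^{\rm net}-f_3}$. The paper's region $\abs{x-\alpha_t\bv^\top\bz}\le R_5$ tacitly needs that guarantee on a slightly larger box. The price is your tail step: Cauchy--Schwarz with fourth moments is cruder than the paper's dedicated tail lemmas. It still gives $\lesssim \sigma_t^{-4}N^{-\beta}\log N$ once $\P(D^c)\lesssim N^{-2\beta}$, which suffices.

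\textbf{Details to fix when writing it out.}

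First, the fourth moment of the score is $\lesssim\sigma_t^{-8}$ by Lemma \ref{lemma:score-bound}, not $\mathrm{poly}(\log N)/\sigma_t^{4}$. With $\sigma_t^{-8}$, your stated tail bound is exactly what comes out.

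Second, for the sup factor, bound $\abs{s^\star-\partial_x\log\phi^h_\bz}$ directly by $\norm{f^{\rm net}}_\infty\lesssim\sqrt{\log N}/\sigma_t^2$ plus Lemma \ref{lemma:score-bound} on $D$. You cannot rely on $f_3$ here, because $f_3$ in \eqref{eq:f_3} is clipped only from above. The low-density piece likewise needs this deterministic $\sigma_t^{-2}$ bound, not a high-probability $\sigma_t^{-1}$ one.

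Third, the ratio bound behind Lemma \ref{lemma:f3} uses $f_{1,\rm clip}\gtrsim\phi$ on $\{\phi\ge\epsilon_{\rm low}\}$. That is guaranteed when $\epsilon_{\rm low}$ dominates the density-approximation error in \eqref{eq:poly-1}. Your choice $\epsilon_{\rm low}=N^{-\beta}$ falls below that by log factors, so take the paper's value instead. Your low-density term $\frac{\log N}{\sigma_t^4}\epsilon_{\rm low}R_1$ is then of the target order rather than lower order, and the conclusion is unchanged.
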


Theorem \ref{thm:score approx} establishes a score approximation result over the time interval $[t_0, T]$. It will serve as a building block for the subsequent score estimation theory. Compared with prior work \citep{fu2024unveil}, the main technical difference is the introduction of an additional truncation step. This is necessary because the conditional information $\bX_0^{[k:h)}$ is not assumed to be bounded.

\begin{proof}
Let $s^\star(t, x, \bz) = f^{\rm net}(x, \bz, t)$ given in Assumption \ref{ass:net-class-1}. It follows from Lemma \ref{lemma:f3} that
\begin{align}
    \abs{s^\star(x, \bz, t) - \partial_x \log\phi_{\bz}(t, x)} \lesssim \frac{B}{\sigma_t^2 \phi_{\bz}(t, x)}N^{-\beta}\log^{\frac{\beta+2}{2}}N. \label{eq:approx trans}
\end{align}
when $x, \bz, t$ are bounded as in Assumption \ref{ass:net-class-1}.

Consider the following decomposition
\begin{align*}
    & \E\bracket{\abs{s^\star(t, X_t^h, \bX_0^{[k:h)}) - \partial_x \log \phi^h_{\bX_0^{[k:h)}}(t, X_t^h)}^2} \\ & \leq \underbrace{\E\bracket{1\curly{\norm{\bX_0^{[k:h)}}_\infty > R_1}\abs{s^\star(t, X_t^h, \bX_0^{[k:h)}) - \partial_x \log \phi^h_{\bX_0^{[k:h)}}(t, X_t^h)}^2}}_{(E_1)} + \\ & \qquad + \underbrace{\E\bracket{1\curly{\norm{\bX_0^{[k:h)}}_\infty \leq R_1, \abs{X_t^{h} - \alpha_t \bv^\top \bX_0^{[k:h)}} > R_5}\abs{s^\star(t, X_t^h, \bX_0^{[k:h)}) - \partial_x \log \phi^h_{\bX_0^{[k:h)}}(t, X_t^h)}^2}}_{(E_2)} \\ & \qquad + \underbrace{\E\bracket{1\curly{\norm{\bX_0^{[k:h)}}_\infty \leq R_1, \abs{X_t^{h}- \alpha_t \bv^\top \bX_0^{[k:h)}} \leq R_5}\abs{s^\star(t, X_t^h, \bX_0^{[k:h)}) - \partial_x \log \phi^h_{\bX_0^{[k:h)}}(t, X_t^h)}^2}}_{(E_3)},
\end{align*}
where $R_1 $ and $ R_5$ will be chosen later. 

 We begin with the error term $(E_1)$. Note that
 \begin{align*}
    (E_1) \leq 2\E\bracket{1\curly{\norm{\bX_0^{[k:h)}}_\infty > R_1}\abs{s^\star(t, X_t^h, \bX_0^{[k:h)})}^2} + 2\E\bracket{1\curly{\norm{\bX_0^{[k:h)}}_\infty > R_1}\abs{\partial_x \log \phi^h_{\bX_0^{[k:h)}}(t, X_t^h)}^2}.
\end{align*}
 As $\abs{s^\star(x, \bz, t)} \lesssim \frac{\sqrt{\log N}}{\sigma_t^2}$ by Assumption \ref{ass:net-class-2}, we have
\begin{align}
\E\bracket{1\curly{\norm{\bX_0^{[k:h)}}_\infty > R_1}\abs{s^\star(t, X_t^h, \bX_0^{[k:h)})}^2} & \lesssim \frac{\log N}{\sigma_t^4} \int_{\norm{\bz}_\infty > R_1}p^{[k:h)}( \bz) \dd \bz \nonumber
\\ & \lesssim \frac{\log N}{\sigma_t^4}\int_{\norm{\bz}_\infty > R_1}\exp\parenthesis{-C_5\norm{\bz}^2/2}\dd \bz\nonumber \\ & \lesssim \frac{\log N}{\sigma_t^4} (R_1)^{-1}\exp\parenthesis{-C_5R_1^2/2},\label{eq:E1-1}
\end{align}
where $C_5$ is given in Corollary \ref{coro:joint-subG}.

Combining \eqref{eq:E1-1} and Lemma \ref{lemma:score-tail}, and choosing $R_1 \geq \sqrt{\frac{4\beta}{\min(C_5, C_6)} \log N}$, we bound the term $(E_1)$ with
\begin{align}
    (E_1) \lesssim \frac{\log N}{\sigma_t^4} (R_1)^{-1}\exp\parenthesis{-C_5R_1^2/2} + \frac{1}{\sigma_t^4} \exp(-C_6 R_1^2/2) \lesssim \frac{\sqrt{\log N}}{\sigma_t^4}N^{-2\beta}. \label{eq:E1-bound}
\end{align}

Next, we bound $(E_2)$ with
\begin{align}
    (E_2) & \leq 2\E\bracket{1\curly{\norm{\bX_0^{[k:h)}}_\infty \leq R_1, \abs{X_t^{h} - \alpha_t \bv^\top \bX_0^{[k:h)}} > R_5}\abs{s^\star(t, X_t^h, \bX_0^{[k:h)}) }^2}\nonumber \\ & \qquad + 2\E\bracket{1\curly{\norm{\bX_0^{[k:h)}}_\infty \leq R_1, \abs{X_t^{h}- \alpha_t \bv^\top \bX_0^{[k:h)}} > R_5}\abs{\partial_x \log \phi_{\bX_0^{[k:h)}}(t, X_t^h)}^2} \label{eq:E2-1}
\end{align}
Lemma \ref{lemma:E2-terms} implies
\begin{align} & \E\bracket{1\curly{\norm{\bX_0^{[k:h)}}_\infty \leq R_1, \abs{X_t^{h} - \alpha_t \bv^\top \bX_0^{[k:h)}} > R_5}\abs{s^\star(t, X_t^h, \bX_0^{[k:h)}) }^2}  \nonumber \\ & \lesssim \frac{\log N}{\sigma_t^4}\int_{\norm{\bz}_\infty \leq R_1}\int_{\abs{x - \alpha_t \bv^\top \bz}> R_5} \phi^h_{\bz}(t, x) \dd x p^{[k:h)}( \bz)  \dd \bz \nonumber \\ & \lesssim \frac{\log N}{\sigma_t^4}(R_5)^{-1}\exp\parenthesis{-C_2' R_5^2/2},  \label{eq:E2-2}
\end{align}
and moreover
\begin{align} & \E\bracket{1\curly{\norm{\bX_0^{[k:h)}}_\infty \leq R_1, \abs{X_t^{h}- \alpha_t \bv^\top \bX_0^{[k:h)}} > R_5}\abs{\partial_x \log \phi^h_{\bX_0^{[k:h)}}(t, X_t^h)}^2} \nonumber \\ & =  \int_{\norm{\bz}_\infty \leq R_1}\int_{\abs{x - \alpha_t \bv^\top \bz}> R_5} \phi^h_{\bz}(t, x)\abs{\partial_x \log \phi^h_{\bz}(t, x)}^2 \dd x p^{[k:h)}(\bz)  \dd \bz  \nonumber \\ & \lesssim \frac{1}{\sigma_t^4}R_5\exp\parenthesis{-C_2'R_5^2/2}. \label{eq:E2-3}
\end{align}
With \eqref{eq:E2-1}, \eqref{eq:E2-2} and \eqref{eq:E2-3}, we choose $R_5 \geq \sqrt{\frac{4\beta}{C_2'}\log N}$ to obtain an upper bound for $(E_2)$:
\begin{align}
    (E_2) & \lesssim {\sigma_t^{-4}}(R_5)^{-1}\exp\parenthesis{-C_2'R_5^2/2} + \frac{1}{\sigma_t^4}R_5\exp\parenthesis{-C_2'R_5^2/2} \nonumber \\ & \lesssim \frac{\log N}{\sigma_t^4}\cdot \frac{N^{-2\beta}}{\sqrt{\log N}} + \frac{\sqrt{\log N}}{\sigma_t^4}N^{-2\beta} \lesssim \frac{\sqrt{\log N}}{\sigma_t^4}N^{-2\beta}. \label{eq:E2-bound}
\end{align}

We further bound $(E_3) \leq (E_4) + (E_5)$ with
\begin{align*}
    & (E_4)  \coloneqq \E\bigg [1\curly{\norm{\bX_0^{[k:h)}}_\infty \leq R_1, \abs{X_t^{h} - \alpha_t \bv^\top \bX_0^{[k:h)}} \leq R_5, \abs{\phi_{\bX_0^{[k:h)}}(t, X_t^h)} < \epsilon_{\rm low}} \\ & \qquad\qquad\qquad\qquad\times \abs{s^\star(t, X_t^h, \bX_0^{[k:h)}) - \partial_x \log \phi_{\bX_0^{[k:h)}}(t, X_t^h)}^2\bigg] \\ 
   & (E_5)  \coloneqq \E\bigg [1\curly{\norm{\bX_0^{[k:h)}}_\infty \leq R_1, \abs{X_t^{h} - \alpha_t \bv^\top \bX_0^{[k:h)}} \leq R_5, \abs{\phi_{\bX_0^{[k:h)}}(t, X_t^h)} \geq \epsilon_{\rm low}} \\ & \qquad\qquad\qquad\qquad\times \abs{s^\star(t, X_t^h, \bX_0^{[k:h)}) - \partial_x \log \phi_{\bX_0^{[k:h)}}(t, X_t^h)}^2\bigg]
\end{align*}
We bound $(E_4)$ with
\begin{align*}
    (E_4)
    & \leq  2\E\bracket{
        1\curly{\substack{
        \norm{\bX_0^{[k:h)}}_\infty \leq R_1,\ 
        \abs{X_t^{h} - \alpha_t \bv^\top \bX_0^{[k:h)}} \leq R_5,\\
        \abs{\phi_{\bX_0^{[k:h)}}(t, X_t^h)} < \epsilon_{\rm low}}}
        \abs{s^\star(t, X_t^h, \bX_0^{[k:h)})}^2} \\
    & \qquad + 2\E\bracket{
        1\curly{\substack{
        \norm{\bX_0^{[k:h)}}_\infty \leq R_1,\ 
        \abs{X_t^{h} - \alpha_t \bv^\top \bX_0^{[k:h)}} \leq R_5,\\
        \abs{\phi_{\bX_0^{[k:h)}}(t, X_t^h)} < \epsilon_{\rm low}}}
        \abs{\partial_x \log \phi_{\bX_0^{[k:h)}}(t, X_t^h)}^2},
\end{align*}
where  $\epsilon_{\rm low} > 0$ will be chosen later.
Lemma \ref{lemma:small density} implies the following upper bound for $(E_4)$:
\begin{align}
    (E_4)  \lesssim  \int_{\norm{\bz}_\infty \leq R_1}\parenthesis{\frac{\log N}{\sigma_t^4} R_5 \epsilon_{\rm low}  + \frac{\epsilon_{\rm low}}{\sigma_t^4} R_5^3 } p^{[k:h)}(\bz)  \dd \bz     \lesssim \frac{\epsilon_{\rm low}}{\sigma_t^4} \cdot (\log N)^{3/2}.\label{eq:E4-bound}
\end{align}
By the approximation guarantee \eqref{eq:approx trans}, for any fixed $\bz \in [-R_1, R_1]^{h-k}$ we deduce that
\begin{align*}
& \int_{\abs{x - \alpha_t \bv^\top \bz}\leq  R_5}1\curly{\abs{\phi^h_{\bz}(t, x)} \geq \epsilon_{\rm low}} \abs{s^\star(x, \bz, t) - \partial_x \log\phi^h_{\bz}(t, x)}^2\phi^h_{\bz}(t, x) \dd x   \\ & \lesssim \int_{\abs{x - \alpha_t \bv^\top \bz}\leq  R_5}1\curly{\abs{\phi^h_{\bz}(t, x)} \geq \epsilon_{\rm low}}\frac{B^2}{\sigma_t^4 \phi^h_{\bz}(t, x)}N^{-2\beta}\log^{\beta+2}N \dd x \\ & \lesssim \frac{B^2 \log^{\beta + 2}N}{\sigma_t^4\epsilon_{\rm low}}N^{-2\beta}R_5.
\end{align*}
Consequently, we obtain an upper bound for $(E_5)$:
\begin{align}
    (E_5) \lesssim  \frac{B^2 \log^{\beta + 2}N}{\sigma_t^4\epsilon_{\rm low}}N^{-2\beta}R_5 \int_{\norm{\bz}_\infty \leq R_1}p^{[k:h)}(\bz)  \dd \bz \leq \frac{B^2 \log^{5/2 + \beta}N}{\sigma_t^4\epsilon_{\rm low}}N^{-2\beta}. \label{eq:E5-bound}
\end{align}
Combining \eqref{eq:E1-bound}, \eqref{eq:E2-bound}, \eqref{eq:E4-bound} and \eqref{eq:E5-bound}, we have 
\begin{align}
    \E\bracket{\abs{s^\star(t, X_t^h, \bX_0^{[k:h)}) - \partial_x \log \phi^h_{\bX_0^{[k:h)}}(t, X_t^h)}^2} \lesssim \frac{\sqrt{\log N}}{\sigma_t^4}N^{-2\beta} + \frac{\epsilon_{\rm low}(\log N )^{3/2}}{\sigma_t^4} + \frac{B^2 \log^{5/2 + \beta}N}{\sigma_t^4\epsilon_{\rm low}}N^{-2\beta}. \label{eq:L2-approx-1}
\end{align}
Substitution  $\epsilon_{\rm low} = B N^{-\beta}\log^{\frac{1+\beta}{2}}N$ back into \eqref{eq:L2-approx-1}, we conclude 
\begin{align}
    \E\bracket{\abs{s^\star(t, X_t^h, \bX_0^{[k:h)}) - \partial_x \log \phi^h_{\bX_0^{[k:h)}}(t, X_t^h)}^2} \lesssim \frac{B}{\sigma_t^4}N^{-\beta}(\log N)^{2 + \beta/2}, \;\; \forall t \in [t_0, T]. 
\end{align}
\end{proof}

\subsection{Score Estimation}\label{sec:score-est}
In this section, we present the score estimation result for one-step conditional diffusion models. Define the risk function as
    \begin{align}
    \cR^{[k:h]}(s) = \int_{t_0}^T \frac{1}{T - t_0} \E_{\bX_0^{[k:h)}}\E_{X_0^{h} | \bX_0^{[k: h)}}\E_{X_t^h\vert X_0^h}\bracket{\abs{s(t, X_t^h, \bX_0^{[k:h)}) -  \partial_x \log \phi_{\bX_0^{[k:h)}}(t, X_t^h)}^2}\dd t. \label{eq:risk}
    \end{align}
Our goal is to bound expected risk $\E_\cD[\cR^{[k:h]}(\hat{s})]$, where we recall that $\hat{s}^h$ is a minimizer to the empirical score matching risk $\widehat{\cL}^{[k:h]}(s) = \frac{1}{n}\sum_{i = 1}^n \ell^{[k:h]}(\bx_i^{[k:h]}; s)$ with $\ell^{[k:h]}(\cdot; s) $  defined in \eqref{eq:loss-lh}. In the subsequent analysis, we suppress the superscript $[k:h]$ when the context is clear. Recall the dataset $\cD = \curly{\bx_i}_{i = 1}^n$. 

One key step towards the score estimation theory is bounding the logarithmic covering number of a function class over a compact domain. To this end, it is convenient to recenter the empirical by~defining 
\begin{align}
    \ell_0(x, \bz; s) = \ell(x, \bz; s) - \ell(x, \bz; \bar{s}), \;\; \text{with} \;\; \bar{s}(t, x, \bz) = \partial_x \log \phi_\bz^h(t, x).  \label{eq:centered loss}
\end{align}
Moreover, we define a truncated loss associated with $\ell(\cdot; s)$ as
\begin{align}
    \ell^{\rm trunc}(x, \bz; s) = \ell(x, \bz; s)1\curly{\abs{x - \bv^\top \bz} \leq R_3, \norm{\bz}_\infty \leq R_1}. \label{eq:truncated loss}
\end{align}
The truncated centered loss $\ell_0^{\rm trunc}(\cdot; s)$ is defined in a similar way. 
Let $\cG = \curly{\ell_0^{\rm trunc}(\cdot; s): s \in \cS^h}$ be a loss function class induced by the neural network class $\cS^h$. We make the following definition on the network complexity. 

\begin{definition}[Network Complexity]\label{ass:net-class-2}
    We denote \texttt{Network\allowbreak-Complexity} by the logarithmic covering number $\log(\mathscr{N}(\delta))$.
\end{definition}

We are ready to state the main result of this subsection.

\begin{theorem}[Score estimation]\label{thm:score est}
     Suppose Assumptions \ref{ass:truncated-subGaussian} and \ref{ass:holder-re} hold and let $\cS^h$ be a neural network class satisfying Assumption \ref{ass:net-class-1}. Then it holds that
    \begin{align*}
        \E_{\cD}[\cR(\hat{s})] \lesssim \frac{\sqrt{\log N}}{t_0}N^{-\beta} + \frac{1}{t_0}N^{-\beta}(\log N)^{2+ \beta/2}  + \frac{\log N}{nt_0}\cdot \texttt{Network-Complexity} + \delta.
    \end{align*}

\end{theorem}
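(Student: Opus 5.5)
The proof follows the standard empirical-risk-minimization template for denoising score matching, in the style of \cite{chen2023score,fu2024unveil}, adapted to the truncated conditional loss. First I would relate the risk to the centered loss. By the denoising identity of \citet{vincent2011connection}, for every $s$ we have $\E[\ell(X_0^h,\bX_0^{[k:h)};s)] - \E[\ell(X_0^h,\bX_0^{[k:h)};\bar s)] = \cR(s)$, so $\cR(s)=\E[\ell_0(\cdot;s)]$. Writing $\widehat{\cL}_0$ for the empirical average of $\ell_0$, the optimality of $\hat s$ gives $\widehat{\cL}_0(\hat s)\le \widehat{\cL}_0(s^\star)$ for the network $s^\star$ of Theorem~\ref{thm:score approx}. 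This yields the usual decomposition
\[
\E_\cD[\cR(\hat s)] \le \E_\cD\big[\cR(\hat s) - 2\widehat{\cL}_0(\hat s)\big] + 2\,\E_\cD[\widehat{\cL}_0(s^\star)] = \E_\cD\big[\cR(\hat s) - 2\widehat{\cL}_0(\hat s)\big] + 2\cR(s^\star).
\]
The approximation term $\cR(s^\star)$ is handled by integrating \eqref{eq:univ-approx} over $t\in[t_0,T]$ with weight $1/(T-t_0)$. Since $\int_{t_0}^T\sigma_t^{-4}\,\dd t\lesssim 1/t_0$ (as $\sigma_t^2\asymp t$ near $0$) and $T\asymp\log N$, this gives the $\frac{1}{t_0}N^{-\beta}(\log N)^{2+\beta/2}$ term. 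I would also track the $\sqrt{\log N}\,N^{-2\beta}/\sigma_t^4$ contributions from $(E_1)$ and $(E_2)$ separately, which produce the first term.

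Next I would split the generalization gap into truncated and untruncated parts using $\ell=\ell^{\rm trunc}+\ell\,1\{\cdot\}^c$. On the complement $\{\abs{x-\bv^\top\bz}>R_3\}\cup\{\norm{\bz}_\infty>R_1\}$, I would bound $\abs{\ell(x,\bz;s)}\lesssim \frac{\log N}{t_0}(1+\abs{x}^2)$ uniformly over $s\in\cS^h$. This uses the network sup-norm bound $\sqrt{\log N}/\sigma_t^2$ from Assumption~\ref{ass:net-class-1} and the Gaussian form of the target $\sigma_t^{-2}(\alpha_t x - X_t)$. The sub-Gaussian tails from Assumption~\ref{ass:truncated-subGaussian} and Corollary~\ref{coro:joint-subG} then give an expected contribution of $\frac{\log N}{t_0}\exp(-cR^2)\lesssim \frac{\sqrt{\log N}}{t_0}N^{-\beta}$ once $R_1,R_3\asymp\sqrt{\beta\log N}$, in both the population and empirical versions. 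This again lands in the first term.

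For the truncated part, the function class $\cG$ is uniformly bounded by some $M\lesssim \frac{\log N}{t_0}$, since on the truncated domain both $s$ and $\bar s$ are $O(\sqrt{\log N}/\sigma_t^2)$ after the Gaussian integration in $\ell$. I would apply a Bernstein-type ratio concentration argument: for $g\in\cG$, $\mathrm{Var}(g)\lesssim M\,\E[g]$ because $\ell_0$ is a difference of squares and is controlled by $\cR(s)$ times a bounded factor. Taking a $\delta$-cover of $\cG$ in sup norm with $\mathscr{N}(\delta)$ elements and a union bound, this gives
\[
\E_\cD\big[\E g_{\hat s} - 2\widehat{\E} g_{\hat s}\big]\lesssim \frac{M}{n}\log\mathscr{N}(\delta) + \delta,
\]
which is the $\frac{\log N}{nt_0}\cdot\texttt{Network-Complexity}+\delta$ term. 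The obstacle I expect is exactly this self-bounding variance condition, $\mathrm{Var}(\ell_0^{\rm trunc})\lesssim M\,\E[\ell_0^{\rm trunc}]$. Because $\ell_0$ can be negative pointwise, the cleanest route is the identity
\[
\ell_0(x,\bz;s)=\int\!\!\E_{X_t|x}\big[(s-\bar s)\big(s+\bar s-2\sigma_t^{-2}(\alpha_t x-X_t)\big)\big]\frac{\dd t}{T-t_0}.
\]
Cauchy--Schwarz then bounds $\E[\ell_0^2]\lesssim M\cdot\E\!\int\E_{X_t|x}\abs{s-\bar s}^2$, which I would relate back to $\cR(s)$ plus truncation errors. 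If this proves delicate, I would fall back on the cruder symmetric bound $M\sqrt{\log\mathscr{N}/n}$. That fallback loses the fast rate, so I would first try the variance argument following \cite[Theorem 3.3]{chen2023score}.
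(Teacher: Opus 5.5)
Your proposal is correct and follows essentially the same route as the paper. You identify $\cR(s)$ with the population centered loss, compare $\hat s$ against the network $s^\star$ of Theorem~\ref{thm:score approx}, truncate the unbounded domain at radii $R_1,R_3\asymp\sqrt{\log N}$ to get the $\frac{\sqrt{\log N}}{t_0}N^{-\beta}$ term, and obtain the fast $\frac{\log N}{nt_0}\log\mathscr{N}(\delta)$ rate from an $L^\infty$ cover of the truncated centered losses plus the self-bounding condition $\mathrm{Var}(\ell_0)\lesssim \frac{\log N}{t_0}\cR(s)$, which the paper derives via the same difference-of-squares and Cauchy--Schwarz bound \eqref{eq:l-diff}; the only difference is cosmetic, namely that you use the direct offset $\cR(\hat s)-2\widehat{\cL}_0(\hat s)$ with Bernstein and a union bound, whereas the paper symmetrizes with a ghost sample and offset $a\cR(s)$ (taking $a=1/2$) following \cite{zhang2026diffusion}.
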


{ Theorem~\ref{thm:score est} bounds the score estimation error by four error sources: the first term is the truncation error from restricting the unbounded conditioning variable to a bounded domain; the second term is the score approximation error inherited from Theorem~\ref{thm:score approx}; the third term is the statistical error over a prescribed network class; and  the fourth term $\delta$ is the covering resolution of $\cG$. We emphasize that the analysis is {\it agnostic} to the choice of network class: any class satisfying Assumption~\ref{ass:net-class-1} can be substituted into Theorem~\ref{thm:score est}. Section \ref{sec:relu} instantiates the ReLU networks to obtain a concrete rate.}

\begin{proof}

    We follow the proof strategy as in \cite[Appendix C.1]{zhang2026diffusion} and only mention the differences here. With $\ell_0(\cdot; s)$ in \eqref{eq:centered loss} and the dataset $\cD$, we define centered empirical risk as
\begin{align*}
    \widehat{\cL}_0(s) \coloneqq \frac{1}{n}\sum_{i=1}^n \ell_0(x_i^h, \bx_i^{[k:h)}; s)
\end{align*}
Note that $\widehat{\cL}_0(s)$ and $\widehat{\cL}(s)$ share the same minimizer $\hat{s}$. One key observation is that population centered risk is exactly the risk we would like to minimize, i.e., 
\begin{align}
    \cL_0(s) \coloneqq \E_{\bX_0^{[k:h]}}\bracket{\ell_0(X_{0}^h, \bX_{0}^{[k:h)}, s)} = \cR(s).  \label{eq:L0-R}
\end{align}
We define truncated versions $\cL_0^{\rm trunc}$ and $\widehat{\cL}_0^{\rm trunc}$ using $\ell_0^{\rm trunc}(\cdot; s)$ accordingly. 

Consider the following decomposition
\begin{align*}
    \E_\cD[\cR(\hat{s})] = \underbrace{\E_\cD[\cR(s^\star)]}_{\rm (I)} + \underbrace{\E_\cD[\cR(\hat{s}) - \cR(s^\star)]}_{\rm (II)}.
\end{align*}
Here, $s^\star$ is the constructed score approximator given in Theorem \ref{thm:score approx} with approximation error:
\begin{align}
    {\rm (I)} \lesssim \frac{1}{T - t_0}\int_{t_0}^T \frac{B}{\sigma_t^4}N^{-\beta}(\log N)^{2 + \beta/2} \dd t \lesssim \frac{1}{t_0}N^{-\beta}(\log N)^{2+ \beta/2}, \label{eq:score-est-1}
\end{align}
where we apply Lemma D.1 in \cite{fu2024unveil} to bound the integral $\frac{1}{T-t_0}\int_{t_0}^T \sigma_t^{-4}\dd t = \cO(1/t_0)$. We proceed with the calculation as in \cite{zhang2026diffusion}. By applying the fact that $\hat{s}$ is the minimizer of $\widehat{\cL}_0$, we further bound ${\rm (II)}$ with
\begin{align*}
    {\rm (II)} & \leq \underbrace{\E_\cD[\cL_0(\hat{s}) - \widehat{\cL}_0(\hat{s})]}_{\rm (II-A)}.
\end{align*}

\paragraph{Bounding term (II-A).}
This part is an extension from a bounded domain to an unbounded domain with truncation argument. The key difference compared to \cite{zhang2026diffusion} is the truncation error and the covering number. Specifically, we first have
\begin{align}
    \text{(II-A)} \leq \underbrace{\E_{\cD, \bar{\cD}}\bracket{\sup_{s\in \cS}\frac{1}{n}\sum_{i=1}^n[\ell_0(\bar{x}_{i}^h, \bar{\bx}_{i}^{[k:h)}, s) - \ell_0(x_{i}^h, \bx_{i}^{[k:h)}, s)] - a\cR(s)}}_{(\spadesuit)} + a\E_\cD[\cR(\hat{s})], \label{eq:score-est-3}
\end{align}
where $a > 0$ is a parameter to control bias-variance trade-off commonly used in the literature. 
Here, $\bar{\cD} \coloneqq \curly{\bar{\bx}_{i}^{[k:h]}}_{i=1}^n$ is an independent copy of samples following the same distribution as $\cD$. We use the covering number to bound $(\spadesuit)$.  We construct a covering of $\cG$ in $L^\infty$-norm as follows. Let $\delta > 0$ be given. We select a collection of score networks $s_j, j = 1, \dots, \cN(\delta)$ such that for any $s \in \cS$ there is an index $j$ with $$\norm{\ell_0^{\rm trunc}(\cdot; s) - \ell_0^{\rm trunc}(\cdot; s_j)}_\infty = \norm{\ell^{\rm trunc}(\cdot; s) - \ell^{\rm trunc}(\cdot; s_j)}_\infty \leq \delta.$$ 
Recall that $B_{2}  = \bracket{\bv^\top \bz - R_3, \bv^\top \bz + R_3}$.
Note that for each $s \in \cS$ ($s$ can depend on $\bX_0^{[k:h]}$) we have
\begin{align*}
    & (T - t_0)\E_{\bX_0^{[k:h]}}\bracket{\abs{\ell(X_0^h, \bX_0^{[k:h)}; s) - \ell^{\rm trunc}(X_0^h, \bX_0^{[k:h)}; s)}} \\ & = \int_{t_0}^T \int_{\norm{\bz}_\infty > R_1}\int_{\R}\E_{X_t^h\vert X_0^h=y}\bracket{\abs{s(t, X_t^h, \bz) -  \frac{\alpha_t y - X_t^h}{\sigma_t^2}}^2}\phi_{\bz}^h(0, y)p^{[k:h)}(\bz)\dd y \dd \bz\dd t \\ & \qquad + \int_{t_0}^T \int_{\norm{\bz}_\infty \leq R_1}\int_{\R \setminus B_2}\E_{X_t^h\vert X_0^h=y}\bracket{\abs{s(t, X_t^h, \bz) -  \frac{\alpha_t y - X_t^h}{\sigma_t^2}}^2}\phi^h_{\bz}(0, y)p^{[k:h)}(\bz)\dd y \dd \bz\dd t\\ & \leq 2 \int_{t_0}^T \int_{\norm{\bz}_\infty > R_1}\int_{\R}\E_{X_t^h\vert X_0^h=y}\bracket{\abs{s(t, X_t^h, \bz)}^2 +  \abs{\frac{\alpha_t y - X_t^h}{\sigma_t^2}}^2}\phi^h_{\bz}(0, y)p^{[k:h)}(\bz)\dd y \dd \bz\dd t \\ & \qquad +  2 \int_{t_0}^T \int_{\norm{\bz}_\infty \leq R_1}\int_{\R \setminus B_2}\E_{X_t^h\vert X_0^h=y}\bracket{\abs{s(t, X_t^h, \bz)}^2 +  \abs{\frac{\alpha_t y - X_t^h}{\sigma_t^2}}^2}\phi^h_{\bz}(0, y)p^{[k:h)}(\bz)\dd y \dd \bz\dd t.
\end{align*}
Since $\phi_\bz^h(0, \cdot)$ and $p^{[k:h)}(\cdot)$ are both sub-Gaussian (Assumption \ref{ass:truncated-subGaussian} and Corolary \ref{coro:joint-subG}), we deduce that
\begin{align*}
    & (T - t_0)\E_{\bX_0^{[k:h]}}\bracket{\abs{\ell(X_0^h, \bX_0^{[k:h)}; s) - \ell^{\rm trunc}(X_0^h, \bX_0^{[k:h)}; s)}} \\ & \leq 2\int_{t_0}^T \int_{\norm{\bz}_\infty > R_1} \parenthesis{\frac{\log N}{\sigma_t^4} + \frac{1}{\sigma_t^2}} \exp(-C_6 \norm{\bz}^2/2)  \dd \bz\dd t \\ & \qquad +  2\int_{t_0}^T \int \int_{\abs{y - \bv^\top \bz} > R_3} \parenthesis{\frac{\log N}{\sigma_t^4} + \frac{1}{\sigma_t^2}} \exp(-C_2 \abs{y - \bv^\top \bz}^2/2) \dd y \dd \bz\dd t\\ & \leq 2(1+\log N) \big( \exp(-C_6 R_1^2/2)(R_1)^{-1} + \exp(-C_2 R_3^2/2)(R_3)^{-1} \big)\int_{t_0}^T\frac{1}{\sigma_t^4}\dd t 
\end{align*}
We take $R_1 \geq \sqrt{\frac{2\beta}{C_6}\log N}$ and $R_3 \geq \sqrt{\frac{2\beta}{C_2}\log N}$ to  deduce
\begin{align}
    \E_{\bX_0^{[k:h]}}\bracket{\abs{\ell(X_0^h, \bX_0^{[k:h)}; s) - \ell^{\rm trunc}(X_0^h, \bX_0^{[k:h)}; s)}} \lesssim \frac{\sqrt{\log N}}{t_0}N^{-\beta}. \label{eq:l-trunc-diff}
\end{align}

For any $s \in \cS^h$ and its close representation $s_j$ in terms of $\ell_0^{\rm trunc}$, it follow from \eqref{eq:l-trunc-diff} that
\begin{align*}
    & \E_{\bX_0^{[k:h]}}\bracket{\abs{\ell_0(X_0^h, \bX_0^{[k:h)}, s) - \ell_0(X_0^h, \bX_0^{[k:h)}, s_j)}}  = \E_{\bX_0^{[k:h]}}\bracket{\abs{\ell(X_0^h, \bX_0^{[k:h)}, s) - \ell(X_0^h, \bX_0^{[k:h)}, s_j)}} \\ & \leq \E_{\bX_0^{[k:h]}}\bracket{\abs{\ell(X_0^h, \bX_0^{[k:h)}, s) - \ell^{\rm trunc}(X_0^h, \bX_0^{[k:h)}, s)}}  + \E_{\bX_0^{[k:h]}}\bracket{\abs{\ell^{\rm trunc}(X_0^h, \bX_0^{[k:h)}, s) - \ell^{\rm trunc}(X_0^h, \bX_0^{[k:h)}, s_j)}} \\ & \qquad + \E_{\bX_0^{[k:h]}}\bracket{\abs{\ell^{\rm trunc}(X_0^h, \bX_0^{[k:h)}, s_j) - \ell(X_0^h, \bX_0^{[k:h)}, s_j)}} \\ & \lesssim \frac{2\sqrt{\log N}}{t_0}N^{-\beta} + \delta. 
\end{align*}
Furthermore, Eq.~\eqref{eq:L0-R} implies
\begin{align*}
    \abs{\cR(s) - \cR(s_j)} = \abs{\cL_0(s) - \cL_0(s_j)} & = \E_{\bX_0^{[k:h]}}\bracket{\abs{\ell_0(X_0^h, \bX_0^{[k:h)}, s) - \ell_0(X_0^h, \bX_0^{[k:h)}, s_j)}}  \\ &  \leq \frac{2\sqrt{\log N}}{t_0}N^{-\beta} + \delta.
\end{align*}
Consequently, Eq.~\eqref{eq:score-est-3} becomes
\begin{align*}
    (\spadesuit) \leq \E_{\cD, \bar{\cD}}\bracket{\max_{j}\frac{1}{n}\sum_{i=1}^n[\ell_0(\bar{X}_{0, i}^h, \bar{X}_{0, i}^{[k:h)}, s_j)- \ell_0(X_{0, i}^h, X_{0, i}^{[k:h)}, s_j)] - a\cR(s_j)} + (a+2)\parenthesis{\frac{2\sqrt{\log N}}{t_0}N^{-\beta} + \delta}.
\end{align*}

To proceed, we prove a modified version of Lemma C.1 in \cite{zhang2026diffusion} over an unbounded domain. Note that for any fixed $s \in \cS^h$, it holds that
\begin{align*}
      \ell(y, \bz; s) & \leq \frac{2}{T- t_0}\int_{t_0}^T\E_{X_t^h\vert X_0^h=y}\bracket{\abs{s(t, X_t^h, \bz)}^2 +  \abs{\frac{\alpha_t y - X_t^h}{\sigma_t^2}}^2}\dd t \\ & \leq \frac{2}{T- t_0}\int_{t_0}^T\E_{X_t^h\vert X_0^h=y}\bracket{\frac{\log N}{\sigma_t^4} +  \frac{1}{\sigma_t^2}}\dd t \\ & \leq \frac{2(1+ \log N)}{T-t_0}\int_{t_0}^T \frac{1}{\sigma_t^4}\dd t \leq \frac{4(1+ \log N)}{t_0},
\end{align*}
Similarly, we have
\begin{align}
    \abs{\ell(y, \bz; s) - \ell(y, \bz; \bar{s}) }^2 \leq \frac{16(1+\log N)}{t_0(T-t_0)}\int_{t_0}^T\E_{X_t^h\vert X_0^h=y}\bracket{\abs{(s(t, x, \bz) - \bar{s}(t, x, \bz))}^2}.\label{eq:l-diff}
\end{align}
Taking expectation over \eqref{eq:l-diff} to obtain
\begin{align*}
    & \E_{\bX_0^{[k:h]}}\bracket{\abs{\ell(X_0^h, \bX_0^{[k:h)}; s) - \ell(X_0^h, \bX_0^{[k:h)}; \bar{s}) }^2} \\ & \leq \frac{16(1+\log N)}{t_0(T-t_0)}\int_{t_0}^T \E\bracket{\abs{(s(t, x, \bz) - \bar{s}(t, x, \bz))}^2}\dd t = \frac{16(1 + \log N)}{t_0}\cR(s).
\end{align*}

Denote $h_i(s) = \ell_0(\bar{x}_{i}^h, \bar{\bx}_{i}^{[k:h)}, s_j) - \ell_0(x_{ i}^h, \bx_{i}^{[k:h)}, s_j)$ for each fixed $s \in \cS^h$. Let $C_\ell \coloneqq 16(1+\log N)/t_0$. We have
\begin{align*}
    {\rm Var}[h_i(s)] \leq C_\ell \cR(s), \;\; \text{and} \;\; \abs{h_i(s)} \leq C_\ell. 
\end{align*}
The analysis in \cite{zhang2026diffusion} implies
\begin{align*}
    (\spadesuit) \leq \frac{1}{\lambda}\log \mathscr{N}(\delta) + (a+2)\parenthesis{\frac{2\sqrt{\log N}}{t_0}N^{-\beta} + \delta}.
\end{align*}
with $\lambda = \frac{6an}{(2a+3)C_\ell}$. Hence, we apply Definition \ref{ass:net-class-2} to deduce
\begin{align}
    (\spadesuit) \lesssim \frac{(2a+3)C_\ell}{6an}\cdot \log \mathscr{N}(\delta) + (a+2)\parenthesis{\frac{2\sqrt{\log N}}{t_0}N^{-\beta} + \delta}. \label{eq:score-est-4}
\end{align}
Combining \eqref{eq:score-est-1}, \eqref{eq:score-est-3} and \eqref{eq:score-est-4}, we have
\begin{align*}
    \E_\cD[\cR(\hat{s})] - a\E_\cD[\cR(\hat{s})] & \lesssim \frac{1}{t_0}N^{-\beta}(\log N)^{2+ \beta/2} + \frac{(2a+3)C_\ell}{6an}\cdot \log \mathscr{N}(\delta) \\ & \qquad + (a+2)\parenthesis{\frac{2\sqrt{\log N}}{t_0}N^{-\beta} + \delta}. 
\end{align*}
We pick $a = 1/2$ and deduce that
\begin{align*}
    \E_\cD[\cR(\hat{s})] & \lesssim \frac{\sqrt{\log N}}{t_0}N^{-\beta} + \frac{1}{t_0}N^{-\beta}(\log N)^{2+ \beta/2}  + \frac{\log N}{nt_0}\cdot \log \mathscr{N}(\delta) + \delta. 
\end{align*}
Therefore, we complete the proof. 
\end{proof}

\subsection{Distribution Estimation}\label{sec:dist-est}
In this subsection, we apply the score estimation theory from the previous subsection to obtain one-step and joint distribution estimation theory. Recall $d_{\rm trunc} = h - k + 1$.

\begin{theorem}[Distribution estimation]\label{thm:dist est}
Let $h \in [H]$ and $k \in [h-1]$.  Assume that
\[
\E_{\bX_0^{[k:h)}}[{\rm KL}(\mu_0^h[\bX_0^{[k:h)}]\| \cN(0, 1))] < \infty .
\]
Under Assumptions \ref{assumption:dependence_decay}, \ref{ass:truncated-subGaussian}, \ref{ass:holder-re}, \ref{ass:net-class-1}, it holds that
\begin{align*}
    \E_{\cD} [\E_{\bX_0^{[1:h)}}[{\rm TV}(\mu_0^{h}[\bX_0^{[1:h)}], \hat{\mu}_{t_0}^{h}[\bX_0^{[1:h)}])]]  \lesssim \sqrt{t_0}(\log(1/t_0))^{\frac{d_{\rm trunc}}{2}} + \exp(-T) + \sqrt{T\E_{\cD}[\cR(\hat{s})]} + u(d_{\rm trunc}).
\end{align*}
\end{theorem}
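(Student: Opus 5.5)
The plan starts from the triangle inequality \eqref{eq:tv-triangle}. The truncation error is bounded directly by $u(d_{\rm trunc})$ via Assumption~\ref{assumption:dependence_decay}. All remaining work is on the learning error $\E_{\bX_0^{[k:h)}}[{\rm TV}(\mu_0^h[\bX_0^{[k:h)}],\hat\mu_{t_0}^h[\bX_0^{[k:h)}])]$. Here we use that $\hat{s}^h$ only sees the window $\bz=\bX_0^{[k:h)}$, so $\hat\mu^h_{t_0}[\bz]$ is the law at time $T-t_0$ of the one-dimensional SDE \eqref{eq:backward-approx} with $\bz$ frozen. Fixing $\bz$, we insert the intermediate law $\mu^h_{t_0}[\bz]$, the law of the forward process $X^h_{t_0}$ given $\bX_0^{[k:h)}=\bz$, i.e.\ the density $\phi^h_\bz(t_0,\cdot)$. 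This splits the learning error into three pieces:
\[
{\rm TV}(\mu^h_0[\bz],\mu^h_{t_0}[\bz]) + {\rm TV}(\mu^h_{t_0}[\bz],\hat\mu^h_{t_0}[\bz]).
\]
The second piece is further split into initialization mismatch and score error.

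\emph{Early-stopping term.} We write $X^h_{t_0}=\alpha_{t_0}X^h_0+\sigma_{t_0}\xi$ and compare $\phi^h_\bz(0,\cdot)$ with its Gaussian smoothing. Following the corresponding lemma in \cite{fu2024unveil}, we restrict to the region $\norm{\bz}_\infty\le R_1$, $|x-\bv^\top\bz|\le R_3$ with radii $\asymp\sqrt{\log(1/t_0)}$. On that box we use Hölder regularity (Assumption~\ref{ass:holder-re}) of the joint-in-$(x,\bz)$ density, which gives a per-point perturbation of order $\sqrt{t_0}$ after absorbing $1-\alpha_{t_0}\asymp t_0$. Integrating over the $d_{\rm trunc}$-dimensional box produces the volume factor $(\log(1/t_0))^{d_{\rm trunc}/2}$. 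The tails are handled by the sub-Gaussian bounds of Assumption~\ref{ass:truncated-subGaussian} and Corollary~\ref{coro:joint-subG}, with radii chosen so that the tail mass is $O(\sqrt{t_0})$. This gives the term $\sqrt{t_0}(\log(1/t_0))^{d_{\rm trunc}/2}$.

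\emph{Initialization and score terms.} Conditional on $\bz$, both the exact reverse process \eqref{eq:backward-h}, restricted to $[0,T-t_0]$, and the approximate process \eqref{eq:backward-approx} are one-dimensional diffusions with unit diffusion coefficient. We apply the data-processing inequality to path measures, then Girsanov, then Pinsker:
\[
{\rm TV}(\mu^h_{t_0}[\bz],\hat\mu^h_{t_0}[\bz])\le {\rm TV}(\phi^h_\bz(T,\cdot),\cN(0,1))+\sqrt{\tfrac12\int_{t_0}^T\E\bigl|\hat s^h(t,X^h_t,\bz)-\partial_x\log\phi^h_\bz(t,X^h_t)\bigr|^2\dd t}.
\]
The first term is bounded via exponential contraction of the OU semigroup in KL: ${\rm KL}(\phi^h_\bz(T,\cdot)\|\cN(0,1))\le e^{-T}{\rm KL}(\mu_0^h[\bz]\|\cN(0,1))$. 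Combined with Pinsker and the finite expected KL hypothesis, this gives a term of order $e^{-T/2}$. The stated $\exp(-T)$ should follow from a sharper TV estimate using sub-Gaussian moments, or it can be absorbed since $T=C_\alpha\log N$ is a tunable constant multiple. For the score part, we take expectation over $\bz\sim\mu_0^{[k:h)}$ and apply Jensen to move $\E_\bz$ inside the square root. The integrand is then exactly $(T-t_0)\cR(\hat s)$ from \eqref{eq:risk}. Finally we take $\E_\cD$ and apply Jensen again, giving $\sqrt{T\,\E_\cD[\cR(\hat s)]}$. The Girsanov step needs Novikov-type integrability. I would circumvent it by the standard approximation argument of \cite{chen2022sampling}, since the network output is bounded by Assumption~\ref{ass:net-class-1} and the true score has finite second moment.

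\emph{Main obstacle.} I expect the early-stopping bound to be the main difficulty. Its dimension-dependent log factor must be uniform in the unbounded conditioning variable $\bz$. This again requires the truncation in $\bz$ together with the recentering by $\bv^\top\bz$ from Assumption~\ref{ass:truncated-subGaussian}, mirroring the truncation step in the proof of Theorem~\ref{thm:score approx}. Summing the three contributions with $u(d_{\rm trunc})$ gives the claim.
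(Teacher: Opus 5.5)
Your decomposition is the paper's. It uses the triangle inequality \eqref{eq:tv-triangle} with Assumption~\ref{assumption:dependence_decay} and the intermediate law $\mu^h_{t_0}[\bz]$. The early-stopping bound comes from truncating to $\norm{\bz}_\infty\le R_1$ and to windows of width $\asymp\sqrt{\log(1/t_0)}$ around $\bv^\top\bz$; this is the paper's analogue of Lemma~D.5 in \cite{fu2024unveil}, with $\epsilon=t_0$ and Lipschitz continuity of $\phi^h_\bz(0,\cdot)$. The remaining term is handled by Girsanov, KL and Pinsker. The paper imports that last bound from \cite{fu2024unveil} after checking $\int\phi^h_\bz(t,x)\abs{s-\partial_x\log\phi^h_\bz(t,x)}^2\dd x\lesssim\sigma_t^{-4}$; you justify Girsanov via the approximation argument of \cite{chen2022sampling} instead. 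Those parts are fine.

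The step that does not deliver the statement as written is the initialization term, and neither of your proposed repairs works.
\begin{itemize}
\item KL contraction of the OU semigroup plus Pinsker gives $e^{-T/2}$. No sharper estimate of ${\rm TV}(\phi^h_\bz(T,\cdot),\cN(0,1))$ can give $e^{-T}$. If $\mu^h_0[\bz]=\cN(\bv^\top\bz,s^2)$, which satisfies all the paper's assumptions, then $\phi^h_\bz(T,\cdot)=\cN(e^{-T/2}\bv^\top\bz,\,1-e^{-T}+e^{-T}s^2)$. Its TV distance to $\cN(0,1)$ is of exact order $e^{-T/2}\abs{\bv^\top\bz}$ for large $T$ whenever $\bv^\top\bz\neq 0$. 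So any argument that passes through this initialization TV, as data processing does, is stuck at $e^{-T/2}$.
\item Re-tuning $T$ does not rescue the inequality either. In the theorem $T$ is a free parameter, and $e^{-T/2}$ is not $O(e^{-T})$.
\end{itemize}

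The paper's own proof does not close this gap: it defers to \cite{fu2024unveil} for this term without deriving the exponent, and the route it follows yields the same $e^{-T/2}$. What your argument (and the paper's) actually establishes is the bound with $e^{-T/2}$ in place of $\exp(-T)$, and you should state it that way rather than assert that $\exp(-T)$ should follow.

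This costs nothing downstream. In Corollary~\ref{thm:dist est-relu} take $T=\frac{\beta}{2(d_{\rm trunc}+\beta)}\log n$ instead. That changes $\sqrt{T\,\E_\cD[\cR(\hat s)]}$ only by a constant factor and leaves the rate $n^{-\frac{\beta}{4(d_{\rm trunc}+\beta)}}$ unchanged.
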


{ Theorem~\ref{thm:dist est} translates the score estimation result in Theorem~\ref{thm:score est} into a distribution estimation guarantee. The first term is the early-stopping error from terminating the sampling process at $T-t_0\in(0,T)$, and the second is the initialization error from starting the sampling process at the standard Gaussian rather than the true forward marginal at time $T$. The third term propagates the score estimation error through the sampling process, with $\E_{\cD}[\cR(\hat s)]$ controlled by Theorem~\ref{thm:score est}. The last term, $u(d_{\rm trunc})$, is the truncation error, controlled by Assumption~\ref{assumption:dependence_decay}.}

\begin{proof}
    The proof largely follows from \cite[Appendix D.3]{fu2024unveil}. We first check that condition (D.20) in \cite[Lemma D.4]{fu2024unveil} holds. Note that for any $s \in \cS^h$ and $\bz \in \R^{h - k}$, Assumption \ref{ass:net-class-1}, Lemmas \ref{lemma:tail-density-t} and \ref{lemma:score-bound} imply that
    \begin{align*}
        & \int \phi^h_{\bz}(t, x) \abs{s(t, x, \bz) - \partial_x \log\phi^h_{\bz}(t, x)}^2 \dd x  \\ & \lesssim  \int \phi^h_{\bz}(t, x)\frac{\abs{x - \alpha_t \bv^\top \bz}^2 + \log N}{\sigma_t^4} \dd x \\ & \lesssim \int \exp\parenthesis{-\frac{C_2'}{2}(x - \alpha_t \bv^\top \bz)^2} \frac{\abs{x - \alpha_t \bv^\top \bz}^2 + \log N}{\sigma_t^4} \dd x \\ & = \int \exp\parenthesis{-\frac{C_2'}{2}u^2} \frac{u^2 + \log N}{\sigma_t^4} \dd u \lesssim \frac{1}{\sigma_t^4}.
    \end{align*}
    
    We also prove an analog to \cite[Lemma D.5]{fu2024unveil} to bound the early stopping error. Let $\bz \in \R^{h-k}$ be given. Recall in Lemma \ref{lemma:clip integral} we define $B_2 = \bracket{\bv^\top \bz - R_3, \bv^\top \bz + R_3}$ and furthermore we denote $B_3 \coloneqq \bracket{\alpha_t \bv^\top \bz - R_6, \alpha_t \bv^\top \bz + R_6}$. Then we first decompose the TV distance as
    \begin{align}
        {\rm TV}(\mu_0^{h}[\bz], \mu_{t_0}^{h}[\bz]) & = \frac{1}{2}\int_{(B_2 \cup B_3)} \abs{\phi^h_{\bz}(0, x) - \phi^h_{\bz}(t, x)} \dd x \nonumber \\ & \qquad + \frac{1}{2}\int_{\R\setminus (B_2 \cup B_3)} \abs{\phi^h_{\bz}(0, x) - \phi^h_{\bz}(t, x)} \dd x. \label{eq:early stop decomp}
    \end{align}
    For the second term in \eqref{eq:early stop decomp}, Assumption \ref{ass:truncated-subGaussian} and Lemma \ref{lemma:E2-terms} imply
    \begin{align}
        \int_{\R\setminus (B_2 \cup B_3)} \abs{\phi^h_{\bz}(0, x) - \phi^h_{\bz}(t, x)} \dd x & \leq \int_{\R\setminus B_2 } \phi^h_{\bz}(0, x)  \dd x + \int_{\R\setminus B_3}  \phi^h_{\bz}(t, x) \dd x \nonumber\\ & \lesssim (R_3)^{-1}\exp(-C_2R_3^2/2) + (R_6)^{-1}\exp(-C_2'R_6^2/2) \lesssim \epsilon. \label{eq:tv-1}
    \end{align}
    as long as we take $R_3, R_6 \asymp \sqrt{\log(1/\epsilon)}$. 
    Take expectation of $\bX_0^{[k:h)}$ to have 
    \begin{align*}
        \E_{\bX_0^{[k:h)}}[{\rm TV}(\mu_0^{h}[\bX_0^{[k:h)}], \mu_{t_0}^{h}[\bX_0^{[k:h)}])] \lesssim \int \int_{(B_2 \cup B_3)} \abs{\phi^h_{\bz}(0, x) - \phi^h_{\bz}(t, x)} p^{[k:h)}( \bz) \dd x \dd \bz + \epsilon.
    \end{align*}
    We consider a region $B_4 \coloneqq \curly{\bz:\norm{\bz}_\infty \leq R_1}$. Then we have
    \begin{align}
        \int_{\R \setminus B_4} \int_{(B_2 \cup B_3)} \abs{\phi^h_{\bz}(0, x) - \phi^h_{\bz}(t, x)} p^{[k:h)}( \bz) \dd x \dd \bz \lesssim \epsilon\sqrt{\log(1/\epsilon)}, \label{eq:tv-4}
    \end{align}
    as long as we choose $R_1 \asymp \sqrt{\log(1/\epsilon)}$. 
    
    To bound the first term in \eqref{eq:early stop decomp}, we let $K_t(x, y) = \frac{1}{\sigma_t(2\pi)^{1/2}}\exp\big(-(2\sigma_t^2)^{-1}\abs{x - \alpha_t y}^2\big) $ be an auxiliary kernel. Recall $B_{1} = \bracket{\frac{x - \sigma_t R_2}{\alpha_t}, \frac{x + \sigma_t R_2}{\alpha_t}}$. Since $\int_\R K_t(x, y) \dd y = (\alpha_t)^{-1}$, we have
    \begin{align}
        \phi^h_{\bz}(0, x) - \phi^h_{\bz}(t, x) &  = \alpha_t\int_\R \phi^h_{\bz}(0, x) K_t(x, y) \dd y - \int_\R \phi^h_{\bz}(0, y) K_t(x, y) \dd y \nonumber \\ & = \int_\R (\phi^h_{\bz}(0, x) - \phi^h_{\bz}(0, y)) \alpha_t K_t(x, y) \dd y  + (\alpha_t - 1)\phi^h_{\bz}(t, x) \nonumber \\ & = \int_{B_1} (\phi^h_{\bz}(0, x) - \phi^h_{\bz}(0, y)) \alpha_t K_t(x, y) \dd y  + (\alpha_t - 1)\phi^h_{\bz}(t, x) \nonumber \\ & \qquad + \int_{\R\setminus B_1} (\phi^h_{\bz}(0, x) - \phi^h_{\bz}(0, y)) \alpha_t K_t(x, y) \dd y.  \label{eq:phi-0-diff}
    \end{align}
    Since $\phi^h_{\bz}(0, x) \leq C_1$ for any $x$ by Assumption \ref{ass:truncated-subGaussian}, we deduce that
    \begin{align*}
       \int_{\R\setminus B_1} \abs{\phi^h_{\bz}(0, x) - \phi^h_{\bz}(0, y)} \alpha_t K_t(x, y) \dd y \lesssim \int_{\abs{u} > R_2}\exp(-u^2/2)\dd u \lesssim (R_2)^{-1}\exp(-R_2^2/2). 
    \end{align*}
    We choose $R_2 \asymp \sqrt{\log(1/\epsilon)}$ to make the integral less than $\epsilon$. To handle the integral over $B_1$, we apply Assumption \ref{ass:holder-re} to obtain
    \begin{align*}
        \abs{\phi^h_{\bz}(0, x) - \phi^h_{\bz}(0, y)} \leq B\abs{x - y} \leq \frac{B}{\alpha_t}\abs{\alpha_t y - x} + \frac{B(1 - \alpha_t)}{\alpha_t}\abs{x} \leq \frac{B\sigma_t}{\alpha_t}R_2 + \frac{B(1 - \alpha_t)}{\alpha_t}\abs{x}.
    \end{align*}
    Thus, Eq.~\eqref{eq:phi-0-diff} becomes
    \begin{align*}
        \abs{\phi^h_{\bz}(0, x) - \phi^h_{\bz}(t, x)} \lesssim \frac{\sigma_t}{\alpha_t}R_2 + \frac{1-\alpha_t}{\alpha_t}\abs{x} + (\alpha_t - 1)\phi^h_{\bz}(t, x) + \epsilon.
    \end{align*}
    Then, integration $x$ over $B_2 \cup B_3$ gives us
    \begin{align}
        \int_{(B_2 \cup B_3)} \abs{\phi^h_{\bz}(0, x) - \phi^h_{\bz}(t, x)} \dd x & \lesssim \frac{\sigma_t}{\alpha_t}R_1 + \frac{1-\alpha_t}{\alpha_t}\int_{B_2 \cup B_3}\abs{x} \dd x + \int_{B_2 \cup B_3} (\alpha_t - 1)\phi^h_{\bz}(t, x) + \epsilon \dd x   \nonumber \\ & \lesssim \frac{\sigma_t}{\alpha_t}R_1 + \frac{1-\alpha_t}{\alpha_t}\int_{B_2 \cup B_3}\abs{x} \dd x + (\alpha_t - 1) + \epsilon\sqrt{\log(1/\epsilon)}. \label{eq:tv-2}
    \end{align}
    Consequently, the integration over $B_1$ with respect to the density $p^{[k:h)}(\cdot)$ leads to
    \begin{align}
        & \int_{B_4}\int_{(B_2 \cup B_3)} \abs{\phi^h_{\bz}(0, x) - \phi^h_{\bz}(t, x)} p^{[k:h)}(\bz) \dd x \dd \bz \nonumber   \\ & \lesssim \parenthesis{\frac{\sigma_t}{\alpha_t}R_1 + \frac{1-\alpha_t}{\alpha_t}\log(1/\epsilon) + (\alpha_t - 1) + \epsilon\sqrt{\log(1/\epsilon)}}(\sqrt{\log(1/\epsilon)})^{h-k},\label{eq:tv-3}
    \end{align}
    where we use the fact that $\abs{x} \lesssim \sqrt{\log(1/\epsilon)}$ over $B_4 \times (B_2\cup B_3)$ and $\abs{B_2\cup B_3} \lesssim \sqrt{\log(1/\epsilon)}$.  
    Combining \eqref{eq:tv-1}, \eqref{eq:tv-4}, and \eqref{eq:tv-3},  to obtain
    \begin{align*}
         & \E_{X_0^{[k:h)}}[{\rm TV}(\mu_0^{h}[\bX_0^{[k:h)}], \mu_{t_0}^{h}[\bX_0^{[k:h)}])] \\ & \lesssim \parenthesis{\frac{\sigma_t}{\alpha_t}\sqrt{\log(1/\epsilon)} + \frac{1-\alpha_t}{\alpha_t}\log(1/\epsilon) + (\alpha_t - 1) + \epsilon\sqrt{\log(1/\epsilon)}}(\sqrt{\log(1/\epsilon)})^{h-k} + \epsilon\sqrt{\log(1/\epsilon)} + \epsilon. 
    \end{align*} 
    Note that $\frac{\sigma_{t}}{\alpha_{t}} = \cO(\sqrt{t})$, $\frac{1-\alpha_t}{\alpha_t} = \cO(t)$, $\alpha_t - 1 = \cO(t)$ as $t \to 0$. We take $\epsilon = t_0$ to conclude
    \begin{align*}
        \E_{\bX_0^{[k:h)}}[{\rm TV}(\mu_0^{h}[\bX_0^{[k:h)}], \mu_{t_0}^{h}[\bX_0^{[k:h)}])] = \cO\parenthesis{\sqrt{t_0}(\log(1/t_0))^{\frac{d_{\rm trunc}}{2}}}.
    \end{align*}

    Finally, we apply the same analysis as in \cite{fu2024unveil} but replacing \cite[Theorem 4.1]{fu2024unveil} with Theorem \ref{thm:score est}. Thus, we obtain
    \begin{align*}
        \E_{\cD} [\E_{\bX_0^{[k:h)}}[{\rm TV}(\mu_0^{h}[\bX_0^{[k:h)}], \hat{\mu}_{t_0}^{h}[\bX_0^{[k:h)}])]] \lesssim \sqrt{t_0}(\log(1/t_0))^{\frac{h-k+1}{2}} + \exp(-T) + \sqrt{T\E_{\cD}[\cR(\hat{s})]},
    \end{align*}
    where we use the assumption that $\E_{\bX_0^{[k:h)}}[{\rm KL}(\mu_0^h[\bX_0^{[k:h)}]\| \cN(0, 1))] < \infty $. 
    Therefore, we complete the proof by applying \eqref{eq:tv-triangle}.
\end{proof}

\subsection{Example: ReLU Network}\label{sec:relu}
In this subsection, we specialize the function class to ReLU neural networks by leveraging existing results in the literature. This leads to explicit score estimation and distribution estimation guarantees.

Specifically, let $\cS^h$ be the ReLU network class defined as in \cite[Eq.~(2.7)]{fu2024unveil}. Consequently, Eq.~(A.15) in \cite{fu2024unveil} guarantees the existence of a ReLU network satisfying Assumption \ref{ass:net-class-1} and moreover, the logarithmic $\delta$-covering number is bounded with 
\begin{align}
    \log \cN(\delta) \lesssim N^{d_{\rm trunc}}\log^9 N(\log^8 N + \log^2 N\log \log N + \log (\delta^{-1})).  \label{eq:covering-relu}
\end{align}
With \eqref{eq:covering-relu}, we obtain concrete score estimation and distribution estimation results under the ReLU~parameterization.

\begin{corollary}[Score estimation with ReLU networks]\label{thm:score est-relu}
     Suppose Assumptions \ref{ass:truncated-subGaussian} and \ref{ass:holder-re} hold. By taking the network size parameter $N = n^{\frac{1}{d_{\rm trunc} + \beta}}$,  early stopping threshold $t_0 = N^{-C_\sigma} < 1$ and terminal horizon $T = C_\alpha\log n$, then it holds that
    \begin{align*}
        \E_{\cD}[\cR(\hat{s})] = \cO\Big(\frac{1}{t_0}n^{-\frac{\beta}{d_{\rm trunc} + \beta}} (\log n)^{\max\curly{2 + \beta/2, 18}}\Big).
    \end{align*}
\end{corollary}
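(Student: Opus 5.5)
The plan is to specialize Theorem~\ref{thm:score est} to the ReLU class and then balance the terms by choosing $N$ and $\delta$. First I check the hypotheses. By Eq.~(A.15) of \cite{fu2024unveil}, the ReLU class $\cS^h$ of \cite[Eq.~(2.7)]{fu2024unveil}, with size parameter $N$, contains a network satisfying Assumption~\ref{ass:net-class-1}. This holds on the box $[-R_1,R_1]\times[-R_1,R_1]^{h-k}\times[t_0,T]$ with $R_1\asymp\sqrt{\log N}$, the sup-norm bound $\sqrt{\log N}/\sigma_t^2$, and input dimension $d_{\rm trunc}$. Theorem~\ref{thm:score est} therefore applies with $T=C_\alpha\log n$. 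Since $N$ is polynomial in $n$, we have $\log n\asymp\log N$, and the requirement $T\asymp\log N$ from Theorem~\ref{thm:score approx} holds after adjusting $C_\alpha$. The theorem gives
\begin{align*}
\E_\cD[\cR(\hat s)]\lesssim \frac{\sqrt{\log N}}{t_0}N^{-\beta}+\frac{1}{t_0}N^{-\beta}(\log N)^{2+\beta/2}+\frac{\log N}{nt_0}\log\mathscr N(\delta)+\delta .
\end{align*}

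Next I bound the complexity term with \eqref{eq:covering-relu}, taking $\delta=n^{-1}$. Then $\log(\delta^{-1})=\log n\lesssim\log^8 N$, so
\[
\log\mathscr N(\delta)\lesssim N^{d_{\rm trunc}}\log^{17}N .
\]
The statistical term becomes $\frac{1}{nt_0}N^{d_{\rm trunc}}(\log N)^{18}$. With $N=n^{1/(d_{\rm trunc}+\beta)}$ we get $N^{d_{\rm trunc}}/n=n^{-\beta/(d_{\rm trunc}+\beta)}=N^{-\beta}$. The statistical term and the approximation term therefore share the polynomial factor $\frac{1}{t_0}n^{-\beta/(d_{\rm trunc}+\beta)}$, and their log factors are $(\log n)^{18}$ and $(\log n)^{2+\beta/2}$ respectively.

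Finally I absorb the remaining terms. The first (truncation) term carries $\sqrt{\log N}$, which is below $(\log n)^{\max\{2+\beta/2,18\}}$. The covering resolution satisfies $\delta=n^{-1}\le \frac{1}{t_0}n^{-\beta/(d_{\rm trunc}+\beta)}$, because $t_0<1$ and $\beta/(d_{\rm trunc}+\beta)<1$. Summing the four contributions gives the stated rate with log exponent $\max\{2+\beta/2,18\}$.

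The main obstacle is the bookkeeping of logarithmic powers, in particular the factor $\log(\delta^{-1})$ in \eqref{eq:covering-relu}. The choice of $\delta$ must be polynomially small, so that it is negligible against the rate, while still keeping $\log(\delta^{-1})$ of order at most $\log^8 N$. The choice $\delta=1/n$ achieves both, since $\log n=(d_{\rm trunc}+\beta)\log N$. I also need to check that the bound in Assumption~\ref{ass:net-class-1} for the ReLU construction of \cite{fu2024unveil} holds uniformly over $t\in[t_0,T]$ when $t_0=N^{-C_\sigma}$. This depends only on $C_\sigma$ and is provided by that construction.
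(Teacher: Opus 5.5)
Your proposal is correct and follows the same route as the paper. You substitute the ReLU covering bound into Theorem~\ref{thm:score est} and set $N = n^{1/(d_{\rm trunc}+\beta)}$, so that $N^{d_{\rm trunc}}/n = N^{-\beta}$; your explicit choice $\delta = n^{-1}$, which keeps $\log(\delta^{-1}) \lesssim \log^8 N$ (giving the exponent $18$) while making $\delta$ negligible against the rate, correctly fills in a detail the paper leaves implicit.
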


\begin{proof}
We substitute \eqref{eq:covering-relu} into Theorem \ref{thm:score est} and choose $N$, $t_0$ and $T$ as stated to obtain the result.

\end{proof}
    
\begin{corollary}
[Distribution estimation with ReLU networks]\label{thm:dist est-relu} Let $h \in [H]$ and $k \in [h-1]$ be given indices.  Assume that $\E_{\bX_0^{[k:h)}}[{\rm KL}(\mu_0^h[\bX_0^{[k:h)}]\| \cN(0, 1))] < \infty $. Under Assumptions \ref{assumption:dependence_decay} \ref{ass:truncated-subGaussian}, and \ref{ass:holder-re}, by taking the early-stopping time $t_0 = n^{-\frac{\beta}{2(d_{\rm trunc} + \beta)}}$ and terminal time $T = \frac{\beta}{4(d_{\rm trunc} + \beta)}\log n$, it holds that, 
\begin{align*}
    \E_{\cD} [\E_{\bX_0^{[1:h)}}[{\rm TV}(\mu_0^{h}[\bX_0^{[1:h)}], \hat{\mu}_{t_0}^{h}[\bX_0^{[1:h)}])]]  = \cO\parenthesis{n^{-\frac{\beta}{4(d_{\rm trunc} + \beta)}}(\log n)^{\max\curly{ d_{\rm trunc}, 3 + \beta/2, 19}/2} + u(d_{\rm trunc})}.
\end{align*}
\end{corollary}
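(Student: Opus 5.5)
The proof plugs the ReLU score-estimation rate of Corollary~\ref{thm:score est-relu} into the generic bound of Theorem~\ref{thm:dist est} and then chooses $t_0$ and $T$ so that the four error terms balance.

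\textbf{Step 1: the three inputs.}
The ReLU class $\cS^h$ of \cite[Eq.~(2.7)]{fu2024unveil} satisfies Assumption~\ref{ass:net-class-1}, by Eq.~(A.15) there. Hence Theorem~\ref{thm:dist est} applies and gives
\[
\E_{\cD}\bigl[\E_{\bX_0^{[1:h)}}[{\rm TV}]\bigr]\lesssim \sqrt{t_0}(\log(1/t_0))^{d_{\rm trunc}/2} + e^{-T} + \sqrt{T\,\E_\cD[\cR(\hat s)]} + u(d_{\rm trunc}).
\]
We set $N=n^{1/(d_{\rm trunc}+\beta)}$, as in Corollary~\ref{thm:score est-relu}. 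The covering bound \eqref{eq:covering-relu} is used with $\delta=n^{-1}$, which is negligible.

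The stated $t_0=n^{-\beta/(2(d_{\rm trunc}+\beta))}$ has the form $N^{-C_\sigma}$ with $C_\sigma=\beta/2$. Substituting it into Corollary~\ref{thm:score est-relu} (equivalently into Theorem~\ref{thm:score est}) gives
\[
\E_\cD[\cR(\hat s)]\lesssim t_0^{-1}n^{-\beta/(d_{\rm trunc}+\beta)}(\log n)^{\max\{2+\beta/2,18\}}=n^{-\beta/(2(d_{\rm trunc}+\beta))}(\log n)^{\max\{2+\beta/2,18\}}.
\]
Multiplying by $T\asymp\log n$ and taking a square root gives
\[
\sqrt{T\,\E_\cD[\cR(\hat s)]}\lesssim n^{-\beta/(4(d_{\rm trunc}+\beta))}(\log n)^{\max\{3+\beta/2,19\}/2}.
\]

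\textbf{Step 2: the early-stopping term.}
Since $\sqrt{t_0}=n^{-\beta/(4(d_{\rm trunc}+\beta))}$ and $\log(1/t_0)\asymp\log n$, this term is
\[
\sqrt{t_0}(\log(1/t_0))^{d_{\rm trunc}/2}=n^{-\beta/(4(d_{\rm trunc}+\beta))}(\log n)^{d_{\rm trunc}/2}.
\]

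\textbf{Step 3: the initialization term.}
With the stated $T$, the term $e^{-T}$ equals $n^{-\beta/(4(d_{\rm trunc}+\beta))}$. This is the same polynomial rate as the other two terms, without a logarithmic factor.

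\textbf{Step 4: collecting terms.}
The three nonnegligible terms share the rate $n^{-\beta/(4(d_{\rm trunc}+\beta))}$. Their logarithmic factors combine into $(\log n)^{\max\{d_{\rm trunc},3+\beta/2,19\}/2}$. Adding the truncation error $u(d_{\rm trunc})$, which enters through \eqref{eq:tv-triangle} and Assumption~\ref{assumption:dependence_decay}, gives the claim.

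\textbf{Main obstacle: consistency of $T$.}
The score-estimation statements were derived with $T=C_\alpha\log N$ or $C_\alpha\log n$. Their dependence on $T$ enters only through $\frac{1}{T-t_0}\int_{t_0}^T\sigma_t^{-4}\,\dd t=\cO(1/t_0)$, so the stated choice $T=\frac{\beta}{4(d_{\rm trunc}+\beta)}\log n$ is admissible: it is a constant multiple of $\log n$, i.e.\ $C_\alpha=\beta/(4(d_{\rm trunc}+\beta))$ in Corollary~\ref{thm:score est-relu}. The step to verify is that this small constant still satisfies whatever lower bound on $C_\alpha$ the cited approximation arguments of \cite{fu2024unveil} impose. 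If it does not, the fallback is to take $T$ as a larger multiple of $\log n$. Then $e^{-T}$ becomes negligible, and $T$ contributes only an extra $\sqrt{\log n}$, which is already absorbed in the exponent $\max\{3+\beta/2,19\}/2$.
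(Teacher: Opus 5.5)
Your proposal is correct and follows the paper's proof: it plugs Corollary~\ref{thm:score est-relu} into Theorem~\ref{thm:dist est} and substitutes the stated $t_0$ and $T$, so that the early-stopping, initialization and score-error terms all scale as $n^{-\frac{\beta}{4(d_{\rm trunc}+\beta)}}$ with the log factors you compute. Your extra remarks on choosing $\delta = n^{-1}$ and on the admissibility of the small constant $C_\alpha$ in $T$ address points the paper leaves implicit. Your fallback of taking a larger multiple of $\log n$ correctly covers the latter.
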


\begin{proof}
We combine Theorem \ref{thm:dist est} and Corollary \ref{thm:score est-relu} to have
\begin{align*}
     & \E_{\cD} [\E_{\bX_0^{[1:h)}}[{\rm TV}(\mu_0^{h}[\bX_0^{[1:h)}], \hat{\mu}_{t_0}^{h}[\bX_0^{[1:h)}])]] \\ & \lesssim \sqrt{t_0}(\log(1/t_0))^{\frac{d_{\rm trunc}}{2}} + \exp(-T) + \sqrt{\frac{T}{t_0}n^{-\frac{\beta}{d_{\rm trunc} + \beta}} (\log n)^{\max\curly{2 + \beta/2, 18}}} + u(d_{\rm trunc}).
\end{align*}
We take $T = \frac{\beta}{4(d_{\rm trunc} + \beta)}\log n$, $t_0 = n^{-\frac{\beta}{2(d_{\rm trunc} + \beta)}}$ to deduce that
    \begin{align*}
         & \E_{\cD} [\E_{\bX_0^{[1:h)}}[{\rm TV}(\mu_0^{h}[\bX_0^{[1:h)}], \hat{\mu}_{t_0}^{h}[\bX_0^{[1:h)}])]]   \\ & \lesssim n^{-\frac{\beta}{4(d_{\rm trunc} + \beta)}}(\log n)^{\frac{d_{\rm trunc}}{2}} + n^{-\frac{\beta}{4(d_{\rm trunc} + \beta)}} + \sqrt{\log n}\Big(\frac{1}{t_0}n^{-\frac{\beta}{d_{\rm trunc}+\beta}} (\log n)^{\max\curly{2 + \beta/2, 18}}\Big)^{1/2}  + u(d_{\rm trunc}) \\ & = \cO\parenthesis{n^{-\frac{\beta}{4(d_{\rm trunc} + \beta)}}(\log n)^{\max\curly{ d_{\rm trunc}, 3 + \beta/2, 19}/2} + u(d_{\rm trunc})}.
    \end{align*}
    Therefore, we complete the proof.

\end{proof}

As an intermediate consequence, we establish a joint distribution estimation result following from Corollary~\ref{thm:dist est-relu} and the chain rule \eqref{eq:tv-chain}.
\begin{corollary}
    Suppose the assumptions in Corollary \ref{thm:dist est-relu} hold for all $h \in [H]$, it holds that
    \begin{align*}
         \E_{\cD}[{\rm TV}(\mu_0^{[1:H]}, \hat{\mu}_{t_0}^{[1:H]})] = \cO\parenthesis{\sum_{h = 1}^H \Big[n^{-\frac{\beta}{4(d_{\rm trunc} + \beta)}}(\log n)^{\max\curly{ d_{\rm trunc}, 3 + \beta/2, 19}/2} + u(d_{\rm trunc})\Big] },
    \end{align*}
    where $d_{\rm trunc} = h - k(h) + 1$ and $k(h)$ is the optimal index chosen for each $h$. 
\end{corollary}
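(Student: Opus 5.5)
The joint bound follows by combining the chain rule \eqref{eq:tv-chain} with the per-coordinate guarantee of Corollary~\ref{thm:dist est-relu}. Linearity of expectation over the dataset $\cD$ does the rest. There is no new analytic content; the work is in checking that the chain rule applies to the law actually produced by Algorithm~\ref{algo:sample-generation}.

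\textbf{Step 1: the chain rule applies.} For a fixed dataset $\cD$ (hence fixed trained networks $\hat s^1,\dots,\hat s^H$), the joint law $\hat\mu_{t_0}^{[1:H]}$ of $\tilde\bX_{T-t_0}^{[1:H]}$ factorizes as
\[
\hat\mu^{1}_{t_0}(\dd x^1)\,\hat\mu^2_{t_0}[x^1](\dd x^2)\cdots\hat\mu^H_{t_0}[\bx^{[1:H)}](\dd x^H).
\]
This holds because each coordinate is produced by an independent backward SDE. That SDE is driven by fresh $\bar W^h$ and $\tilde X^h_0$, and its drift depends only on the previously generated realization. The true law $\mu_0^{[1:H]}$ disintegrates the same way.

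The standard telescoping coupling argument then gives \eqref{eq:tv-chain}. One replaces the kernels one at a time and bounds each swap by the expected TV of the $h$-th kernel under the true history law $\mu_0^{[1:h)}$.

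\textbf{Step 2: plug in the per-coordinate bound.} Take expectation over $\cD$ on both sides of \eqref{eq:tv-chain}. Linearity gives
\[
\E_\cD[{\rm TV}(\mu_0^{[1:H]},\hat\mu_{t_0}^{[1:H]})]\le \sum_{h=1}^H \E_\cD\E_{\bX_0^{[1:h)}}\bigl[{\rm TV}(\mu_0^h[\bX_0^{[1:h)}],\hat\mu_{t_0}^h[\bX_0^{[1:h)}])\bigr].
\]
For each $h$, choose the window $k=k(h)$ and apply Corollary~\ref{thm:dist est-relu}, using its prescribed $t_0,T$ with $d_{\rm trunc}=h-k(h)+1$. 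That corollary already absorbs the truncation error $u(d_{\rm trunc})$ via \eqref{eq:tv-triangle}.

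The case $h=1$ has no history. There the statement reduces to unconditional diffusion estimation, covered by the same argument with $u\equiv0$.

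Summing gives the claimed bound, with $k(h)$ chosen per $h$ to minimize its summand. Implicit constants are uniform over $h$ because $H$ is fixed and the constants $C_1^h,\dots$ are finite in number.

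\textbf{Main obstacle: coordinate-dependent $t_0$ and $T$.} Corollary~\ref{thm:dist est-relu} prescribes $t_0,T$ depending on $d_{\rm trunc}$, which may vary with $h$. Algorithm~\ref{algo:sample-generation} uses a common $t_0,T$.

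I would handle this by allowing coordinate-specific early-stopping and horizon parameters $(t_0^h,T^h)$ in Algorithm~\ref{algo:sample-generation}. This is harmless, since each coordinate runs its own SDE, and it is the natural reading of ``$k(h)$ optimal''. Alternatively, one can argue that the sum is dominated by the largest $d_{\rm trunc}$. A common choice tuned to $\max_h d_{\rm trunc}$ then yields the same order for the total.

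A secondary point is that the chain rule requires the dataset to be independent of the fresh sampling noise. This holds by the independence assumptions stated in Proposition~\ref{prop:scalar-learned-model}.
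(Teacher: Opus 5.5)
Your proposal is correct and follows the paper's (essentially one-line) argument: apply the chain rule \eqref{eq:tv-chain} for each fixed dataset, take $\E_\cD$, and sum the per-coordinate bounds of Corollary~\ref{thm:dist est-relu} with $k=k(h)$. Your extra remarks on the $h=1$ case, and on reconciling coordinate-dependent $(t_0,T)$ with the common parameters of Algorithm~\ref{algo:sample-generation}, address points the paper leaves implicit, and either of your fixes works (tuning to $\max_h d_{\rm trunc}$ loses only a factor of $H$).
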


\paragraph{Selection of $d_{\rm trunc}$.}
We now discuss how to choose the truncation length $d_{\rm trunc}$ to balance the two competing terms in Corollary~\ref{thm:dist est-relu}: the statistical learning error, which grows with $d_{\rm trunc}$, and the truncation error $u(d_{\rm trunc})$, which decreases with $d_{\rm trunc}$. We state the resulting rates for two canonical decay regimes.

\begin{proposition}[Exponential decay]\label{cor:exp-decay}
Suppose there are constants $C_u, \kappa > 0$ such that
\[
u(d) \leq C_u\parenthesis{e^{-\kappa d} - e^{-\kappa h}},
\qquad 1 \leq d \leq h.
\]
If we  take
\begin{align}\label{eq:dtrunc-exp}
    d_{\rm trunc}
    &=
    \min\left\{
    h,
    \left\lceil
    \Delta_n
    \right\rceil
    \right\}, \; \text{with} \;\Delta_n \coloneqq \frac{-\beta+\sqrt{\beta^2+(\beta/\kappa)\log n}}{2},
\end{align}
then under the assumptions of Corollary~\ref{thm:dist est-relu}, it holds that
\begin{align*}
    \E_{\cD} \left[\E_{\bX_0^{[1:h)}}\left[{\rm TV}\left(\mu_0^{h}[\bX_0^{[1:h)}], \hat{\mu}_{t_0}^{h}[\bX_0^{[1:h)}]\right)\right]\right]  = 
    \begin{cases}
        \cO\parenthesis{(\log n)^{\max\curly{d_{\rm trunc}, 3 + \beta/2, 19}/2}e^{-\kappa d_{\rm trunc}}}, & h>\Delta_n  \\ 
        \cO\parenthesis{n^{-\frac{\beta}{4(h + \beta)}}(\log n)^{\max\curly{ h, 3 + \beta/2, 19}/2}}, & h \leq \Delta_n.
    \end{cases}
\end{align*}
\end{proposition}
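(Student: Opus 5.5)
The plan is to substitute the chosen $d_{\rm trunc}$ from \eqref{eq:dtrunc-exp} into the bound of Corollary~\ref{thm:dist est-relu}. That bound reads
\[
\cO\bigl(n^{-\frac{\beta}{4(d_{\rm trunc}+\beta)}}(\log n)^{\max\{d_{\rm trunc},3+\beta/2,19\}/2} + u(d_{\rm trunc})\bigr),
\]
and I will split into the two cases $h\le \Delta_n$ and $h>\Delta_n$. The defining property of $\Delta_n$ drives everything. It is the positive root of $\kappa\Delta(\Delta+\beta) = \tfrac{\beta}{4}\log n$, because
\[
\Delta^2+\beta\Delta-\tfrac{\beta}{4\kappa}\log n=0
\]
has positive root $\frac{-\beta+\sqrt{\beta^2+(\beta/\kappa)\log n}}{2}$. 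Equivalently, $d=\Delta_n$ is exactly where the statistical exponent matches the truncation exponent:
\[
\frac{\beta}{4(d+\beta)}\log n = \kappa d .
\]

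\textbf{Case $h\le \Delta_n$.} Here $\lceil\Delta_n\rceil\ge h$, so $d_{\rm trunc}=h$, which means $k=1$ and the full history is used. The truncation error then vanishes: the hypothesis gives $u(h)\le C_u(e^{-\kappa h}-e^{-\kappa h})=0$. Equivalently, when $k=1$ the truncation term in \eqref{eq:tv-triangle} is identically zero, since $\mu_0^h[\bX_0^{[1:h)}]$ is compared with itself. The claimed bound $n^{-\beta/(4(h+\beta))}(\log n)^{\max\{h,3+\beta/2,19\}/2}$ follows directly. One technicality is that Corollary~\ref{thm:dist est-relu} is stated for $k\in[h-1]$. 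I would note that the learning-error analysis (Theorems~\ref{thm:score est}, \ref{thm:dist est}) goes through verbatim with $k=1$ and an empty truncation term.

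\textbf{Case $h>\Delta_n$.} Here $d_{\rm trunc}=\lceil\Delta_n\rceil\in[\Delta_n,\Delta_n+1)$, and I bound the two terms separately.
\begin{itemize}
\item The truncation term satisfies $u(d_{\rm trunc})\le C_u e^{-\kappa d_{\rm trunc}}$.
\item For the statistical term I use the identity at $\Delta_n$. Since $d_{\rm trunc}\le\Delta_n+1$,
\[
n^{-\frac{\beta}{4(d_{\rm trunc}+\beta)}}\le n^{-\frac{\beta}{4(\Delta_n+1+\beta)}} = \exp\Bigl(-\kappa\Delta_n\cdot\tfrac{\Delta_n+\beta}{\Delta_n+\beta+1}\Bigr).
\]
This must be compared with $e^{-\kappa d_{\rm trunc}}\ge e^{-\kappa(\Delta_n+1)}$. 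The ratio of the two is $\exp\bigl(\kappa + \kappa\Delta_n/(\Delta_n+\beta+1)\bigr)\le e^{2\kappa}$, a constant.
\end{itemize}
Hence the statistical term is $\cO\bigl((\log n)^{\max\{d_{\rm trunc},3+\beta/2,19\}/2}e^{-\kappa d_{\rm trunc}}\bigr)$. The truncation term is dominated by the same expression, since the log factor is at least $1$ for $n\ge 3$. Summing the two gives the first branch.

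The main obstacle is this ceiling/rounding step. One must make sure the loss from rounding $\Delta_n$ up is only a multiplicative constant and not an $n$-dependent factor. The algebra above shows the discrepancy in exponents is $\kappa\Delta_n/(\Delta_n+\beta+1)+\kappa\le 2\kappa$, uniformly in $n$, so it is absorbed into $\cO(\cdot)$. The remaining work is routine: check that $\Delta_n>0$ for $n>1$, and check that the constants hidden in Corollary~\ref{thm:dist est-relu} do not depend on $d_{\rm trunc}$ in a way that conflicts with letting $d_{\rm trunc}$ grow with $n$. On the second point, I would simply inherit the corollary's convention, with the dimension dependence carried in the log exponent.
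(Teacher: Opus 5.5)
Your proposal is correct and follows the paper's proof: both rest on the identity $n^{-\beta/(4(\Delta_n+\beta))}=e^{-\kappa\Delta_n}$ and on the same ceiling computation that yields the factor $e^{2\kappa}$. The paper additionally proves the reverse inequality $e^{-\kappa d_{\rm trunc}}-e^{-\kappa h}\le n^{-\beta/(4(d_{\rm trunc}+\beta))}$ to show the two error terms are balanced, which the stated $\cO$ bound does not need; also, your $k\in[h-1]$ caveat is unnecessary, since $d_{\rm trunc}=h$ corresponds to $k=1$, and $1\in[h-1]$ for every $h\ge 2$.
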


{ For a fixed horizon $h$, Proposition~\ref{cor:exp-decay} provides a truncation rule when $u(\cdot)$ follows an exponential decay. For timestamps such that $h>\Delta_n$, the rule selects a truncated history length of order $\Theta(\sqrt{\log n})$. Since the information from a small dataset is limited, truncating to the most recent timestamps reduces the input dimension and thus improving the statistical learning error. As $n$ grows, the rule increases $d_{\rm trunc}$ until $d_{\rm trunc}=h$, at which point the bound reduces to the statistical learning error with dimension $h$. }

\begin{proof}
By definition, $\Delta_n$ satisfies the quadratic equation
\begin{align}
    \frac{\beta\log n}{4(\Delta_n+\beta)} = \kappa \Delta_n. \label{eq:quadratic}
\end{align}
Taking exponentials on both sides gives us
\begin{align*}
    n^{-\frac{\beta}{4(\Delta_n+\beta)}} = e^{-\kappa \Delta_n}.
\end{align*}
When $d_{\rm trunc} < h$, we have $d_{\rm trunc} = \left\lceil\Delta_n\right\rceil$ and thus $ \Delta_n \leq d_{\rm trunc} \leq \Delta_n + 1 $. Consequently, the following bounds hold
\begin{align}
    e^{-\kappa d_{\rm trunc}}\leq e^{-\kappa \Delta_n} = n^{-\frac{\beta}{4(\Delta_n+\beta)}} \leq n^{-\frac{\beta}{4(d_{\rm trunc}+\beta)}},\label{eq:exp-lower}
\end{align}
and \eqref{eq:quadratic} implies
\begin{align}
    n^{-\frac{\beta}{4(d_{\rm trunc}+\beta)}} & \leq n^{-\frac{\beta}{4(\Delta_n + 1 +\beta)}} = \exp\curly{-\frac{\beta \log n}{4(\Delta_n + 1 + \beta)}} = e^{-\kappa d_{\rm trunc}}\exp\curly{\kappa d_{\rm trunc}-\kappa \Delta_n\frac{\Delta_n + \beta}{\Delta_n + 1 + \beta}} \nonumber \\ & \leq e^{-\kappa d_{\rm trunc}}\exp\curly{\kappa (\Delta_n + 1)-\kappa \Delta_n\frac{\Delta_n + \beta}{\Delta_n + 1 + \beta}} = e^{-\kappa d_{\rm trunc}}\exp\curly{\kappa\parenthesis{1 + \frac{\Delta_n}{\Delta_n + 1 + \beta}}} 
    \nonumber\\ & \leq e^{2\kappa}e^{-\kappa d_{\rm trunc}}. \label{eq:exp-upper}
\end{align}
Moreover, it holds that
\begin{align}
    e^{-\kappa d_{\rm trunc}} \geq e^{-\kappa d_{\rm trunc}} - e^{-\kappa h} = e^{-\kappa d_{\rm trunc}}(1 - e^{-\kappa(h-d_{\rm trunc})}) \geq (1 - e^{-\kappa})e^{-\kappa d_{\rm trunc}}.  \label{eq:exp}
\end{align}
Therefore, we combine \eqref{eq:exp-lower}, \eqref{eq:exp-upper} and \eqref{eq:exp} to conclude that 
\begin{align*}
     e^{-\kappa d_{\rm trunc}} - e^{-\kappa h} \leq n^{-\frac{\beta}{4(d_{\rm trunc}+\beta)}} \leq \frac{e^{2\kappa}}{1 - e^{-\kappa}} (e^{-\kappa d_{\rm trunc}} - e^{-\kappa h}).
\end{align*}
This completes the proof together with $ d_{\rm trunc} = h $ case.

\end{proof}

Invoke the Lambert W function $W:(0,\infty)\to(0,\infty)$ as the inverse of the mapping $x\mapsto xe^x$, i.e., $W(z)e^{W(z)}=z$ for all $z>0$. We have the following result for the polynomial decay case.

\begin{proposition}[Polynomial decay]\label{cor:poly-decay}
Suppose there are constants $C_u, \kappa > 0$ such that
\[
u(d) \leq C_u\parenthesis{d^{-\kappa} - h^{-\kappa}},
\qquad 1 \leq d \leq h.
\]
If we take
\begin{align}\label{eq:dtrunc-poly}
    d_{\rm trunc} = \min\curly{h, \left\lceil \Delta_n' \right\rceil}, \;\text{with}\; \Delta_n' \coloneqq \frac{{\beta \log n}/{4\kappa}}{W({\beta \log n}/{4\kappa})},
\end{align}
then under the assumptions of Corollary~\ref{thm:dist est-relu}, it holds that
\begin{align*}
    \E_{\cD} \left[\E_{\bX_0^{[1:h)}}\left[{\rm TV}\left(\mu_0^{h}[\bX_0^{[1:h)}], \hat{\mu}_{t_0}^{h}[\bX_0^{[1:h)}]\right)\right]\right] = 
    \begin{cases}
        \cO\parenthesis{(\log n)^{\max\curly{d_{\rm trunc}, 3 + \beta/2, 19}/2}\parenthesis{d_{\rm trunc}}^{-\kappa}}, & h>\Delta_n' \\ 
        \cO\parenthesis{n^{-\frac{\beta}{4(h + \beta)}}(\log n)^{\max\curly{ h, 3 + \beta/2, 19}/2}}, & h\leq \Delta_n'.
    \end{cases}
\end{align*}
\end{proposition}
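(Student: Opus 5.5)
The plan is to mirror the proof of Proposition~\ref{cor:exp-decay}: balance the statistical term $n^{-\frac{\beta}{4(d+\beta)}}$ from Corollary~\ref{thm:dist est-relu} against the truncation term $u(d)\le C_u(d^{-\kappa}-h^{-\kappa})$. The first step is to characterize $\Delta_n'$ as an approximate balance point. Writing $L\coloneqq \beta\log n/(4\kappa)$, the defining relation $W(L)e^{W(L)}=L$ gives $\Delta_n' = L/W(L) = e^{W(L)}$. Hence
\begin{align*}
\kappa\log\Delta_n' = \kappa W(L) = \frac{\kappa L}{\Delta_n'} = \frac{\beta\log n}{4\Delta_n'},
\end{align*}
that is, $n^{-\frac{\beta}{4\Delta_n'}} = (\Delta_n')^{-\kappa}$. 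This is the polynomial analogue of \eqref{eq:quadratic}. It differs only in that the exponent has $\Delta_n'$ rather than $\Delta_n'+\beta$ in the denominator, and the resulting mismatch is handled in the next step.

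Next, in the case $h>\Delta_n'$ we have $d_{\rm trunc}=\lceil\Delta_n'\rceil$, so $\Delta_n'\le d_{\rm trunc}\le\Delta_n'+1$. The goal is to show $n^{-\frac{\beta}{4(d_{\rm trunc}+\beta)}}\lesssim (\log n)^{c}\,d_{\rm trunc}^{-\kappa}$ for a harmless power $c$, which can be absorbed into the log factor already in the stated bound. Write
\begin{align*}
\frac{\beta\log n}{4(d_{\rm trunc}+\beta)} = \kappa\log\Delta_n'\cdot\frac{\Delta_n'}{d_{\rm trunc}+\beta}.
\end{align*}
This exceeds $\kappa\log\Delta_n' - \kappa(1+\beta)\frac{\log\Delta_n'}{\Delta_n'+1+\beta}$, and the subtracted piece is uniformly bounded since $\log x/x$ is bounded. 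Exponentiating then gives
\begin{align*}
n^{-\frac{\beta}{4(d_{\rm trunc}+\beta)}}\lesssim (\Delta_n')^{-\kappa}\lesssim d_{\rm trunc}^{-\kappa},
\end{align*}
where the last step uses $d_{\rm trunc}\le 2\Delta_n'$ once $\Delta_n'\ge1$. So, unlike what one might fear, no extra log power is actually needed; this is the step that differs most from the exponential case, and I expect it to be the main thing to get right. The truncation term is trivially bounded by $C_u d_{\rm trunc}^{-\kappa}$. Plugging both into Corollary~\ref{thm:dist est-relu} gives the first case.

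In the case $h\le\Delta_n'$ we have $d_{\rm trunc}=h$, which corresponds to $k=1$. Then $u(h)\le C_u(h^{-\kappa}-h^{-\kappa})=0$, so there is no truncation error. Corollary~\ref{thm:dist est-relu} with $d_{\rm trunc}=h$ then directly yields the second case.

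Strictly, $k\in[h-1]$ in Corollary~\ref{thm:dist est-relu}. In the case $d_{\rm trunc}=h$ one therefore argues directly from Theorem~\ref{thm:dist est} with the full history, where the truncation error is identically zero. This is the same convention used in the $d_{\rm trunc}=h$ case of Proposition~\ref{cor:exp-decay}.
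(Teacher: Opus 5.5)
Your proposal is correct and follows essentially the same route as the paper: both use $\Delta_n'=e^{W(L)}$, so that $\Delta_n'\log\Delta_n'=L=\beta\log n/(4\kappa)$, then bound $d_{\rm trunc}\le\Delta_n'+1$ and use $\log x/x\le 1/e$ to get $n^{-\frac{\beta}{4(d_{\rm trunc}+\beta)}}\le e^{\kappa(1+\beta)/e}(\Delta_n')^{-\kappa}$; you also make explicit the step $(\Delta_n')^{-\kappa}\le 2^{\kappa}d_{\rm trunc}^{-\kappa}$, which the paper leaves implicit. Your closing caveat is unnecessary: $k=1\in[h-1]$ whenever $h\ge 2$, so Corollary~\ref{thm:dist est-relu} already covers the case $d_{\rm trunc}=h$ directly.
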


{ For polynomial decay, it holds that
$W(\beta \log n/4\kappa)=\Theta(\log\log n)$ in \eqref{eq:dtrunc-poly}. Consequently, the truncation rule selects a history length of order
$\Theta(\log n/\log\log n)$ before it reaches the horizon $h$.
Compared with the exponential-decay case, polynomial decay requires a larger truncation window, since the impact from the remote history decays more slowly.}

\begin{proof}
Denote $A_n \coloneqq \beta\log n /4\kappa$. Then we have $\Delta_n'=A_n/W(A_n)$.
The definition of the function $W$ gives
\begin{align}\label{eq:poly-W-balance}
    \Delta_n' = \frac{W(A_n)e^{W(A_n)}}{W(A_n)} = e^{W(A_n)}, \;\text{and} \;\Delta_n' \log \Delta_n' = e^{W(A_n)}W(A_n) = A_n.
\end{align}
Consider the case when $d_{\rm trunc} < h$. Then we have $ d_{\rm trunc} = \lceil \Delta_n' \rceil \leq \Delta_n' + 1 $. It follows from \eqref{eq:poly-W-balance} that
\begin{align}
    n^{-\frac{\beta}{4(d_{\rm trunc}+\beta)}}
    &= \exp\curly{-\kappa\frac{A_n}{d_{\rm trunc}+\beta}} 
    \leq \exp\curly{-\kappa\frac{A_n}{\Delta_n'+1+\beta}} \nonumber\\
    &= \exp\curly{-\kappa\frac{\Delta_n'\log \Delta_n'}{\Delta_n'+1+\beta}}
    = (\Delta_n')^{-\kappa}\exp\curly{\kappa\frac{(1+\beta)\log \Delta_n'}{\Delta_n'+1+\beta}} \nonumber\\
    &\leq \exp\curly{\frac{\kappa(1 + \beta)}{e}} (\Delta_n')^{-\kappa}, \label
    {eq:poly-stat-bound}
\end{align}
where we use the fact that $ \frac{\log x}{x + 1 + \beta} \leq \frac{\log x}{x} \leq 1/e $ for all $x \geq 1 $. Moreover, since $d_{\rm trunc}=\left\lceil \Delta_n'\right\rceil\geq \Delta_n'$, we have
\begin{align}
    d_{\rm trunc}^{-\kappa}-h^{-\kappa} \leq d_{\rm trunc}^{-\kappa}\leq (\Delta_n')^{-\kappa}. \label{eq:poly-trunc-bound}
\end{align}
This completes the proof together with $ d_{\rm trunc} = h $ case. 

\end{proof}

\section{Empirical Analysis}
In this section, we numerically evaluate the performance of our \mbox{\method{}~framework} proposed in Section \ref{sec:model} through extensive experiments. We first test the model on synthetic data generated by ARMA models and Gaussian processes. Specifically, we evaluate its effectiveness numerically in terms of recovering both the ground-truth distribution and the underlying temporal structures. Next, we apply our \mbox{\method{}~framework} to real-world data and evaluate its performance in constructing mean-variance optimal portfolios. %
{ In implementation, we use a single transformer to parameterize the score functions across all timestamps. The causal mask ensures that the score at each timestamp depends only on the past history, while the objective \eqref{eq:dsm-final} allows us to train the coordinatewise scores {\it in parallel}.}

\subsection{Synthetic Data}
In this subsection, we evaluate the \mbox{\method{}~framework} for learning ARMA models and Gaussian processes, two benchmark synthetic data generators.

\paragraph{ARMA models.}
We first train an \mbox{\method{}~model} on data generated by an ARMA(2, 2)  with autoregressive coefficients  $(\varphi_1, \varphi_2) = (0.9, -0.01)$ and moving average coefficients $(\theta_1, \theta_2) = (0.1, 0.1)$. The diffusion models are trained with a fixed sequence length $ H = 500$ and a variety of sample sizes $n \in \curly{2^8, 2^9, 2^{10}, 2^{11}}$. For each trained model, we generate 1024 samples for evaluation.

We assess the accuracy of the learned temporal dependence through the lens of autocorrelation function (ACF). Let $\lambda_i$ denote the ground-truth ACF of the ARMA model, and $\curly{\lambda_i^{\rm Diff}}_{i=1}^K$ and $\curly{\lambda_i^{\rm Emp}}_{i=1}^K$ denote the top-$K$ ACF lags  calculated from diffusion-generated samples and empirical training samples, respectively. We introduce and then compare the following $\ell_1$ relative errors in ACF between the diffusion model generated and the empirical training samples,
\begin{align*}
    {\rm Diff~RE}_1 \coloneqq \frac{\sum_{i=1}^K\abs{\lambda_i^{\rm Diff} - \lambda_i}}{\sum_{i=1}^K\abs{ \lambda_i}}, \quad {\rm Emp~RE}_1 \coloneqq \frac{\sum_{i=1}^K\abs{\lambda_i^{\rm Emp} - \lambda_i}}{\sum_{i=1}^K\abs{ \lambda_i}}.
\end{align*}

Table \ref{tab: arma} reports the errors in estimating top-40 lags of an ARMA model, with each experiment repeated five times to ensure robustness.  Figure \ref{fig:arma size} highlights the advantage of diffusion-generated samples in estimating ACFs across different sample sizes. Notably, as shown in Figure \ref{fig:arma size} (b), the diffusion samples (green) perfectly recover ground-truth ACFs (blue) for lags beyond 30 while empirical training samples recover the long-lag ACFs less accurately.  This capacity is particularly important for financial applications as high-quality historical data is often scarce, making precise ACF estimation a significant challenge.

\begin{figure}[t]
    \centering
    \begin{subfigure}[b]{0.46\textwidth}
        \centering
        \includegraphics[width=\textwidth]{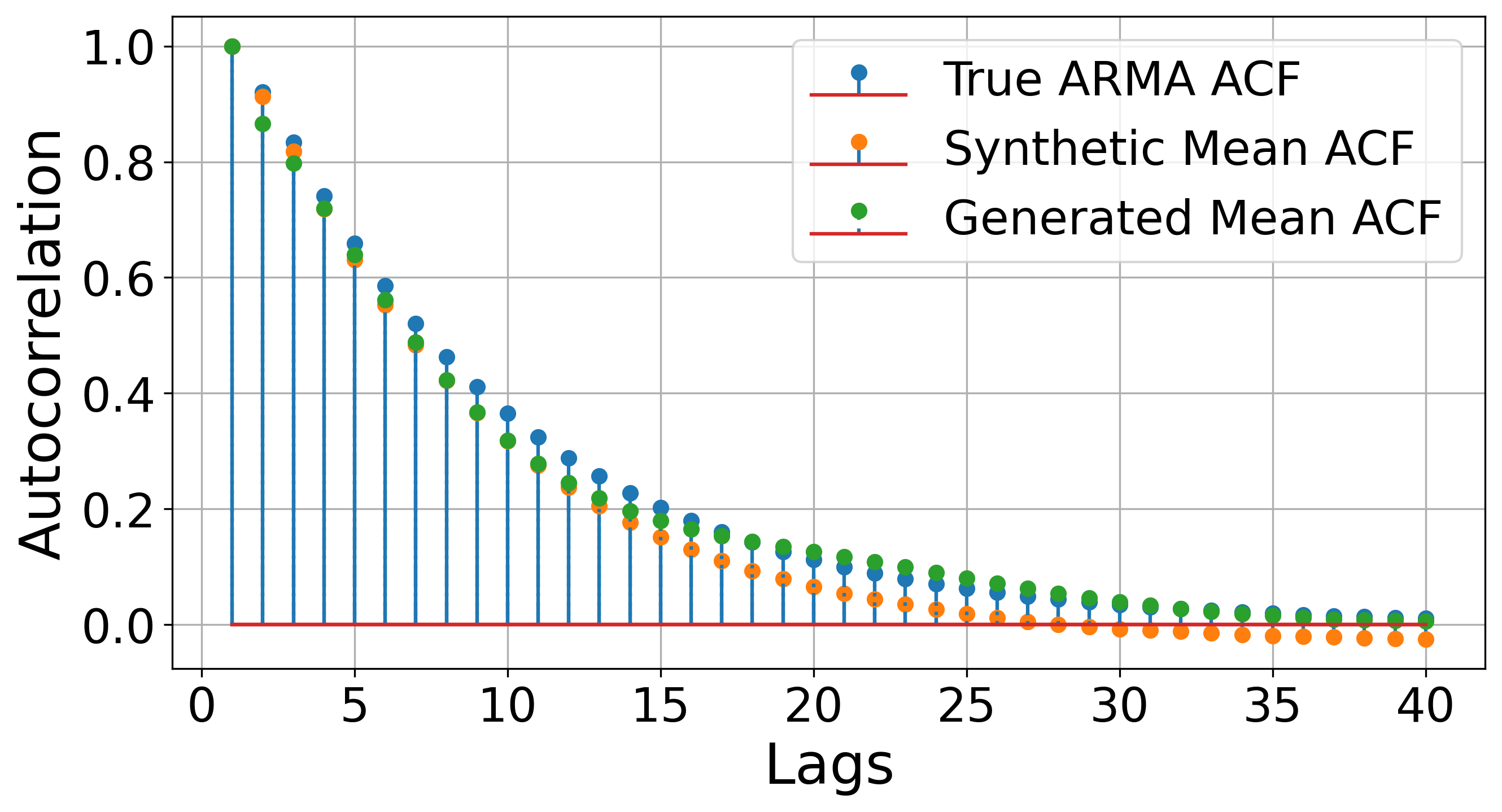}
        \caption{$n = 256$}
        \label{fig:sub1}
    \end{subfigure}
    \hfill
    \begin{subfigure}[b]{0.46\textwidth}
        \centering
        \includegraphics[width=\textwidth]{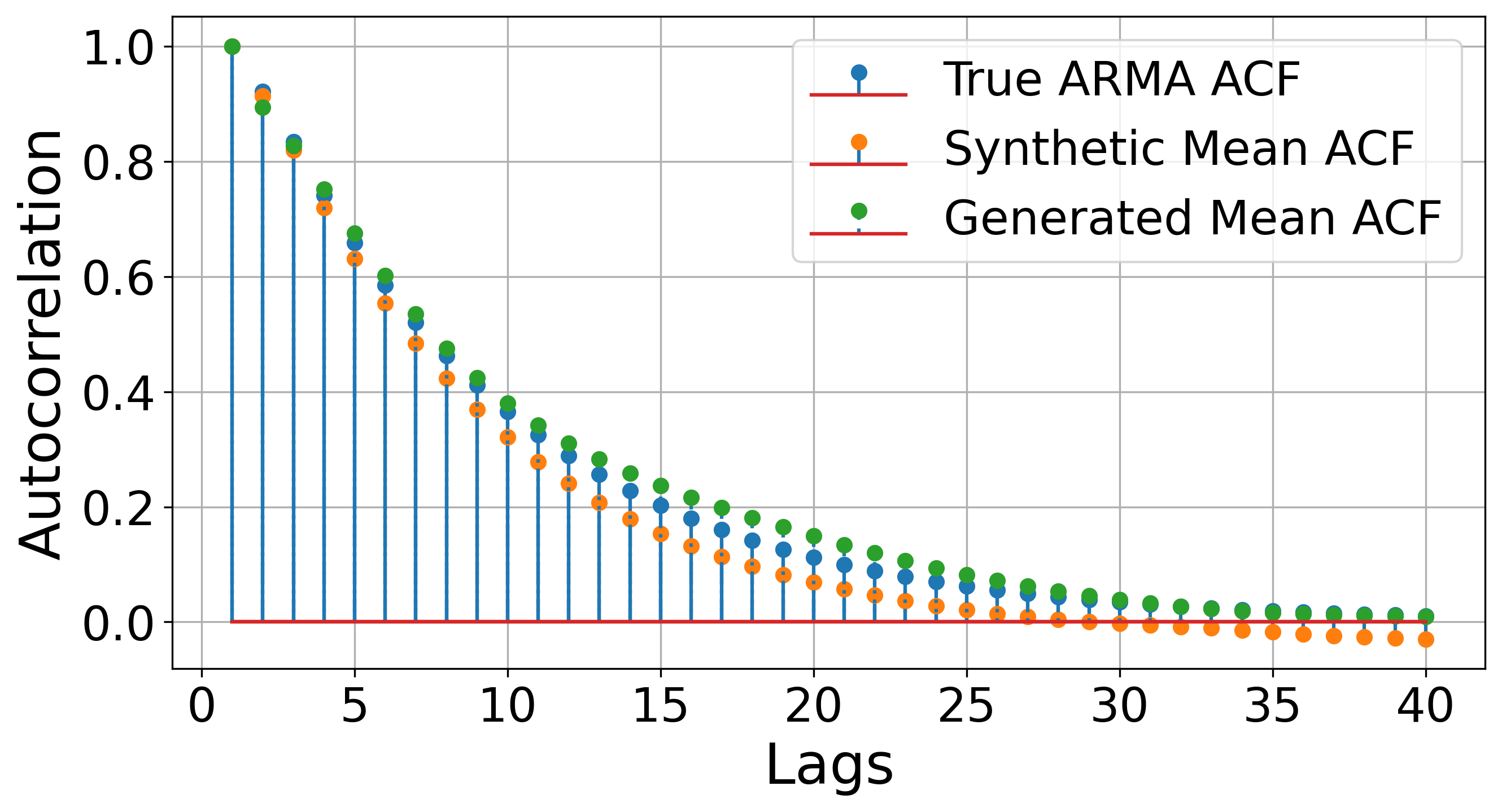}
        \caption{$n = 512$}
        \label{fig:sub2}
    \end{subfigure}

    \vspace{-2pt}
    \begin{subfigure}[b]{0.46\textwidth}
        \centering
        \includegraphics[width=\textwidth]{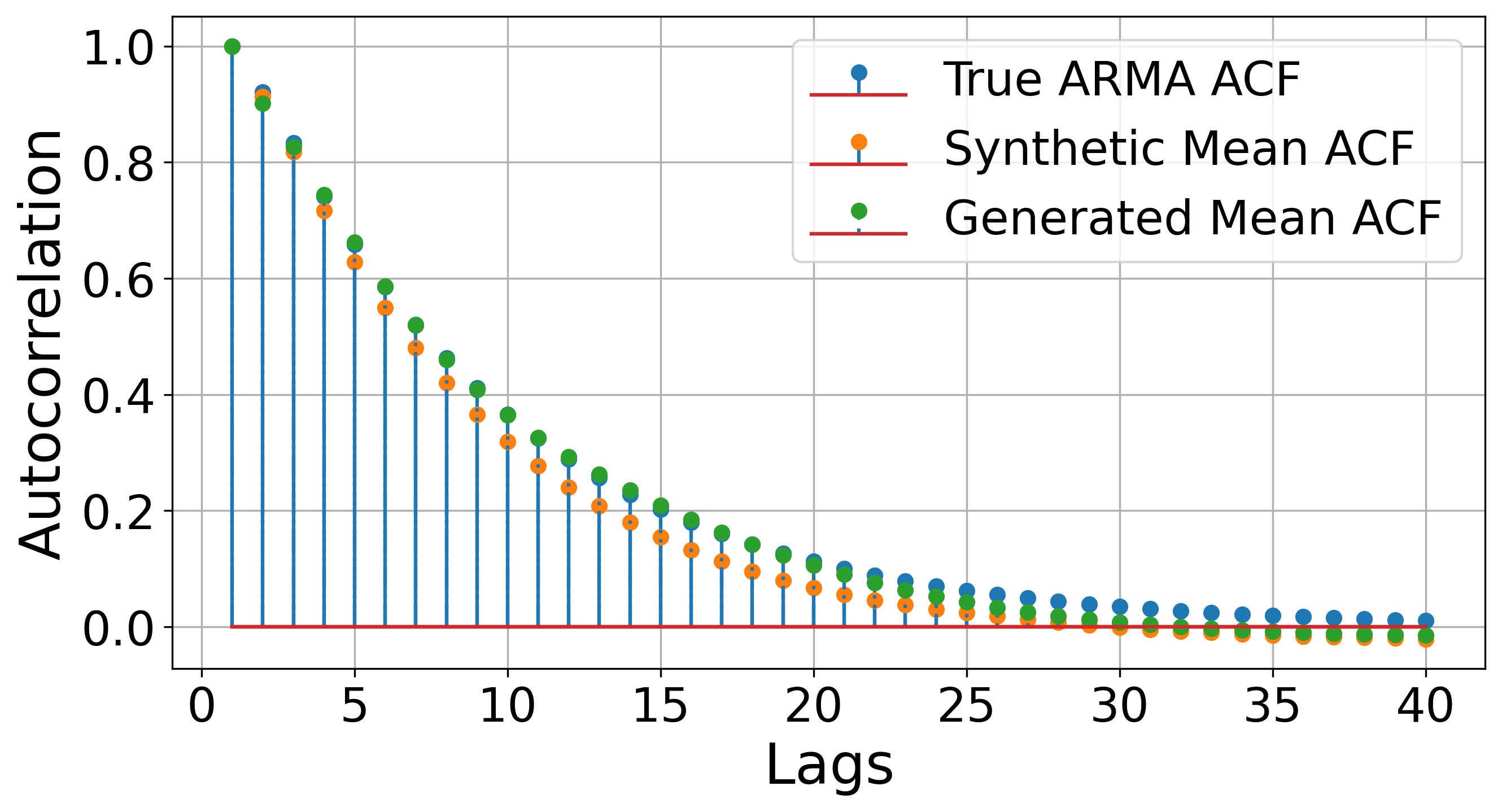}
        \caption{$n = 1024$}
        \label{fig:sub3}
    \end{subfigure}
    \hfill
    \begin{subfigure}[b]{0.46\textwidth}
        \centering
        \includegraphics[width=\textwidth]{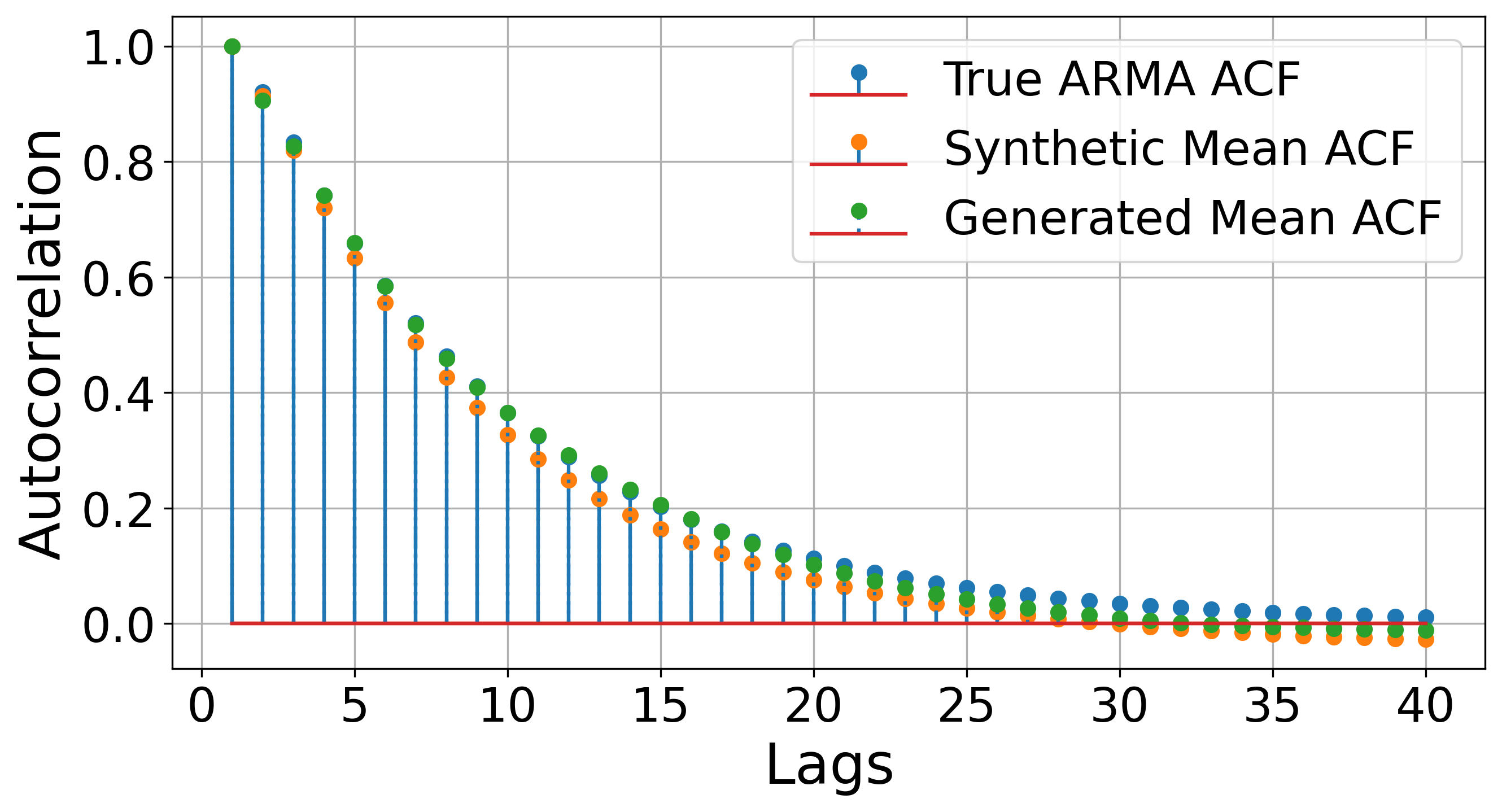}
        \caption{$n = 2048$}
        \label{fig:sub4}
    \end{subfigure}

    \vspace{-2pt}
    \caption{Estimated ACFs with varying training dataset size.}
    \label{fig:arma size}
\end{figure}

\begin{table}[htbp]
    \centering
    \small
    {
    \begin{tabularx}{0.8\textwidth}{l *{3}{>{\centering\arraybackslash}X}}
    \toprule
    \multicolumn{3}{c}{\textbf{ARMA Models with Varying Train Dataset Size ($H = 500$)}} \\
    \midrule
    $n$ & \textbf{Diff RE$_1$} & \textbf{Emp RE$_1$} \\
    \midrule
    $2^{8}=256$  & 0.1171 ($\pm$ 0.0266) & 0.1431 ($\pm$ 0.0309)  \\
    $2^{9}=512$  & 0.1308 ($\pm$ 0.0484) & 0.1528 ($\pm$ 0.0123) \\
    $2^{10}=1024$ & 0.1074 ($\pm$ 0.0373) & 0.1503 ($\pm$ 0.0067) \\
    $2^{11}=2048$ & 0.1172 ($\pm$ 0.1446) & 0.1487 ($\pm$ 0.0045) \\
    \bottomrule
    \end{tabularx}
    }
    \caption{Relative error of the estimated top-40 ACFs of ARMA models for varying training dataset size (standard deviations in parentheses).}
    \label{tab: arma}
\end{table}

\paragraph{Gaussian processes.}
Next, we evaluate the performance of \mbox{\method{}~model} for synthetic Gaussian process (GP)
sequence %
of length $H = 32$. Specifically,  following the experimental setup in \cite{fu2024diffusion}, we define the target distribution as $\bX_0 = (X_0^1, \dots, X_0^H)^\top \sim \cN(\bmu, \bSigma)$. The mean vector is set to zero, $\bmu = [\mu_1, \dots, \mu_H]^\top = \bzero$ and the covariance matrix is defined as $\bSigma = \sigma^2\bGamma$ where $\sigma^2 > 0$ and $\bGamma \in \R^{H \times H}$. The structure of the covariance is determined by the kernel function $\gamma(\cdot)$ such that the ($i, j$)-th entry of $\bGamma$ is $\Gamma_{ij} = \gamma(i, j)$. Similar to \cite{fu2024diffusion}, an exponential kernel function is chosen, i.e.,  $\gamma(i, j) = \exp(-\abs{i - j}^\nu/\ell)$, with hyperparameters $\ell = 4$ and $\nu = 0.8$. 

To evaluate the model, we use $n = 2048$ samples from the ground-truth GP for training. After training, we generate another batch of $2048$ samples to estimate the empirical covariance. As illustrated in Figure \ref{fig:GP}, the covariance structure of the diffusion-generated samples successfully recovers the characteristic decay pattern of the ground truth. This alignment demonstrates that our sequential diffusion models are capable of accurately capturing the joint distribution and the underlying temporal correlations of the sequential data.

\begin{figure}[htbp]
    \centering
    \includegraphics[width=\linewidth]{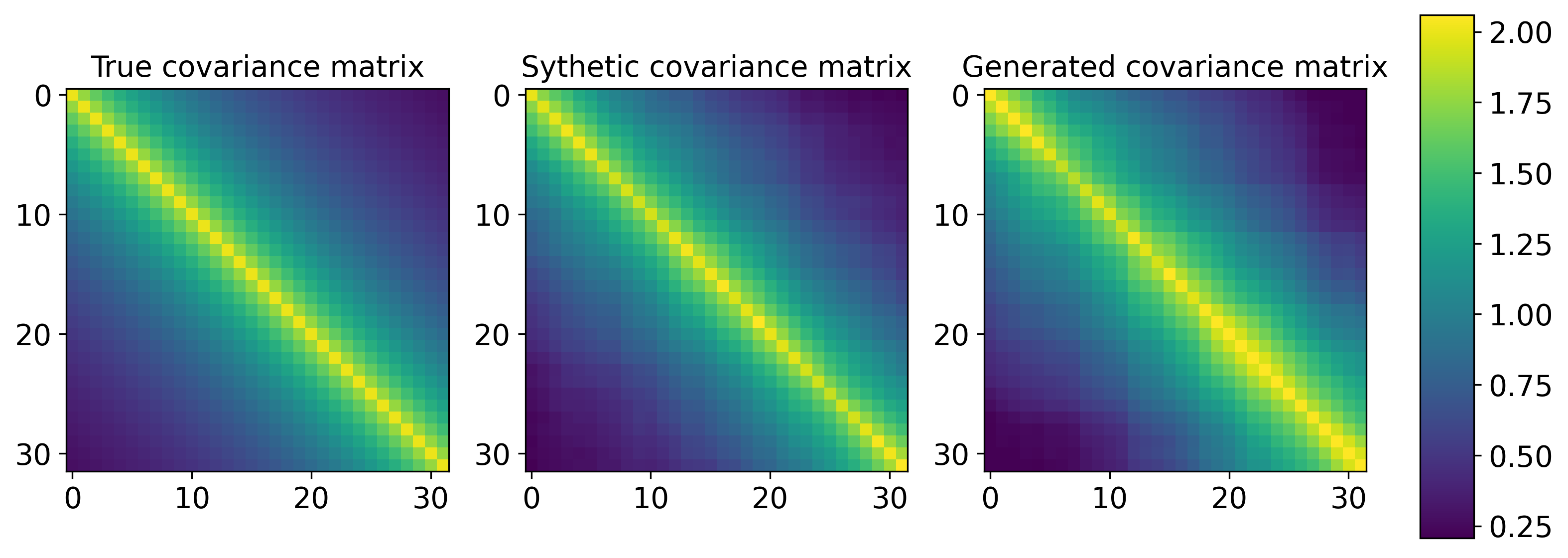}
    \caption{Left: ground-truth covariance matrix used for training data generation; Middle: covariance of training dataset; Right: covariance of samples generated by our diffusion models. Sequential diffusion model learns spatio-temporal~dependence.}
    \label{fig:GP}
\end{figure}

\subsection{Mean-Variance Portfolio Optimization} \label{sec:mean-var}
In this subsection, we study a mean-variance portfolio optimization problem which was studied by \citet{gao2025data} and \citet{wang2020continuous}. Specifically, an agent invests
in the S\&P 500 index and a risk-free asset with interest rate $r = 2\%$ for half a year with $H = 128$ trading days. The problem is to find a self-financing strategy $\btheta = \curly{\theta^h}_{h=1}^H$ where $\theta^h \in [0, 1]$ denotes the proportion of wealth invested in the risky asset at time $h$ %
that minimizes the variance of terminal wealth $X^{H, \theta}$ for a given target wealth level $z$, i.e.,
\begin{align}
    \min_{\btheta}\;\; {\rm Var}(X^{H, \theta}) \;\; {\rm s.t.}\;\; \E[X^{H, \theta}] = z. \label{eq:mean-var}
\end{align}

Following \cite{gao2025data}, we focus on two approaches:
\begin{enumerate}[1.]
    \item  \textbf{Plug-in policy:} A classical method that estimates coefficients of a GBM model and plugs in the estimators into the expression of the optimal deterministic policy of \eqref{eq:mean-var}. 
    \item \textbf{RL policy:} A model-free reinforcement learning (RL) approach proposed in \cite{jia2023q} which requires a market data simulator. 
\end{enumerate}
Furthermore, we consider four data sources for obtaining each policy:
\begin{enumerate}[1.]
    \item \textbf{Split:} We split daily returns of S\&P 500 data from 1990-2009 into 40 half-year trajectories (called split paths). These paths are used to train our sequential diffusion models.
    \item \textbf{Split + Synthetic:} In addition to 40 split paths, we augment 40 synthetic paths generated by the approach in \cite{gao2025data} and develop investment policies.
    \item \textbf{Split + Ours:} Compared to \textbf{Split + Synthetic}, we replace the synthetic paths with 40 trajectories generated by our sequential diffusion model. 
    \item \textbf{Bootstrap:} For RL policies, we also construct investment polices using 40 bootstrap paths as in \cite{jia2023q}.
\end{enumerate}

\paragraph{Main results.} We test all the above policies on S\&P 500 daily return data from 2010 to 2019 with bootstrap. As shown in Table \ref{table: qL_mv}, including diffusion-generated samples do not improve plug-in policies. This is aligned with the observations in \cite{gao2025data} as the data paths do not necessarily satisfy the GBM assumption. On the other hand, similar to \textbf{Split + Synthetic}, our diffusion-generated samples also significantly improve the performance of RL policies. Notably, our approach achieves the highest Sharpe ratio and outperforms the method in \cite{gao2025data}. This observation consolidates the belief that our diffusion models learn the underlying temporal structure of financial data in an adapted sequential manner. We also remark that the S\&P 500 daily return has strongly Markovian behaviors with only short-range dependence on the past. This explains why our method performs comparably to \cite{gao2025data} in this setting. However, for financial time series exhibiting longer historical dependence and more complex temporal structure, we expect our adapted sequential diffusion approach will demonstrate clearer advantages.

\begin{table}[H]
    \centering
    \small{
    \begin{tabular}{c | c | l || c c c}
        \toprule
        Target Level & Policy & Training Paths & Mean & Variance & Sharpe Ratio $\uparrow$ \\
        \hline
        \multirow{6}{*}{$z = 1.10$} & \multirow{3}{*}{Plug-in} & Split & 1.1759 ($\pm$ 0.0025) & 0.1045 ($\pm$ 0.0024) & 0.5442 ($\pm$ 0.0081)\\
        && Split + Synthetic & 1.1138 ($\pm$ 0.0037) & 0.0399 ($\pm$ 0.0025) & 0.5702 ($\pm$ 0.0083)\\ 
        && Split + Ours & 1.1685 ($\pm$ 0.0024) & 0.0952 ($\pm$ 0.0022) & 0.5462 ($\pm$ 0.0081)\\
        \cline{2-6}
        &\multirow{4}{*}{RL}  & Split & 1.0932 ($\pm$ 0.0016) & 0.0113 ($\pm$ 0.0011) & 0.8779 ($\pm$ 0.0394)\\
        && Bootstrap  & 1.0934 ($\pm$ 0.0010) & 0.0105 ($\pm$ 0.0010) & 0.9121 ($\pm$ 0.0450) \\
        && Split + Synthetic & 1.0853 ($\pm$ 0.0014) & 0.0074 ($\pm$ 0.0010) & 0.9930 ($\pm$ 0.0586) \\
        && Split + Ours & 1.0732 ($\pm$ 0.0008) & 0.0054 ($\pm$ 0.0006) & \textbf{0.9968} ($\pm$ 0.0589) \\
        \hline
        \multirow{7}{*}{$z = 1.20$} & \multirow{3}{*}{Plug-in} & Split & 1.3517 ($\pm$ 0.0050) & 0.4179 ($\pm$ 0.0094) & 0.5442 ($\pm$ 0.0081)\\
        && Split + Synthetic & 1.2262 ($\pm$ 0.0066) & 0.1572 ($\pm$ 0.0097) & 0.5708 ($\pm$ 0.0087)\\
        && Split + Ours & 1.3370 ($\pm$ 0.0048) & 0.3807 ($\pm$ 0.0087) & 0.5462 ($\pm$ 0.0081)\\
        \cline{2-6}
        &\multirow{4}{*}{RL} & Split & 1.1861 ($\pm$ 0.0032) & 0.0460 ($\pm$ 0.0045) & 0.8695 ($\pm$ 0.0376)\\
        && Bootstrap & 1.1865 ($\pm$ 0.0016) & 0.0425 ($\pm$ 0.0043) & 0.9068 ($\pm$ 0.0435)  \\
        && Split + Synthetic & 1.1706 ($\pm$ 0.0028) & 0.0302 ($\pm$ 0.0040) & 0.9866 ($\pm$ 0.0574)\\
        && Split + Ours & 1.1464 ($\pm$ 0.0016) & 0.0218 ($\pm$ 0.0025) & \textbf{0.9965} ($\pm$ 0.0591)\\
        \bottomrule
    \end{tabular}}
    \caption{Performance of different portfolio strategies  with and without synthetic paths (standard deviations in parentheses).}
    \label{table: qL_mv}
\end{table}

\paragraph{Acknowledgment.} RX was partially supported by an NSF CAREER Award DMS-2614933 and a gift fund from Point72.

\bibliographystyle{unsrtnat} %
\IfFileExists{diffusion_ot.bib}{%
  \bibliography{diffusion_ot}%
}{%
  \bibliography{../diffusion_ot}%
} %

\newpage
\appendix
\section{Omitted Proofs in Section \ref{sec:main results}}
\label{sec:proofs}

We devote this section to a few omitted proofs in Section \ref{sec:main results}.

\begin{proposition}[Multivariate Gaussian]\label{lemma:gaussian-truncated-subG}
    If $\bX_0\sim\cN(\bmu,\bSigma)$ with positive definite covariance matrix $\bSigma$, then Assumption \ref{ass:truncated-subGaussian} holds.
\end{proposition}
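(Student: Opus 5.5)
The plan is to use the explicit Gaussian conditioning formulas. If $\bX_0\sim\cN(\bmu,\bSigma)$ with $\bSigma\succ 0$, then every subvector $\bX_0^{[k:h]}$ is Gaussian with a principal submatrix of $\bSigma$ as covariance, and this submatrix is again positive definite. For fixed $h$ and $k\in[h-1]$, partition the covariance of $(\bX_0^{[k:h)},X_0^h)$ into a block $\bSigma_{zz}$, a cross-covariance vector $\bSigma_{zx}$, and a variance $\Sigma_{xx}$. The conditional law of $X_0^h$ given $\bX_0^{[k:h)}=\bz$ is then
\[
\cN\bigl(\mu_h+\bSigma_{zx}^\top\bSigma_{zz}^{-1}(\bz-\bmu^{[k:h)}),\; s_h^2\bigr),
\qquad s_h^2=\Sigma_{xx}-\bSigma_{zx}^\top\bSigma_{zz}^{-1}\bSigma_{zx}.
\]
The variance $s_h^2$ is the Schur complement of a positive definite matrix, so $s_h^2>0$. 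The case $h=1$ is simply the marginal $\cN(\mu_1,\Sigma_{11})$, whose density is at most $(2\pi\Sigma_{11})^{-1/2}\exp(-|x-\mu_1|^2/(2\Sigma_{11}))$.

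The main obstacle is that the conditional mean is affine, $\bv^\top\bz+c$ with $\bv=\bSigma_{zz}^{-1}\bSigma_{zx}$, while Assumption \ref{ass:truncated-subGaussian} requires a bound centered at the purely linear function $\bv^\top\bz$ (and at $0$ when $h=1$). I handle the constant shift with the elementary inequality $|a-c|^2\ge \tfrac12|a|^2-|c|^2$. Applying it with $a=x-\bv^\top\bz$ gives
\[
\phi^h_\bz(0,x)\le (2\pi s_h^2)^{-1/2}e^{c^2/(2s_h^2)}\exp\bigl(-|x-\bv^\top\bz|^2/(4s_h^2)\bigr).
\]
This has exactly the required form with $C_1^h=(2\pi s_h^2)^{-1/2}e^{c^2/(2s_h^2)}$ and $C_2^h=1/(2s_h^2)$. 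The same inequality handles $h=1$ with $\mu_1$ in place of $c$.

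All the constants involved are finite and depend only on $(\bmu,\bSigma,k,h)$. The bound holds uniformly over $x\in\R$ and $\bz\in\R^{h-k}$, which is what the assumption requires, so this completes the verification. The only point needing care is that $\bSigma_{zz}$ is invertible and $s_h^2>0$, and both follow from $\bSigma\succ0$ through principal submatrices and Schur complements.
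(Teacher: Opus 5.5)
Your proof is correct and follows the paper's argument: Gaussian conditioning with a Schur-complement variance $s_h^2>0$, then absorbing the affine offset $c=\mu_h-\bv^\top\bmu^{[k:h)}$ via $|a-c|^2\ge\frac12|a|^2-|c|^2$. This yields exactly the paper's constant $(2\pi s_h^2)^{-1/2}e^{c^2/(2s_h^2)}$ and exponent $1/(4s_h^2)$. You also treat the $h=1$ case with nonzero mean explicitly, which the paper dismisses in a single phrase.
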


\begin{proof}
The marginal density of $X_0^1$ is Gaussian and thus we focus on $h > 1$ cases. Let $h\in\{2,\ldots,H\}$ and $k\in\{1,\ldots,h-1\}$ be fixed. Let $\bmu^{[k:h]} \in \mathbb{R}^{h-k+1}$ be the mean of $\bX_0^{[k:h]}$ and $\bSigma^{[k:h]} \in \R^{(h-k+1) \times (h-k+1)}$ be the covariance matrix of $\bX_0^{[k:h]}$. We further write 
\begin{align}
    \bSigma^{[k:h]} = 
    \begin{pmatrix}
        \bSigma_{11} & \bSigma_{12} \\
        \bSigma_{21} & \bSigma_{22}
    \end{pmatrix}
\end{align}
where $ \bSigma_{11} \in \R^{(h-k) \times (h-k)} $ is the covariance matrix of $\bX_0^{[k:h-1]}$, $ \bSigma_{22} \in \R $ is the variance of $X_0^{h}$, $\bSigma_{12} \in \R^{h-k} $ is the covariance between $\bX_0^{[k:h-1]}$ and $X_0^{h}$ and $ \bSigma_{21} = \bSigma_{12}^\top $. Since $\bSigma$ is positive definite, the principal submatrix $\bSigma^{[k:h]}$ is positive definite. It follows that
\begin{align*}
    X_0^h \mid \bX_0^{[k:h-1]}=\bz \sim \cN\parenthesis{\mu^h + \bSigma_{21}\bSigma_{11}^{-1}(\bz-\bmu^{[k:h-1]}), \bSigma_{22} - \bSigma_{21}\bSigma_{11}^{-1}\bSigma_{12}}.
\end{align*}
Consequently, the conditional  probability density satisfies
\begin{align*}
    \phi_{\bz}^h(0,x)
    &=
    \frac{1}{(2\pi(\bSigma_{22} - \bSigma_{21}\bSigma_{11}^{-1}\bSigma_{12}))^{1/2}}\exp\parenthesis{-\frac{\abs{x-\mu^h - \bSigma_{21}\bSigma_{11}^{-1}(\bz-\bmu^{[k:h-1]})}^2}{2(\bSigma_{22} - \bSigma_{21}\bSigma_{11}^{-1}\bSigma_{12})}} \\
    &\leq
    \frac{\exp\parenthesis{\abs{\mu^h - \bSigma_{21}\bSigma_{11}^{-1}\bmu^{[k:h-1]}}^2/(2(\bSigma_{22} - \bSigma_{21}\bSigma_{11}^{-1}\bSigma_{12}))}}{(2\pi(\bSigma_{22} - \bSigma_{21}\bSigma_{11}^{-1}\bSigma_{12}))^{1/2}}\exp\parenthesis{-\frac{\abs{x-\bSigma_{21}\bSigma_{11}^{-1}\bz}^2}{4(\bSigma_{22} - \bSigma_{21}\bSigma_{11}^{-1}\bSigma_{12})}}.
\end{align*}
This completes the proof. 
\end{proof}

\begin{lemma}\label{lemma:tail-density-t}
    Under Assumption \ref{ass:truncated-subGaussian}, the following inequalities hold:
    \begin{align}
        \phi^h_{\bz}(t, x) \leq \frac{C_1}{\parenthesis{C_2\sigma_t^2 + \alpha_t^2}^{1/2}}\exp\parenthesis{-\frac{C_2}{2(C_2\sigma_t^2 + \alpha_t^2)}(x - \alpha_t \bv^\top \bz)^2}, \label{eq:density-t-history-upper}
    \end{align}
    and
    \begin{align}
        \phi^h_{\bz}(t, x) \gtrsim \frac{1}{(2\pi \sigma_t^2)^{1/2}}\exp\parenthesis{-\frac{\abs{x - \alpha_t \bv^\top \bz}^2+ 1}{\sigma_t^2}}. \label{eq:density-t-history-lower}
    \end{align}
\end{lemma}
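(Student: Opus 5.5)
Both bounds come from the Gaussian convolution representation of the forward OU marginal. Conditional on $\bX_0^{[k:h)}=\bz$, we have $X_t^h = \alpha_t X_0^h + \sigma_t \xi$ with $\xi\sim\cN(0,1)$ independent, so
\begin{align*}
\phi^h_{\bz}(t,x) = \int_\R \phi^h_{\bz}(0,y)\,\frac{1}{(2\pi\sigma_t^2)^{1/2}}\exp\parenthesis{-\frac{\abs{x-\alpha_t y}^2}{2\sigma_t^2}}\dd y .
\end{align*}
Throughout, write $m \coloneqq \bv^\top\bz$ and $w \coloneqq y - m$.

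\textbf{Upper bound.} Insert the sub-Gaussian envelope of Assumption \ref{ass:truncated-subGaussian}, $\phi^h_{\bz}(0,y)\le C_1\exp(-C_2 w^2/2)$, into the integral. What remains is a one-dimensional Gaussian integral in $w$: the exponent is $-\frac{C_2 w^2}{2} - \frac{(x-\alpha_t m - \alpha_t w)^2}{2\sigma_t^2}$.

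Completing the square, in the same way one convolves $\cN(0,1/C_2)$ scaled by $\alpha_t$ with $\cN(0,\sigma_t^2)$, gives
\begin{align*}
\int \exp\parenthesis{-\tfrac{C_2 w^2}{2}-\tfrac{(u-\alpha_t w)^2}{2\sigma_t^2}}\dd w = \sqrt{\tfrac{2\pi\sigma_t^2}{C_2\sigma_t^2+\alpha_t^2}}\exp\parenthesis{-\tfrac{C_2 u^2}{2(C_2\sigma_t^2+\alpha_t^2)}},
\end{align*}
where $u = x-\alpha_t m$. Multiplying by the prefactor $C_1/(2\pi\sigma_t^2)^{1/2}$ yields \eqref{eq:density-t-history-upper} exactly. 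This step is routine.

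\textbf{Lower bound.} This is the main obstacle, because a lower bound needs some mass of $\phi^h_{\bz}(0,\cdot)$ near a controllable point, uniformly in $\bz$. I would restrict the integral to a window $\abs{w}\le r$ for a fixed constant $r$. On this window,
\begin{align*}
\abs{x-\alpha_t y}^2 \le 2\abs{x-\alpha_t m}^2 + 2\alpha_t^2 r^2,
\end{align*}
and $\alpha_t\le 1$. Hence $\exp(-\abs{x-\alpha_t y}^2/(2\sigma_t^2)) \ge \exp(-(\abs{x-\alpha_t m}^2 + r^2)/\sigma_t^2)$, which gives
\begin{align*}
\phi^h_{\bz}(t,x)\ge \frac{1}{(2\pi\sigma_t^2)^{1/2}}\exp\parenthesis{-\frac{\abs{x-\alpha_t m}^2+r^2}{\sigma_t^2}}\,\P\parenthesis{\abs{X_0^h-m}\le r \,\middle|\, \bX_0^{[k:h)}=\bz}.
\end{align*}

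It remains to bound the conditional probability below by a constant and to reduce $r^2$ to the constant $1$ in the statement.

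\emph{Conditional probability.} By the sub-Gaussian envelope,
\begin{align*}
\P\parenthesis{\abs{X_0^h-m}>r \,\middle|\, \bX_0^{[k:h)}=\bz} \le 2C_1\int_r^\infty e^{-C_2 s^2/2}\dd s,
\end{align*}
so choosing $r$ large enough, depending only on $C_1,C_2$, makes this at most $1/2$. The conditional probability in the display is then at least $1/2$, uniformly in $\bz$.

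\emph{Reducing $r^2$ to $1$.} Using $r^2$ instead of $1$ costs a factor $e^{-(r^2-1)/\sigma_t^2}$. This factor is harmless when $\sigma_t$ is bounded below, but it degenerates as $t\to0$. For small $t$, I would instead use a finer window of width $\asymp\sigma_t$ together with the Hölder regularity of Assumption \ref{ass:holder-re}. The H\"older bound ensures $\phi^h_{\bz}(0,\cdot)$ cannot be tiny on an interval around a point carrying conditional mass. If that route fails, I would accept a constant depending on $r$ inside the exponent, which is all that the later uses of \eqref{eq:density-t-history-lower} require up to constants. Here $\gtrsim$ is read as allowing constants depending on $C_1,C_2,B$.
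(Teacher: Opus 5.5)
Your argument is the paper's proof essentially line for line. The upper bound comes from inserting the sub-Gaussian envelope into the Gaussian convolution and completing the square. The lower bound restricts to a window $\abs{y-\bv^\top\bz}\le R$, uses $\abs{a+b}^2\le 2\abs{a}^2+2\abs{b}^2$, and bounds the window's conditional mass from below via the sub-Gaussian tail. The step you flagged is exactly where the paper is loose. It arrives at $\exp(-\alpha_t^2R^2/\sigma_t^2)$ with $R=R(C_1,C_2)$ large, then writes $+1$ in the exponent after ``choosing $R$ large enough''. That replacement is not justified uniformly as $t\to 0$.

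In fact no argument can give the literal constant $1$ in \eqref{eq:density-t-history-lower}, so you should drop the H\"older-window repair. Take $X_0^1$ Gaussian and $X_0^h=vX_0^{h-1}+5/2+U_h$, where the $U_h$ are i.i.d.\ with a smooth density supported in $[-1/2,1/2]$. Assumptions~\ref{ass:truncated-subGaussian} and~\ref{ass:holder-re} hold, with $\bv=(0,\dots,0,v)^\top$, $C_2=1$ and $C_1$ large. Now set $x=\alpha_t\bv^\top\bz$. Every $y$ in the conditional support satisfies $\abs{x-\alpha_t y}\ge 2\alpha_t$, so $\phi^h_\bz(t,x)\le(2\pi\sigma_t^2)^{-1/2}e^{-2\alpha_t^2/\sigma_t^2}$. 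Its ratio to $\sigma_t^{-1}e^{-1/\sigma_t^2}$ tends to $0$ as $t\to0$. The obstruction is where the conditional mass sits relative to $\bv^\top\bz$, not its regularity. A narrow window of width $\asymp\sigma_t$ around a mass-carrying point $y_0$ just puts $\abs{y_0-\bv^\top\bz}^2$ back into the exponent.

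Your fallback is the right statement, and it is exactly what your argument (and the paper's) proves: \eqref{eq:density-t-history-lower} with $1$ replaced by $R^2$, where $R$ depends only on $C_1,C_2$. This is all Lemma~\ref{lemma:score-bound} needs. The lower bound enters there only through $R_2\asymp\sqrt{\log(1/\epsilon)}\lesssim(\abs{x-\alpha_t\bv^\top\bz}+R)/\sigma_t+1$. So the conclusion $\abs{\partial_x\log\phi^h_\bz(t,x)}\lesssim\sigma_t^{-2}(\abs{x-\alpha_t\bv^\top\bz}+1)$ is unchanged up to constants.
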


\begin{proof}
Assumption \ref{ass:truncated-subGaussian} implies
    \begin{align*}
        \phi^h_{\bz}(t, x) & = \frac{1}{(2\pi \sigma_t^2)^{1/2}}\int  \phi^h_{\bz}(0, y) \exp\parenthesis{-\frac{\abs{x - \alpha_t y}^2}{2\sigma_t^2}}\dd y \\ & \leq \frac{C_1}{(2\pi \sigma_t^2)^{1/2}}\int \exp\parenthesis{-C_2\abs{y - \bv^\top \bz}^2/2}\exp\parenthesis{-\frac{\abs{x - \alpha_t y}^2}{2\sigma_t^2}}\dd y.
    \end{align*}
Completing the square to have
\begin{align*}
    & -\frac{C_2\abs{y - \bv^\top \bz}^2}{2}  -\frac{\abs{x - \alpha_t y}^2}{2\sigma_t^2} \\ & = y^2\parenthesis{-\frac{C_2}{2} - \frac{\alpha_t^2}{2\sigma_t^2}} + y\parenthesis{C_2 \bv^\top \bz + \frac{\alpha_t x}{\sigma_t^2}} -\frac{C_2}{2}\bz^\top v\bv^\top \bz - \frac{(x)^2}{2\sigma_t^2} \\ & = -\frac{C_2 \sigma_t^2 + \alpha_t^2}{2\sigma_t^2}\parenthesis{y - \frac{\sigma_t^2}{C_2\sigma_t^2 + \alpha_t^2}\parenthesis{C_2 \bv^\top \bz + \frac{\alpha_t x}{\sigma_t^2} }}^2  -\frac{C_2}{2(C_2\sigma_t^2 + \alpha_t^2)}(x - \alpha_t \bv^\top \bz)^2.
\end{align*}
Consequently, we have 
\begin{align*}
     \phi^h_{\bz}(t, x) & \leq \frac{C_1}{(2\pi \sigma_t^2)^{1/2}}\exp\parenthesis{-\frac{C_2}{2(C_2\sigma_t^2 + \alpha_t^2)}(x - \alpha_t \bv^\top \bz)^2} \\ & \times \int\exp\parenthesis{-\frac{C_2 \sigma_t^2 + \alpha_t^2}{2\sigma_t^2}\parenthesis{y - \frac{\sigma_t^2}{C_2\sigma_t^2 + \alpha_t^2}\parenthesis{C_2 \bv^\top \bz + \frac{\alpha_t x}{\sigma_t^2} }}^2} \dd y \\ & = \frac{C_1}{\parenthesis{C_2\sigma_t^2 + \alpha_t^2}^{1/2}}\exp\parenthesis{-\frac{C_2}{2(C_2\sigma_t^2 + \alpha_t^2)}(x - \alpha_t \bv^\top \bz)^2},
\end{align*}
which completes the upper bound. 

For the lower bound, we have
\begin{align*}
    \phi^h_{\bz}(t, x)  & = \frac{1}{(2\pi \sigma_t^2)^{1/2}}\int  \phi^h_{\bz}(0, y) \exp\parenthesis{-\frac{\abs{x - \alpha_t y}^2}{2\sigma_t^2}}\dd y \\ & \geq \frac{1}{(2\pi \sigma_t^2)^{1/2}}\int_{\abs{y - \bv^\top \bz} \leq R}  \phi^h_{\bz}(0, y) \exp\parenthesis{-\frac{\abs{x - \alpha_t y}^2}{2\sigma_t^2}}\dd y \\ & \geq \frac{1}{(2\pi \sigma_t^2)^{1/2}}\int_{\abs{y - \bv^\top \bz} \leq R}  \phi^h_{\bz}(0, y) \exp\parenthesis{-\frac{\abs{x - \alpha_t (y - \bv^\top \bz) - \alpha_t \bv^\top \bz}^2}{2\sigma_t^2}}\dd y \\ & \geq \frac{1}{(2\pi \sigma_t^2)^{1/2}}\exp\parenthesis{-\frac{\abs{x - \alpha_t \bv^\top \bz}^2}{\sigma_t^2}}\int_{\abs{y - \bv^\top \bz} \leq R}\phi^h_{\bz}(0, y)\exp\parenthesis{-\frac{\alpha_t^2\abs{y - \bv^\top \bz}^2}{\sigma_t^2}} \dd y \\ & \geq \frac{1}{(2\pi \sigma_t^2)^{1/2}}\exp\parenthesis{-\frac{\abs{x - \alpha_t \bv^\top \bz}^2}{\sigma_t^2}}\exp\parenthesis{-\frac{\alpha_t^2 R^2}{\sigma_t^2}}\int_{\abs{y - \bv^\top \bz} \leq R}\phi^h_{\bz}(0, y) \dd y.
\end{align*}
Moreover, Assumption \ref{ass:truncated-subGaussian} implies
\begin{align*}
    \int_{\abs{y - \bv^\top \bz} \leq R}\phi^h_{\bz}(0, y) \dd y   & = 1 -  \int_{\abs{y - \bv^\top \bz} > R}\phi^h_{\bz}(0, y) \dd y \\ & \geq 1 - C_1 \int_{\abs{z} > R}\exp(-C_2 z^2/2) \dd z \\ & \geq 1 - C_2' R^{-1}\exp(-C_2R^2/2),
\end{align*}
where $C_2' = \frac{2\pi^{1/2}}{C_2\Gamma(3/2)}$ \citep[Lemma 16]{chen2023score}. Finally, we choose $R$ large enough to conclude~that
\begin{align*}
    \phi^h_{\bz}(t, x) \gtrsim \frac{1}{(2\pi \sigma_t^2)^{1/2}}\exp\parenthesis{-\frac{\abs{x - \alpha_t \bv^\top \bz}^2+ 1}{\sigma_t^2}}.
\end{align*}

\end{proof}

\begin{lemma}\label{lemma:clip integral}
    Under Assumption \ref{ass:truncated-subGaussian}, for any $\nu \in \mathbb{Z}_+$ and $\epsilon > 0$, there exist constants $R_2$ and $R_3$ satisfying
    \[
    R_2 \gtrsim \max(\sqrt{\log(1/\epsilon)}, \sqrt{\nu}),
    \qquad
    R_3 \gtrsim \sqrt{\log(1/\epsilon)}
    \]
    such that for any $x \in \R$ and $\bz \in \R^{h-k}$ the following holds:
    \begin{align*}
    \end{align*}
    where we define
    \begin{align}
        \bar{B} = \bracket{\frac{x - \sigma_t R_2}{\alpha_t}, \frac{x + \sigma_t R_2}{\alpha_t}} \bigcap \bracket{\bv^\top \bz - R_3, \bv^\top \bz + R_3}, \label{eq:truncation region}
    \end{align}
\end{lemma}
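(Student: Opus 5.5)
Although the displayed inequality in the statement is blank, the way Lemma~\ref{lemma:clip integral} is used in the proof of Lemma~\ref{lemma:poly-density} (with $\nu=0$ for $\tilde f_1$ and $\nu\ge 1$ for the derivative approximant) tells me what to prove. I will take it to be the moment-weighted clipping bound
\[
\frac{1}{\sigma_t(2\pi)^{1/2}}\int_{\R\setminus\bar B}\Big|\frac{x-\alpha_t y}{\sigma_t}\Big|^{\nu}\phi^h_{\bz}(0,y)\exp\Big(-\frac{|x-\alpha_t y|^2}{2\sigma_t^2}\Big)\dd y\lesssim \epsilon .
\]
The case $\nu=0$ is exactly $|\tilde f_1-\phi^h_{\bz}(t,x)|\le\epsilon$. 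The cases $\nu\ge1$ cover $\sigma_t\partial_x\phi^h_{\bz}(t,x)$, since differentiating the Gaussian kernel produces the factor $(\alpha_t y-x)/\sigma_t$.

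The plan is a union bound over the two ways of leaving $\bar B$. By \eqref{eq:truncation region}, $\R\setminus\bar B\subseteq A_1\cup A_2$, where
\[
A_1=\{y:|x-\alpha_t y|>\sigma_t R_2\},\qquad A_2=\{y:|y-\bv^\top\bz|>R_3\}.
\]
I will bound the integral over each set separately.

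\textbf{Region $A_1$.} Here I use only the crude bound $\phi^h_{\bz}(0,y)\le C_1$, which follows from Assumption~\ref{ass:truncated-subGaussian}. Substituting $u=(x-\alpha_t y)/\sigma_t$, so that $\dd y=(\sigma_t/\alpha_t)\dd u$, reduces the integral to
\[
\frac{C_1}{\alpha_t\sqrt{2\pi}}\int_{|u|>R_2}|u|^\nu e^{-u^2/2}\dd u\lesssim \frac{1}{\alpha_t}R_2^{\nu-1}e^{-R_2^2/2}.
\]
The last step is the standard Gaussian moment-tail estimate (e.g.\ \citet[Lemma 16]{chen2023score}). It is valid once $R_2^2\gtrsim\nu$, which is where the requirement $R_2\gtrsim\sqrt\nu$ enters, because then $u\mapsto|u|^\nu e^{-u^2/4}$ is decreasing on $|u|>R_2$. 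Choosing $R_2\gtrsim\sqrt{\log(1/\epsilon)}$ with a large enough constant then makes this term at most $\epsilon/\alpha_t$.

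\textbf{Region $A_2$.} Here I exploit the sub-Gaussian tail by splitting the exponent. On $A_2$,
\[
\phi^h_{\bz}(0,y)\le C_1 e^{-C_2|y-\bv^\top\bz|^2/2}\le C_1e^{-C_2R_3^2/4}\,e^{-C_2|y-\bv^\top\bz|^2/4}\le C_1e^{-C_2R_3^2/4}.
\]
The remaining kernel integral $\frac{1}{\sigma_t\sqrt{2\pi}}\int|u|^\nu e^{-u^2/2}\dd y$ equals $\alpha_t^{-1}\E|G|^\nu$ with $G\sim\cN(0,1)$, which is at most $c_\nu/\alpha_t$. Hence this piece is $\lesssim \alpha_t^{-1}c_\nu e^{-C_2R_3^2/4}$, and $R_3\gtrsim\sqrt{\log(1/\epsilon)}$ makes it at most $\epsilon/\alpha_t$. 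This bound holds uniformly in $x$ and $\bz$, since every estimate depends on $x,\bz$ only through the centred variables.

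\textbf{Main obstacle: the factor $1/\alpha_t$.} This prefactor, produced by the change of variables, is the delicate point, because it is unbounded as $t\to\infty$. For $t\in[t_0,T]$ with $T=C_\alpha\log N$ we have $\alpha_t^{-1}\le N^{C_\alpha/2}$, which is polynomial in $N$. I would therefore run the argument with $\epsilon'=\epsilon\,e^{-T/2}$ in place of $\epsilon$. This only changes $R_2,R_3$ by constant factors in the regime $\log(1/\epsilon)\asymp\beta\log N$ used in Lemma~\ref{lemma:poly-density}, so the stated orders $R_2,R_3\asymp\sqrt{\log(1/\epsilon)}$ are preserved.

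If one instead wants a bound uniform over all $t>0$, I would avoid the change of variables when $\alpha_t$ is small. In that regime I would bound $\phi^h_{\bz}(t,x)$ directly via \eqref{eq:density-t-history-upper}, which has no $1/\alpha_t$ factor. Checking that this fallback meshes with the form of $\bar B$ is the step I expect to need the most care.
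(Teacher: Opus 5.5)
You split $\R\setminus\bar B$ into the two exit sets, as the paper does. Your two estimates coincide with the paper's bounds \eqref{eq:a1-2} and \eqref{eq:a2-1}: on $\{|x-\alpha_t y|>\sigma_t R_2\}$ you change variables and use $\phi^h_{\bz}(0,\cdot)\le C_1$, and on $\{|y-\bv^\top\bz|>R_3\}$ you use sub-Gaussian decay times a Gaussian moment. Call these two integrals $(A_1)$ and $(A_2)$.

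The gap is the one you flag yourself. Both estimates carry the prefactor $1/\alpha_t=e^{t/2}$, and neither of your remedies removes it.

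Replacing $\epsilon$ by $\epsilon e^{-T/2}$ proves only a weaker, $t$-restricted statement. In it the radii must satisfy $R_2^2,R_3^2\gtrsim\log(1/\epsilon)+t$, which is $\asymp\log(1/\epsilon)$ only when $\epsilon$ is polynomially small in $N$. The lemma, however, is meant to give $t$-free radii for all $t>0$ (Lemma~\ref{lemma:poly-density} is stated for every $t>0$), and it is also invoked with non-small $\epsilon$. In Lemma~\ref{lemma:score-bound} one takes $\epsilon\asymp\sigma_t^{-1}\exp(-(|x-\alpha_t\bv^\top\bz|^2+1)/\sigma_t^2)$. This is of order one for large $t$ and $x$ near $\alpha_t\bv^\top\bz$, so your radii would add a spurious $\sqrt{t}/\sigma_t$ term to the score bound.

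Your fallback via \eqref{eq:density-t-history-upper} does not help either. An upper bound on the whole density $\phi^h_{\bz}(t,x)$ gives no smallness for the clipped, $\nu$-weighted tail integral.

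The missing idea is a second bound on each piece that avoids the change of variables. Write $u=(\alpha_t y-x)/\sigma_t$, bound the kernel factor $|u|^\nu e^{-u^2/2}$ by its supremum over the region, and integrate $\phi^h_{\bz}(0,y)\,\dd y$ directly.

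On $\{|u|>R_2\}$, the map $|u|^\nu e^{-u^2/2}$ is decreasing for $|u|\ge\sqrt\nu$; this is where the paper actually uses $R_2\gtrsim\sqrt\nu$. Since $\phi^h_{\bz}(0,\cdot)$ has unit mass, this gives $(A_1)\lesssim R_2^\nu e^{-R_2^2/2}/\sigma_t$.

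On $\{|y-\bv^\top\bz|>R_3\}$, use $|u|^\nu e^{-u^2/2}\le(\nu/e)^{\nu/2}$ together with the sub-Gaussian tail mass of $\phi^h_{\bz}(0,\cdot)$. This gives $(A_2)\lesssim e^{-C_2R_3^2/2}/\sigma_t$.

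Because $\alpha_t^2+\sigma_t^2=1$, we have $\min(1/\alpha_t,1/\sigma_t)\le\sqrt2$. Taking the smaller of your bound and this one for each piece gives $(A_1)+(A_2)\lesssim R_2^\nu e^{-R_2^2/2}+e^{-C_2R_3^2/2}\lesssim\epsilon$. This holds uniformly in $t>0$, $x$ and $\bz$, with $R_2,R_3\asymp\sqrt{\log(1/\epsilon)}$ independent of $t$.
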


\begin{proof}
    We consider two truncation regions:
    \begin{align*}
        &B_{1} \coloneqq \curly{y \in \R: \abs{\frac{\alpha_t y - x}{\sigma_t}} \leq R_2} = \bracket{\frac{x - \sigma_t R_2}{\alpha_t}, \frac{x + \sigma_t R_2}{\alpha_t}},\\ 
        & B_{2}  \coloneqq \curly{y \in \R: \abs{y - \bv^\top \bz} \leq R_3} = \bracket{\bv^\top \bz - R_3, \bv^\top \bz + R_3}.
    \end{align*}
   It follows $\bar{B} = B_{1} \cap B_{2}$. We bound the integral with
    \begin{align}
        & \frac{1}{\sigma_t(2\pi)^{1/2}}\int_{\R \setminus \bar{B}}  \abs{\parenthesis{\frac{\alpha_t y - x}{\sigma_t}}^\nu }\phi^h_{\bz}(0, y) \exp\parenthesis{-\frac{\abs{x - \alpha_t y}^2}{2\sigma_t^2}}\dd y \nonumber \\ & \leq \underbrace{\frac{1}{\sigma_t(2\pi)^{1/2}}\int_{\R \setminus B_{1}}  \abs{\parenthesis{\frac{\alpha_t y - x}{\sigma_t}}^\nu }\phi^h_{\bz}(0, y) \exp\parenthesis{-\frac{\abs{x - \alpha_t y}^2}{2\sigma_t^2}}\dd y}_{(A_1)} \nonumber \\ & \qquad + \underbrace{\frac{1}{\sigma_t(2\pi)^{1/2}}\int_{\R \setminus B_{2}}  \abs{\parenthesis{\frac{\alpha_t y - x}{\sigma_t}}^\nu }\phi^h_{\bz}(0, y) \exp\parenthesis{-\frac{\abs{x - \alpha_t y}^2}{2\sigma_t^2}}\dd y}_{(A_2)}. \label{eq:split int}
    \end{align}
    Over each region, we bound the integral in two ways. 
    Note that for all $R_2 > \sqrt{\nu}$, monotonicity implies
    \begin{align*}
        \abs{\parenthesis{\frac{\alpha_t y - x}{\sigma_t}}^\nu } \exp\parenthesis{-\frac{\abs{x - \alpha_t y}^2}{2\sigma_t^2}} \leq R_2^\nu \exp(-R_2^2/2).
    \end{align*}
    Thus, we bound the term $(A_1)$ with
    \begin{align}
        (A_1) \leq \frac{R_2^\nu \exp(-R_2^2/2)}{\sigma_t(2\pi)^{1/2}}\int_{\R\setminus B_{1}}\phi^h_{\bz}(0, y)\dd y \lesssim \frac{R_2^\nu \exp(-R_2^2/2)}{\sigma_t}. \label{eq:a1-1}
    \end{align}
    On the other hand, change of variable $w = \alpha_t y - x/\sigma_t$ leads to 
    \begin{align}
        (A_1) \leq \frac{C_1}{\alpha_t(2\pi)^{1/2}}\int_{\abs{w} > R_2}w^\nu \exp(-w^2/2) \dd w \lesssim \frac{R_2^{\nu - 1}\exp(-R_2^2 / 2)}{\alpha_t}.  \label{eq:a1-2}
    \end{align}
    Combining \eqref{eq:a1-1} and \eqref{eq:a1-2}, we deduce that 
    \begin{align}
        (A_1) \lesssim R_2^{\nu} \exp(-R_2^2/2). \label{eq:a1-3}
    \end{align}
    Similarly, we have the following two bounds for $(A_2)$:
    \begin{align}
        (A_2) & \leq \frac{C_1}{\sigma_t(2\pi)^{1/2}}\exp(-C_2 R_3^2/2)\int_{\R \setminus B_{2}}\abs{\parenthesis{\frac{\alpha_t y - x}{\sigma_t}}^\nu } \exp\parenthesis{-\frac{\abs{x - \alpha_t y}^2}{2\sigma_t^2}}\dd y \nonumber \\ & \leq \frac{C_1}{\alpha_t(2\pi)^{1/2}}\exp(-C_2 R_3^2/2) \int_\R w^\nu \exp(-w^2/2) \dd w \lesssim \frac{1}{\alpha_t}\exp(-C_2 R_3^2/2), \label{eq:a2-1}
    \end{align}
    and 
    \begin{align}
        (A_2) \leq \frac{C_1}{\sigma_t (2\pi)^{1/2}}\parenthesis{\frac{v}{e}}^{v/2}\int_{\R \setminus B_{2}}\exp(-C_2\abs{y - \bv^\top \bz}^2/2) \dd y \lesssim \frac{1}{\sigma_t}\exp(-C_2 R_3^2/2). \label{eq:a2-2}
    \end{align}
    Combining \eqref{eq:a2-1} and \eqref{eq:a2-2} leads to
    \begin{align}
        (A_2) \lesssim \exp(-C_2 R_3^2/2).  \label{eq:a2-3}
    \end{align}
    Consequently, adding up \eqref{eq:a1-3} and \eqref{eq:a2-3} gives us
    \begin{align*}
        (A_1) + (A_2) \lesssim (R_2)^\nu \exp(-R_2^2/2) + \exp(-C_2 R_3^2/2) \lesssim \epsilon,
    \end{align*}
    as long as $R_2, R_3 \gtrsim \sqrt{\log(1/\epsilon)}$. 
    
\end{proof}

\begin{lemma}\label{lemma:score-bound}
    Under Assumption \ref{ass:truncated-subGaussian}, the following inequality holds:
    \begin{align*}
        \abs{\partial_x \log\phi^h_{\bz}(t, x)} \lesssim \frac{1}{\sigma_t^2}\parenthesis{\abs{x - \alpha_t \bv^\top \bz} + 1}.
    \end{align*}
\end{lemma}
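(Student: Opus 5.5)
Our plan is to use the Tweedie representation of the score together with a two-sided control of the posterior of $X_0^h$ given $X_t^h=x$. Since $X_t^h = \alpha_t X_0^h + \sigma_t \xi$ with $\xi\sim\cN(0,1)$ independent of $X_0^h$ (conditionally on $\bz$), differentiating under the integral in $\phi^h_{\bz}(t,x) = (2\pi\sigma_t^2)^{-1/2}\int \phi^h_{\bz}(0,y)\exp(-\abs{x-\alpha_t y}^2/(2\sigma_t^2))\dd y$ gives
\begin{align*}
\partial_x \log \phi^h_{\bz}(t,x) = \frac{\alpha_t\,\E[X_0^h\mid X_t^h=x,\bz] - x}{\sigma_t^2} = -\frac{1}{\sigma_t^2}\,\frac{\int (x-\alpha_t y)\,\phi^h_{\bz}(0,y)e^{-\abs{x-\alpha_t y}^2/(2\sigma_t^2)}\dd y}{\int \phi^h_{\bz}(0,y)e^{-\abs{x-\alpha_t y}^2/(2\sigma_t^2)}\dd y}.
\end{align*}
So it suffices to show that $\E[\abs{x-\alpha_t X_0^h}\mid X_t^h = x,\bz]\lesssim \abs{x-\alpha_t\bv^\top\bz}+1$. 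We would write $x-\alpha_t y = (x-\alpha_t\bv^\top\bz) - \alpha_t(y-\bv^\top\bz)$. The first piece is deterministic and already has the desired form. It therefore remains to bound $\alpha_t\E[\abs{X_0^h-\bv^\top\bz}\mid X_t^h=x,\bz]$ by a constant multiple of $\abs{x-\alpha_t\bv^\top\bz}+1$.

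For this posterior moment we would bound the numerator and denominator separately. The numerator $\int \abs{y-\bv^\top\bz}\phi^h_{\bz}(0,y)e^{-\abs{x-\alpha_t y}^2/(2\sigma_t^2)}\dd y$ is handled by Assumption \ref{ass:truncated-subGaussian}, completing the square exactly as in the proof of Lemma \ref{lemma:tail-density-t}. The resulting Gaussian in $y$ has precision $(C_2\sigma_t^2+\alpha_t^2)/\sigma_t^2$, and its center deviates from $\bv^\top\bz$ by $\frac{\alpha_t}{C_2\sigma_t^2+\alpha_t^2}(x-\alpha_t\bv^\top\bz)$. Hence its first absolute moment about $\bv^\top\bz$ is $\lesssim \abs{x-\alpha_t\bv^\top\bz}/\alpha_t + \sigma_t/\sqrt{C_2\sigma_t^2+\alpha_t^2}$. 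The factor $\alpha_t$ in front cancels the $1/\alpha_t$, giving a contribution of order $\abs{x-\alpha_t\bv^\top\bz}+1$. This piece carries the prefactor $C_1 (C_2\sigma_t^2+\alpha_t^2)^{-1/2}\exp(-C_2(x-\alpha_t\bv^\top\bz)^2/(2(C_2\sigma_t^2+\alpha_t^2)))$, i.e.\ exactly the upper bound \eqref{eq:density-t-history-upper} times $(2\pi\sigma_t^2)^{1/2}$.

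The main obstacle is the denominator. Dividing the numerator by the crude lower bound \eqref{eq:density-t-history-lower} produces a ratio $\exp(c\abs{x-\alpha_t\bv^\top\bz}^2/\sigma_t^2)$, which is not polynomial in $\abs{x-\alpha_t\bv^\top\bz}$. We would therefore avoid the direct ratio.

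Our first attempt would be a localization argument on the posterior. Let $r := \abs{x-\alpha_t\bv^\top\bz}/\alpha_t + M$ for a large constant $M$. We then bound
\begin{align*}
\E[\abs{X_0^h-\bv^\top\bz}\mid\cdot] \le r + \E[\abs{X_0^h-\bv^\top\bz}\mathbf 1\{\abs{X_0^h-\bv^\top\bz}>r\}\mid\cdot].
\end{align*}
On the tail event, the Gaussian likelihood factor satisfies $e^{-\abs{x-\alpha_t y}^2/(2\sigma_t^2)}\le e^{-\alpha_t^2(\abs{y-\bv^\top\bz}-r+M)^2/(2\sigma_t^2)}$. This is smaller than the likelihood on the core region $\abs{y-\bv^\top\bz}\le M$, which carries prior mass at least $1/2$ by the tail estimate in the proof of Lemma \ref{lemma:tail-density-t}. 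That comparison should make the tail conditional moment $O(1)$ uniformly in $t$, with $M$ chosen depending on $C_1,C_2$.

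If the likelihood-ratio comparison fails in some regime, we would use a cruder case split. When $\alpha_t\le\sigma_t$ (large $t$), we would bound $\abs{\partial_x\log\phi}$ through the Gaussian variance $\sigma_t^2$ directly. For small $t$, we would rely on the fact that only the $1/\sigma_t^2$ scaling is claimed. Combining the bounds then yields $\abs{\partial_x\log\phi^h_{\bz}(t,x)}\lesssim \sigma_t^{-2}(\abs{x-\alpha_t\bv^\top\bz}+1)$.
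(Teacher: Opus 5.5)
Your Tweedie reduction to showing $\alpha_t\E[\abs{X_0^h-\bv^\top\bz}\mid X_t^h=x,\bz]\lesssim\abs{x-\alpha_t\bv^\top\bz}+1$ is correct, and posterior localization is a sound strategy. However, the key comparison step fails for the radius you chose.

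Write $m=\bv^\top\bz$ and $D=\abs{x-\alpha_t m}$, so $r=D/\alpha_t+M$. On the core $\abs{y-m}\le M$ you can only guarantee $\abs{x-\alpha_t y}\le D+\alpha_t M$. On the tail $\abs{y-m}>r$ your bound gives only $\abs{x-\alpha_t y}\ge \alpha_t\abs{y-m}-D>\alpha_t M$, and this dominates $D+\alpha_t M$ only once $\abs{y-m}\ge 2D/\alpha_t+M$. On the shell $D/\alpha_t+M<\abs{y-m}<2D/\alpha_t+M$, points on the side of $x/\alpha_t$ have likelihood close to $e^{-\alpha_t^2M^2/(2\sigma_t^2)}$. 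Once $D>2\alpha_t M$, this exceeds the largest likelihood value on the core by a factor of order $e^{D(D-2\alpha_t M)/(2\sigma_t^2)}$. So the tail likelihood is not smaller than the core likelihood.

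The conclusion that the tail conditional moment is $O(1)$ is also false. Take a prior with most of its mass in the core, a tiny bump (allowed by the sub-Gaussian envelope) at distance $D/\alpha_t+M+1$ from $m$ on the side of $x/\alpha_t$, and no mass in between. When $D/\alpha_t>2M+1$, the bump is strictly closer to $x/\alpha_t$ than the core. As $t\to 0$ the posterior concentrates on the bump, so $\E[\abs{X_0^h-m}\mathbf{1}\{\abs{X_0^h-m}>r\}\mid\cdot]$ is about $D/\alpha_t+M+1$, which is unbounded in $D$. Your fallback case split does not repair this: appealing to \emph{only the $1/\sigma_t^2$ scaling being claimed} is not an argument.

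The fix is to enlarge the radius to $r=2D/\alpha_t+M$. Then $\abs{x-\alpha_t y}>D+\alpha_t M$ on the tail, so the likelihood there lies pointwise below its infimum over the core. Choose $M$ so that the core has prior mass at least $1/2$, as in the proof of Lemma \ref{lemma:tail-density-t}. The tail conditional moment is then at most $2\int\abs{u}C_1e^{-C_2u^2/2}\dd u=4C_1/C_2$. Hence $\E[\abs{x-\alpha_t X_0^h}\mid X_t^h=x,\bz]\le D+\alpha_t(2D/\alpha_t+M+4C_1/C_2)\le 3D+M+4C_1/C_2$. This holds uniformly in $t$ with no case split, and your completing-the-square computation becomes unnecessary.

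With this correction your route genuinely differs from the paper's. The paper truncates the $y$-integral to $\abs{\alpha_t y-x}\le\sigma_t R_2$ via Lemma \ref{lemma:clip integral}, where the numerator integrand is at most $R_2/\sigma_t$. It absorbs the complement by taking the cutoff level $\epsilon$ comparable to the lower bound \eqref{eq:density-t-history-lower}, so that $R_2\asymp\sqrt{\log(1/\epsilon)}$ gives $R_2/\sigma_t\lesssim (D+1)/\sigma_t^2$. Your posterior-localization argument needs neither the explicit density lower bound nor the clipping lemma, and it yields explicit constants.
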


\begin{proof}
    We write the score function as
    \begin{align}
        \partial_x \log\phi^h_{\bz}(t, x) = \frac{\int  \frac{\alpha_t y - x}{\sigma_t^2}\phi^h_{\bz}(0, y) \exp\parenthesis{-\frac{\abs{x - \alpha_t y}^2}{2\sigma_t^2}}\dd y }{\int  \phi^h_{\bz}(0, y) \exp\parenthesis{-\frac{\abs{x - \alpha_t y}^2}{2\sigma_t^2}}\dd y}. \label{eq:score-ratio}
    \end{align}
    By taking $\nu \in \curly{0, 1}$ in Lemma \ref{lemma:clip integral}, for any $\epsilon > 0$ to be chosen later, there are sufficiently large constants $R_2$ and $R_3$ such that
    \begin{align}
        & \int_{\R \setminus B}  \abs{\frac{\alpha_t y - x}{\sigma_t^2}}\phi^h_{\bz}(0, y) \exp\parenthesis{-\frac{\abs{x - \alpha_t y}^2}{2\sigma_t^2}}\dd y \leq \epsilon, \label{eq:clip-int-1} \\
        & \int_{\R \setminus B}  \phi^h_{\bz}(0, y) \exp\parenthesis{-\frac{\abs{x - \alpha_t y}^2}{2\sigma_t^2}}\dd y \leq \epsilon.  \label{eq:clip-int-0}
    \end{align}
    where $B$ is defined as in \eqref{eq:truncation region}. Lemma \ref{lemma:tail-density-t} implies $\phi^h_{\bz}(t, x) \geq 2\epsilon$ with $\epsilon \asymp \frac{1}{\sigma_t}\exp\parenthesis{-\frac{\abs{x - \alpha_t \bv^\top \bz}^2 + 1}{\sigma_t^2}}$. It follows from \eqref{eq:clip-int-0} that
    \begin{align}
        \int_{B}  \phi^h_{\bz}(0, y) \exp\parenthesis{-\frac{\abs{x - \alpha_t y}^2}{2\sigma_t^2}}\dd y \geq \epsilon. \label{eq:clip-int-inner-0}
    \end{align}
    Combining \eqref{eq:score-ratio}--\eqref{eq:clip-int-inner-0}, we obtain
    \begin{align*}
        \abs{\partial_x \log\phi^h_{\bz}(t, x)} & \leq \frac{\int_{\R}  \abs{\frac{\alpha_t y - x}{\sigma_t^2}}\phi^h_{\bz}(0, y) \exp\parenthesis{-\frac{\abs{x - \alpha_t y}^2}{2\sigma_t^2}}\dd y }{\int_{B}  \phi^h_{\bz}(0, y) \exp\parenthesis{-\frac{\abs{x - \alpha_t y}^2}{2\sigma_t^2}}\dd y} \\ & \leq \frac{\int_{B}  \abs{\frac{\alpha_t y - x}{\sigma_t^2}}\phi^h_{\bz}(0, y) \exp\parenthesis{-\frac{\abs{x - \alpha_t y}^2}{2\sigma_t^2}}\dd y }{\int_{B}  \phi^h_{\bz}(0, y) \exp\parenthesis{-\frac{\abs{x - \alpha_t y}^2}{2\sigma_t^2}}\dd y} + \frac{\epsilon}{\epsilon} \leq \frac{R_2}{\sigma_t} + 1. 
    \end{align*}
    Let $R_2 \asymp \sqrt{\log(1/\epsilon)}$. We conclude that
    \begin{align*}
        \abs{\partial_x \log\phi^h_{\bz}(t, x)} \lesssim \frac{1}{\sigma_t^2}\parenthesis{\abs{x - \alpha_t \bv^\top \bz} + 1}.
    \end{align*}
\end{proof}

\begin{lemma}\label{lemma:score-tail}
    Suppose Assumption \ref{ass:truncated-subGaussian} holds. There is a constant $C_6 > 0$, such that for any $R_1>0$ and $t>0$  we have
    \begin{align}  \E\bracket{1\curly{\norm{\bX_0^{[k:h)}}_\infty > R_1}\abs{\partial_x \log \phi^h_{\bX_0^{[k:h)}}(t, X_t^h)}^2} \lesssim  \frac{1}{\sigma_t^4} \exp(-C_6 R_1^2/2). \label{eq:E1-2}
    \end{align}
\end{lemma}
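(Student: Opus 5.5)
The plan is to condition on the history, bound the second moment of the score pointwise in $\bz$, and then integrate the indicator against the sub-Gaussian density of the history. By the tower property,
\begin{align*}
\E\bracket{1\curly{\norm{\bX_0^{[k:h)}}_\infty > R_1}\abs{\partial_x \log \phi^h_{\bX_0^{[k:h)}}(t, X_t^h)}^2} = \int_{\norm{\bz}_\infty > R_1} p^{[k:h)}(\bz)\parenthesis{\int_\R \phi^h_{\bz}(t,x)\abs{\partial_x\log\phi^h_{\bz}(t,x)}^2\dd x}\dd \bz .
\end{align*}
Here we use that, conditional on $\bX_0^{[k:h)}=\bz$, the variable $X_t^h$ has density $\phi^h_\bz(t,\cdot)$. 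The proof thus reduces to two ingredients: a bound on the conditional Fisher information that is uniform in $\bz$, and a Gaussian tail estimate for the history.

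For the inner integral I use Lemma \ref{lemma:score-bound}, which gives $\abs{\partial_x\log\phi^h_\bz(t,x)}^2\lesssim \sigma_t^{-4}(\abs{x-\alpha_t\bv^\top\bz}+1)^2$. I then combine it with the upper bound \eqref{eq:density-t-history-upper} of Lemma \ref{lemma:tail-density-t}. That bound says $\phi^h_\bz(t,\cdot)$ is dominated by a Gaussian profile centered at $\alpha_t\bv^\top\bz$, with variance $(C_2\sigma_t^2+\alpha_t^2)/C_2$. This variance is bounded above and below uniformly in $t$, since $\sigma_t^2+\alpha_t^2=1$.

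After the change of variables $u=x-\alpha_t\bv^\top\bz$, the inner integral is at most $\sigma_t^{-4}\int (\abs{u}+1)^2 e^{-c u^2}\dd u\lesssim\sigma_t^{-4}$. The constant does not depend on $\bz$ or $t$, and the $\bz$-dependence disappears because the center shifts with $\bz$.

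An alternative route avoids pointwise bounds altogether. Jensen's inequality applied to Tweedie's formula gives $\int\phi\abs{\partial_x\log\phi}^2 \le \E\abs{(\alpha_t X_0-X_t)/\sigma_t^2}^2 = \sigma_t^{-2}$. This is even better, and I would mention it as a check.

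For the outer integral I use Corollary \ref{coro:joint-subG}: $p^{[k:h)}(\bz)\le C_5\exp(-C_6\norm{\bz}^2/2)$. Since $\norm{\bz}\ge\norm{\bz}_\infty>R_1$ on the region, I bound
\begin{align*}
\int_{\norm{\bz}_\infty>R_1}p^{[k:h)}(\bz)\dd\bz\le C_5 e^{-C_6R_1^2/4}\int e^{-C_6\norm{\bz}^2/4}\dd\bz\lesssim e^{-C_6R_1^2/4}.
\end{align*}

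The main obstacle is matching the exponent to exactly $C_6/2$ as stated. The tail integral of a Gaussian over $\{\norm{\bz}_\infty>R_1\}$ is really of order $R_1^{d-2}e^{-C_6R_1^2/2}$ by a union bound over coordinates, which carries polynomial factors in $R_1$. Two fixes are available. Since the lemma only asserts the existence of some constant $C_6$, the cleanest one is to rename the constant, for example replacing $C_6$ by $C_6/2$. Alternatively, the polynomial factor can be absorbed into $\lesssim$ for $R_1\gtrsim1$, with small $R_1$ handled by the trivial bound $\lesssim\sigma_t^{-4}$. Multiplying the two bounds then yields \eqref{eq:E1-2}.
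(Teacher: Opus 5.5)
Your argument is essentially the paper's proof: condition on the history, combine Lemma~\ref{lemma:score-bound} with the Gaussian upper bound \eqref{eq:density-t-history-upper} to get a conditional bound $\lesssim \sigma_t^{-4}$ that is uniform in $\bz$ and $t$, and then integrate against the sub-Gaussian history density of Corollary~\ref{coro:joint-subG}; your Tweedie--Jensen check, which gives $\sigma_t^{-2}\le\sigma_t^{-4}$, is also correct and has the advantage of not relying on Lemma~\ref{lemma:score-bound} at all. One correction to your ``obstacle'' paragraph: the union bound over coordinates actually gives $\int_{\norm{\bz}_\infty>R_1}e^{-C_6\norm{\bz}^2/2}\dd\bz\lesssim\min\{1,R_1^{-1}\}e^{-C_6R_1^2/2}$, because the $R_1^{d-2}$ factor belongs to the Euclidean tail $\{\norm{\bz}>R_1\}$ (and a growing polynomial factor could not be absorbed into $\lesssim$ anyway), so the exponent $C_6/2$ holds as stated without renaming, although your halved-exponent fix is still valid since the lemma only asserts the existence of some constant.
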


\begin{proof}
Given $\bz$, Lemma \ref{lemma:score-bound} leads to 
\begin{align}
    \E\bracket{\abs{\partial_x \log\phi^h_{\bX_0^{[k:h)}}(t, X_t^h)}^2\big\vert \bX_0^{[k:h)}} \lesssim \frac{1}{\sigma_t^4}\parenthesis{1 + \E\bracket{\abs{X_t^h - \alpha_t \bv^\top \bX_0^{[k:h)}}^2\big\vert \bX_0^{[k:h)}}}. \label{eq:score-tail-1}
\end{align}
With \eqref{eq:density-t-history-upper} and change of variable $z = x - \alpha_t \bv^\top \bz$, we bound the conditional expectation with
\begin{align*}
    \E\bracket{\abs{X_t^h - \alpha_t \bv^\top \bX_0^{[k:h)}}^2\big\vert \bX_0^{[k:h)}} & \leq \frac{C_1}{\parenthesis{C_2\sigma_t^2 + \alpha_t^2}^{1/2}} \int z^2 \exp\parenthesis{-\frac{C_2}{2(C_2\sigma_t^2 + \alpha_t^2)}z^2} \dd z \\ & \leq \frac{2C_1}{C_2}(C_2\sigma_t^2 + \alpha_t^2)^{1/2} \lesssim 1. 
\end{align*}
Consequently, Eq.~\eqref{eq:score-tail-1} becomes
\begin{align*}
    \E\bracket{\abs{\partial_x \log\phi^h_{\bX_0^{[k:h)}}(t, X_t^h)}^2\big\vert \bX_0^{[k:h)}}  \lesssim \frac{1}{\sigma_t^4}.
\end{align*}
Taking expectation over $\bX_0^{[k:h)}$, we conclude that
\begin{align*}    & \E\bracket{1\curly{\norm{\bX_0^{[k:h)}}_\infty > R_1}\abs{\partial_x \log \phi^h_{\bX_0^{[k:h)}}(t, X_t^h)}^2} \\ &  = \E\bracket{1\curly{\norm{\bX_0^{[k:h)}}_\infty > R_1}\E\bracket{\abs{\partial_x \log\phi^h_{\bX_0^{[k:h)}}(t, X_t^h)}^2\big\vert \bX_0^{[k:h)}}} \lesssim \frac{1}{\sigma_t^4} \exp(-C_6 R_1^2/2),
\end{align*}
for some constant $C_6 > 0$ as in Corollary \ref{coro:joint-subG}.
    
\end{proof}

\begin{lemma}\label{lemma:E2-terms}
    Suppose Assumption \ref{ass:truncated-subGaussian} holds. For any $R_2>0$, $\bz \in \R^{h-k}$, $t>0$, we have
    \begin{align*}
        & \int_{\abs{x - \alpha_t \bv^\top \bz}> R_2} \phi^h_{\bz}(t, x) \dd x  \lesssim (R_2)^{-1}\exp\parenthesis{-C_2'R_2^2/2},\\ 
        & \int_{\abs{x - \alpha_t \bv^\top \bz}> R_2} \phi^h_{\bz}(t, x)\abs{\partial_x \log \phi^h_{\bz}(t, x)}^2 \dd x  \lesssim \frac{1}{\sigma_t^4}R_2\exp\parenthesis{-C_2'R_2^2/2},
    \end{align*}
    where $C_2' = \min_{t>0}\frac{C_2}{C_2\sigma_t^2 + \alpha_t^2} = \frac{C_2}{\max(C_2, 1)}$.
\end{lemma}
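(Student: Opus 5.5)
Both inequalities follow from the Gaussian-type upper bound \eqref{eq:density-t-history-upper} in Lemma~\ref{lemma:tail-density-t} together with the pointwise score bound of Lemma~\ref{lemma:score-bound}. The first step is to note that $C_2\sigma_t^2+\alpha_t^2 = C_2 + (1-C_2)\alpha_t^2$ lies between $\min(C_2,1)$ and $\max(C_2,1)$ for all $t>0$, since $\alpha_t^2\in(0,1)$ and $\sigma_t^2 = 1-\alpha_t^2$. Consequently:
\begin{itemize}
\item the prefactor $C_1(C_2\sigma_t^2+\alpha_t^2)^{-1/2}$ is bounded by a constant independent of $t$;
\item the exponent coefficient satisfies $\frac{C_2}{C_2\sigma_t^2+\alpha_t^2}\ge \frac{C_2}{\max(C_2,1)} = C_2'$.
\end{itemize}
Writing $u = x-\alpha_t\bv^\top\bz$, this gives the uniform-in-$t$ bound
\[
\phi^h_{\bz}(t,x)\lesssim \exp\bigl(-C_2'u^2/2\bigr).
\]

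For the first inequality, I integrate this bound over $\abs{u}>R_2$ and apply the standard Gaussian (Mills-ratio) tail estimate
\[
\int_{\abs{u}>R}e^{-cu^2/2}\,\dd u\le \frac{2}{cR}e^{-cR^2/2},
\]
which yields $(R_2)^{-1}\exp(-C_2'R_2^2/2)$ up to constants.

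For the second inequality, Lemma~\ref{lemma:score-bound} gives
\[
\abs{\partial_x\log\phi^h_{\bz}(t,x)}^2\lesssim \sigma_t^{-4}\bigl(\abs{u}+1\bigr)^2\lesssim \sigma_t^{-4}\bigl(u^2+1\bigr).
\]
Multiplying by the density bound reduces the integral to $\sigma_t^{-4}\int_{\abs{u}>R_2}(u^2+1)e^{-C_2'u^2/2}\,\dd u$. The quadratic piece is handled by integration by parts: for $R\ge 1$,
\[
\int_R^\infty u^2e^{-cu^2/2}\,\dd u = \frac{R}{c}e^{-cR^2/2}+\frac1c\int_R^\infty e^{-cu^2/2}\,\dd u \lesssim R\,e^{-cR^2/2}.
\]
The constant piece is dominated by the same quantity. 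Together these give $\sigma_t^{-4}R_2\exp(-C_2'R_2^2/2)$.

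The main obstacle is keeping every constant uniform in $t$, and in particular not losing the $\sigma_t^{-4}$ scaling. This is exactly why I use the $t$-uniform envelope with $C_2'$ rather than the $t$-dependent exponent. A second point to handle is small $R_2$, which the displayed bounds do not cover directly. For the first inequality the left side is at most $1$, and for the second it is at most $\sigma_t^{-4}$ times a constant by the same computation over all of $\R$. With these trivial bounds, the claimed estimates hold for $R_2\gtrsim1$, which is the regime used in Theorem~\ref{thm:score approx}. I would state this caveat explicitly, or equivalently read the bounds as $\lesssim$ with $R_2$ bounded below.
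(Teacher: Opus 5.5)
Your proposal is correct and follows the paper's proof essentially step for step. Like the paper, you integrate the Gaussian envelope \eqref{eq:density-t-history-upper} over $\abs{x-\alpha_t\bv^\top\bz}>R_2$ with a Mills-ratio tail bound, and for the second inequality you multiply by the squared score bound of Lemma~\ref{lemma:score-bound}, which gives $\sigma_t^{-4}(R_2^{-1}+R_2)\exp(-C_2'R_2^2/2)$.

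Your small-$R_2$ caveat is also well taken. The paper's last step $(R_2^{-1}+R_2)\lesssim R_2$ silently requires $R_2\gtrsim 1$. Indeed the second bound fails as $R_2\to 0$, because its left side tends to the positive Fisher information of $\phi^h_{\bz}(t,\cdot)$. This is harmless in the application, where $R_5\asymp\sqrt{\log N}$.
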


\begin{proof}
    Lemma \ref{lemma:tail-density-t} implies
    \begin{align*}
         & \int_{\abs{x - \alpha_t \bv^\top \bz}> R_2} \phi^h_{\bz}(t, x) \dd x \\ & \leq  \frac{C_1}{\parenthesis{C_2\sigma_t^2 + \alpha_t^2}^{1/2}}\int_{\abs{x - \alpha_t \bv^\top \bz}> R_2}\exp\parenthesis{-\frac{C_2}{2(C_2\sigma_t^2 + \alpha_t^2)}(x - \alpha_t \bv^\top \bz)^2} \dd x \\ & = \frac{C_1}{\parenthesis{C_2\sigma_t^2 + \alpha_t^2}^{1/2}}\int_{\abs{u}> R_2}\exp\parenthesis{-\frac{C_2}{2(C_2\sigma_t^2 + \alpha_t^2)}u^2} \dd u \\ & \leq \frac{2\sqrt{\pi}C_1(C_2\sigma_t^2 + \alpha_t^2)^{1/2}}{C_2\Gamma(3/2)R_2}\exp\parenthesis{-\frac{C_2 R_2^2}{2(C_2\sigma_t^2 + \alpha_t^2)}} \lesssim (R_2)^{-1}\exp\parenthesis{-C_2'R_2^2/2},
    \end{align*}
    where we invoke \cite[Lemma 16]{chen2023score} in the second inequality with $d = 1$ and we set $C_2' = \min_{t>0}\frac{C_2}{C_2\sigma_t^2 + \alpha_t^2}$. Similarly, Lemma \ref{lemma:score-bound} implies
    \begin{align*}
         & \int_{\abs{x - \alpha_t \bv^\top \bz}> R_2} \phi^h_{\bz}(t, x)\abs{\partial_x \log \phi^h_{\bz}(t, x)}^2 \dd x \\ & \lesssim \frac{C_1}{\sigma_t^4(C_2\sigma_t^2 + \alpha_t^2)^{1/2}}\int_{\abs{z} > R_2}(1 + z^2) \exp\parenthesis{-\frac{C_2}{2(C_2\sigma_t^2 + \alpha_t^2)}z^2}\dd z \\ & \leq \frac{C_1}{\sigma_t^4(C_2\sigma_t^2 + \alpha_t^2)^{1/2}}\frac{C_2\sigma_t^2 + \alpha_t^2}{C_2}(R_2^{-1} + R_2)\exp\parenthesis{-\frac{C_2 R_2^2}{2(C_2\sigma_t^2 + \alpha_t^2)}} \\ & \lesssim \frac{1}{\sigma_t^4}R_2 \exp\parenthesis{-\frac{C_2 R_2^2}{2(C_2\sigma_t^2 + \alpha_t^2)}} \lesssim\frac{1}{\sigma_t^4}R_2\exp\parenthesis{-C_2'R_2^2/2}.
    \end{align*}
\end{proof}

\begin{lemma}\label{lemma:small density}
    Suppose Assumption \ref{ass:truncated-subGaussian} holds. For any $R_6 \geq 1, \epsilon_{\rm low} > 0$, $\bz \in \R^{h - k}$  and $t>0$, we have
    \begin{align*}
        & \int_{\abs{x - \alpha_t \bv^\top \bz} \leq R_6}1\curly{\abs{\phi^h_{\bz}(t, x)} < \epsilon_{\rm low}} \phi^h_{\bz}(t, x) \dd x  \lesssim R_6 \epsilon_{\rm low}, \\ 
        & \int_{\abs{x - \alpha_t \bv^\top \bz} \leq R_6} 1\curly{\abs{\phi^h_{\bz}(t, x)} < \epsilon_{\rm low}} \phi^h_{\bz}(t, x)\abs{\partial_x \log \phi^h_{\bz}(t, x)}^2 \dd x  \lesssim \frac{\epsilon_{\rm low}}{\sigma_t^4}R_6^3.
    \end{align*}
\end{lemma}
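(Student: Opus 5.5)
The first bound is a level-set argument. The indicator already forces $\phi^h_{\bz}(t,x) < \epsilon_{\rm low}$, so the integrand $1\{\phi^h_{\bz}(t,x)<\epsilon_{\rm low}\}\,\phi^h_{\bz}(t,x)$ is pointwise at most $\epsilon_{\rm low}$. The domain $\{x : |x-\alpha_t \bv^\top\bz|\le R_6\}$ is an interval of length $2R_6$. Integrating the pointwise bound over this interval gives $\lesssim R_6\,\epsilon_{\rm low}$, with no dependence on $t$ or $\bz$.

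For the second bound I would again use the pointwise bound $\phi^h_{\bz}(t,x)<\epsilon_{\rm low}$ on the support of the indicator, and control the score by Lemma~\ref{lemma:score-bound}:
\[
|\partial_x\log\phi^h_{\bz}(t,x)|^2 \lesssim \sigma_t^{-4}\bigl(|x-\alpha_t\bv^\top\bz|+1\bigr)^2 .
\]
On the domain $|x-\alpha_t\bv^\top\bz|\le R_6$ with $R_6\ge 1$, the right-hand side is at most $4\sigma_t^{-4}R_6^2$. The integrand is therefore bounded by $\epsilon_{\rm low}\cdot 4R_6^2/\sigma_t^4$. Multiplying by the interval length $2R_6$ yields $\lesssim \epsilon_{\rm low}R_6^3/\sigma_t^4$, which is the claim.

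The only delicate point is whether the constant in Lemma~\ref{lemma:score-bound} is uniform in $t$, $x$ and $\bz$; the bound is only useful here if it is. The proof of that lemma chooses $\epsilon$ depending on $x$, and hence $R_2\asymp\sqrt{\log(1/\epsilon)}$, which grows like $|x-\alpha_t\bv^\top\bz|/\sigma_t$. This growth is exactly what produces the stated linear form, so I will take the lemma as stated, with a uniform constant.

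If one preferred not to rely on that, there is a fallback. Write the score through the posterior-mean representation \eqref{eq:score-ratio}, and use Cauchy--Schwarz to get
\[
\phi\,|\partial_x\log\phi|^2 \le \int \Bigl|\tfrac{\alpha_t y-x}{\sigma_t^2}\Bigr|^2 \phi^h_{\bz}(0,y)\,K_t(x,y)\,\dd y .
\]
This controls the integrand without a pointwise score bound. However, it loses the factor $\epsilon_{\rm low}$, so the lemma-based route above is the one I would follow.
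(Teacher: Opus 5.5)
Your proof is correct and is the paper's argument: bound the integrand pointwise using $\phi^h_{\bz}(t,x)<\epsilon_{\rm low}$ on the indicator's support, control the squared score for the second estimate by $\sigma_t^{-4}(|x-\alpha_t\bv^\top\bz|+1)^2\lesssim\sigma_t^{-4}R_6^2$ via Lemma~\ref{lemma:score-bound}, and multiply by the interval length $2R_6$. Your concern about uniformity is harmless: the paper also uses Lemma~\ref{lemma:score-bound} as a uniform bound, and that bound holds with constants depending only on $C_1,C_2$, independent of $x$, $\bz$ and $t$.
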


\begin{proof}
    Direct calculation implies
    \begin{align*}
        & \int_{\abs{x - \alpha_t \bv^\top \bz} \leq R_6}1\curly{\abs{\phi^h_{\bz}(t, x)} < \epsilon_{\rm low}} \phi^h_{\bz}(t, x) \dd x  \\ & \leq \int_{\abs{x - \alpha_t \bv^\top \bz} \leq R_6} \epsilon_{\rm low} \dd x = 2R_6\epsilon_{\rm low}.
    \end{align*}
    Similarly, Lemma \ref{lemma:score-bound} implies
    \begin{align*}
        & \int_{\abs{x - \alpha_t \bv^\top \bz} \leq R_6} 1\curly{\abs{\phi^h_{\bz}(t, x)} < \epsilon_{\rm low}} \phi^h_{\bz}(t, x)\abs{\partial_x \log \phi^h_{\bz}(t, x)}^2 \dd x \\ & \lesssim \frac{\epsilon_{\rm low}}{\sigma_t^4}\int_{\abs{x - \alpha_t \bv^\top \bz} \leq R_6}\parenthesis{\abs{x - \alpha_t \bv^\top \bz} + 1}^2 \dd x \lesssim \frac{\epsilon_{\rm low}}{\sigma_t^4}R_6^3.
    \end{align*}
\end{proof}

\end{document}